\theoremstyle{plain}
\newtheorem{theorem}{Theorem}
\newtheorem{proposition}[theorem]{Proposition}
\newtheorem{lemma}[theorem]{Lemma}
\newtheorem{corollary}[theorem]{Corollary}
\newtheorem{definition}[theorem]{Definition}
\newcommand{\onebf}{\ensuremath{\mathbf{1}}}
\newcommand{\Ibf}{\ensuremath{\mathbf{I}}}
\newcommand{\vbf}{\mathbf{v}}
\newcommand{\wbf}{\mathbf{w}}
\newcommand{\xbf}{\ensuremath{\mathbf{x}}}
\newcommand{\zerobf}{\ensuremath{{\mathbf 0}}}
\newcommand{\Dcal}{\ensuremath{\mathcal{D}}}
\newcommand{\Hcal}{\ensuremath{\mathcal{H}}}
\newcommand{\Lcal}{\ensuremath{\mathcal{L}}}
\newcommand{\Mcal}{\ensuremath{\mathcal{M}}}
\newcommand{\Ncal}{\ensuremath{\mathcal{N}}}
\newcommand{\Rcal}{\ensuremath{\mathcal{R}}}
\newcommand{\Scal}{\ensuremath{\mathcal{S}}}
\newcommand{\Tcal}{\ensuremath{\mathcal{T}}}
\newcommand{\Vcal}{\ensuremath{\mathcal{V}}}
\newcommand{\Xcal}{\ensuremath{\mathcal{X}}}
\newcommand{\Ycal}{\ensuremath{\mathcal{Y}}}
\newcommand{\Ebb}{\ensuremath{\mathbb{E}}}
\newcommand{\Nbb}{\ensuremath{\mathbb{N}}}
\newcommand{\Pbb}{\ensuremath{\mathbb{P}}}
\newcommand{\Rbb}{\ensuremath{\mathbb{R}}}
\newcommand{\R}{\Rbb}
\newcommand{\Rpe}{\R_{+}^{*}}
\newcommand{\LB}{\left[}
\newcommand{\RB}{\right]}
\newcommand{\LC}{\left\{}
\newcommand{\RC}{\right\}}
\newcommand{\RN}{\right\vert}
\newcommand{\LN}{\left\vert}
\newcommand{\LP}{\left(}
\newcommand{\RP}{\right)}
\newcommand{\ie}{{\it i.e.}\xspace}
\newcommand{\eg}{{\it e.g.}\xspace}
\newcommand{\resp}{{\it resp.}\xspace}
\newcommand{\wrt}{{\it w.r.t.}\xspace}
\newcommand{\iid}{{\it i.i.d.}\xspace}
\newcommand{\defeq}{\triangleq}
\newcommand{\indic}{\operatorname{I}}
\DeclareMathOperator*{\EE}{\Ebb}
\DeclareMathOperator*{\PP}{\Pbb}
\DeclareMathOperator*{\esssup}{\text{\rm esssup}}
\newcommand{\KL}{\operatorname{KL}}
\newcommand{\kl}{\operatorname{kl}}
\newcommand{\klmax}{\overline{\kl}}
\newcommand{\loss}{\ell}
\newcommand{\Risk}{\text{R}}
\newcommand{\RiskLoss}{\Risk^{\loss}}
\newcommand{\RiskLossp}{\Risk^{\loss'}}
\renewcommand{\P}{\pi}
\newcommand{\Q}{\rho}
\newcommand{\AQ}{\Q_{\Scal}}
\newcommand{\comp}{\operatorname{\mu}}
\newcommand{\uc}{{\tt u}}
\renewcommand{\aa}{{\tt a}}
\newcommand{\PhiUC}{\Phi_{\uc}}
\newcommand{\PhiA}{\Phi_{\aa}}
\newcommand{\PhiComp}{\Phi_{\comp}}
\newcommand{\vc}{{\tt vc}}
\newcommand{\rad}{{\tt rad}}
\newcommand{\distdistfro}{\distfro^{\boldsymbol{D}}}
\newcommand{\distdistltwo}{\distltwo^{\boldsymbol{D}}}
\newcommand{\distpathnorm}{\pathnorm^{\boldsymbol{D}}}
\newcommand{\distparamnorm}{\paramnorm^{\boldsymbol{D}}}
\newcommand{\distsumfro}{\sumfro^{\boldsymbol{D}}}
\newcommand{\distgap}{\gap^{\boldsymbol{D}}}
\newcommand{\distneural}{\neural^{\boldsymbol{D}}}
\newcommand{\riskdistfro}{\distfro^{\boldsymbol{R}}}
\newcommand{\riskdistltwo}{\distltwo^{\boldsymbol{R}}}
\newcommand{\riskpathnorm}{\pathnorm^{\boldsymbol{R}}}
\newcommand{\riskparamnorm}{\paramnorm^{\boldsymbol{R}}}
\newcommand{\risksumfro}{\sumfro^{\boldsymbol{R}}}
\newcommand{\riskgap}{\gap^{\boldsymbol{R}}}
\newcommand{\distfro}{\text{\sc DistFro}}
\newcommand{\distltwo}{\text{\sc DistL$_2$}}
\newcommand{\pathnorm}{\text{\sc PathNorm}}
\newcommand{\paramnorm}{\text{\sc ParNorm}}
\newcommand{\sumfro}{\text{\sc SumFro}}
\newcommand{\gap}{\text{\sc Gap}}
\newcommand{\neural}{\text{\sc Neural}}
\newif\ifnotappendix
\begin{document}

\runningtitle{PAC-Bayes Generalization Bounds with Complexity Measures}

\runningauthor{Paul Viallard, Rémi Emonet, Amaury Habrard, Emilie Morvant, Valentina Zantedeschi}

\twocolumn[

\aistatstitle{Leveraging PAC-Bayes Theory and Gibbs Distributions for Generalization Bounds with Complexity Measures}

\aistatsauthor{Paul Viallard$^{*,}$\footnotemark \And Rémi Emonet$^{\dag,\diamond,\S}$ \And Amaury Habrard$^{\dag,\diamond,\S}$ \AND Emilie Morvant$^\dag$ \And Valentina Zantedeschi$^{\ddag}$}
\vspace{10pt}

\aistatsaddress{
$^*$ Univ Rennes, Inria, CNRS IRISA - UMR 6074, F35000 Rennes, France\\
$^\dag$ Université Jean Monnet Saint-Etienne, CNRS, Institut d Optique Graduate School,\\ Inria$^{\diamond}$, Laboratoire Hubert Curien UMR 5516, F-42023, SAINT-ETIENNE, FRANCE\\
$\S$ Institut Universitaire de France (IUF)\\
$\ddag$ ServiceNow Research\\
}]

\footnotetext{This research began when the author was affiliated with Laboratoire Hubert Curien and finished at Inria Paris.}

\begin{abstract}
In statistical learning theory, a generalization bound usually involves a complexity measure imposed by the considered theoretical framework.
This limits the scope of such bounds, as other forms of capacity measures or regularizations are used in algorithms.
In this paper, we leverage the framework of disintegrated PAC-Bayes bounds to derive a \emph{general} generalization bound instantiable with arbitrary complexity measures.
One trick to prove such a result involves considering a commonly used family of distributions: the Gibbs distributions.
Our bound stands in probability jointly over the hypothesis and the learning sample, which allows the complexity to be adapted to the generalization gap as it can be customized to fit both the hypothesis class and the task.
\end{abstract}

\section{INTRODUCTION}

Statistical learning theory offers various theoretical frameworks to assess generalization by studying whether the empirical risk is representative of the true risk.
This is often done by bounding a deviation, called the generalization gap, between these risks.
An upper bound on this gap is usually a function of two main quantities: {\it (i)} the size of the training set, {\it (ii)} a complexity measure that captures how prone a model is to overfitting.
The higher the complexity, the higher the number of examples needed to obtain a tight bound on the gap.
One limitation is that existing frameworks are restricted to specific complexity measures, \eg, the VC-dimension~\citep{vapnik1971uniform} or the Rademacher complexity~\citep{bartlett2002rademacher} (known to be large~\citep{nagarajan2019uniform}). 

Recently, \citet[Proposition 1]{lee2020neural} related arbitrary complexity measures to their usage in generalization bounds.
Indeed, if we interpret this bound, it says that the generalization gap is upper-bounded by a user-defined complexity measure with high probability if the complexity measure is close to the generalization gap.
However, this bound is uncomputable since it relies on a measure of closeness between the measure and the gap.
Hence, to our knowledge, there is no computable generalization bound able to capture, by construction, an arbitrary complexity measure that can serve as a good proxy for the generalization gap.

In this paper, we tackle this drawback by leveraging the framework of disintegrated PAC-Bayesian bounds (\Cref{theorem:general-disintegrated-rivasplata}) to propose a novel general generalization bound instantiable with arbitrary complexity measures.
To do so, we incorporate a user-defined parametric function characterizing the complexity in a probability distribution over the hypothesis set, expressed as a Gibbs distribution (also called Boltzmann distribution). 
This trick allows us to derive guarantees in terms of probabilistic bounds that depend on a model sampled from this user-parametrized Gibbs distribution.
It is worth noticing that our result is general enough to obtain bounds on well-known complexity measures such as the VC dimension or the Rademacher complexity.
We believe that our result provides new theoretical foundations for understanding the generalization abilities of machine learning models and for performing model selection in practice.
As an illustration, we empirically show how some arbitrary complexity measures, studied by~\citet{jiang2020fantastic,dziugaite2020search,jiang2021methods}, can be integrated into our framework. 
Moreover, inspired by \citet{lee2020neural}, we investigate how our bounds behave when provided with a complexity measure learned via a neural network.

\textbf{Paper's Organization.}
\Cref{sec:setting} provides preliminary definitions and concepts. 
Then, \Cref{sec:contrib} presents our framework.
In \Cref{sec:experiments}, we provide a practical instantiation of our framework.

\section{PRELIMINARIES}
\label{sec:setting}

\subsection{Notations and Setting}
\label{sec:notations}

We stand in a supervised classification setting where $\Xcal$ is the input space and $\Ycal$ is the label space.
An example $(\xbf, y)\!\in\! \Xcal{\times}\Ycal$ is drawn from an unknown data distribution~$\Dcal$ on $\Xcal{\times}\Ycal$.
A learning sample $\Scal{=}\{(\xbf_i, y_i)\}_{i=1}^m$ contains $m$ examples drawn \iid from $\Dcal$; we denote the distribution of such a sample by $\Dcal^m$.
Let $\Hcal$ be a possibly infinite set of hypotheses $h\!:\!\Xcal{\to}\Ycal$ that return a label from $\Ycal$ given an input from $\Xcal$. 
Let $\Mcal(\Hcal)$ be the set of strictly positive probability densities on $\Hcal$ given a reference measure (\eg, the Lebesgue measure).
Given $\Scal$ and a loss function $\loss\!:\! \Hcal{\times}(\Xcal{\times}\Ycal)\to\R$, we aim to find $h \!\in\! \Hcal$ that minimizes the true risk $\RiskLoss_{\Dcal}(h) {=} \EE_{(\xbf,y)\sim\Dcal} \loss(h, (\xbf,y))$.
As $\Dcal$ is unknown, $\RiskLoss_{\Dcal} (h)$ is in practice estimated with its empirical counterpart: the empirical risk $\RiskLoss_{\Scal}(h) {=} \frac{1}{m}\sum_{i=1}^{m}\loss(h, (\xbf_i, y_i))$.
We denote the generalization gap by $\phi\!:\! \R^2{\to}\R$, which quantifies how much the empirical risk is representative of the true risk; it is usually defined by $\phi(\RiskLoss_{\Dcal}(h), \RiskLoss_{\Scal}(h))\!=\!\vert\RiskLoss_{\Dcal}(h){-}\RiskLoss_{\Scal}(h)\vert$.

In this paper, we leverage the PAC-Bayesian setting~\citep{shawetaylor1997pac,mcallester1998some} to bound the generalization gap with a function involving an arbitrary measure of complexity (see \citet{guedj2019primer,hellstrom2023generalization,alquier2024user} for recent surveys).
In PAC-Bayes, we assume an \textit{apriori} belief on the hypotheses in $\Hcal$ modeled by a prior distribution $\P\!\in\!\Mcal(\Hcal)$ on $\Hcal$.
Instead of looking for the best $h\!\in\!\Hcal$, we aim to learn, from $\Scal$ and $\P$, a \textit{posterior} distribution $\Q\!\in\!\Mcal(\Hcal)$ \mbox{on $\Hcal$} to assign higher probability to the best hypotheses \mbox{in $\Hcal$} (the support of $\Q$ is included in the one of $\P$). 
A PAC-Bayesian generalization bound provides an upper bound in expectation \mbox{over $\Q$}, meaning it bounds the generalization gap expressed as $\vert\EE_{h\sim\Q}[\RiskLoss_{\Dcal}(h){-}\RiskLoss_{\Scal}(h)]\vert$. 
The complexity depends here on the KL divergence between $\Q$ and $\P$ defined as $\KL(\Q\|\P)=\EE_{h\sim\Q}\!\ln\frac{\Q(h)}{\P(h)}$.
This complexity captures how much $\Q$ and $\P$ deviate in expectation over all the hypotheses.
To incorporate a custom complexity in a bound, we follow a slightly different framework called the disintegrated PAC-Bayesian bound (see below) in which the expectations on $\Q$ are \textit{disintegrated}: for a single $h$ sampled from $\Q$, it upper-bounds the gap $\phi(\RiskLoss_{\Dcal}(h), \RiskLoss_{\Scal}(h)){=}\vert\RiskLoss_{\Dcal}(h){-}\RiskLoss_{\Scal}(h)\vert$.

\subsection{Disintegrated PAC-Bayesian Bounds}
\label{sec:disintegrated}

We recall now the framework of disintegrated PAC-Bayesian bounds (introduced by \citet[Th~1.2.7]{catoni2007pac} and \citet[Prop~3.1]{blanchard2007occams}) on which our contribution is based.
As far as we know, despite their significance, they have received little attention in the literature and have only received renewed interest for deriving tight bounds in practice recently (\eg, \citet{rivasplata2020pac,hellstrom2020generalization,viallard2024general}).
Such bounds provide guarantees for a hypothesis $h$ sampled from a posterior distribution $\AQ$, where $\AQ$ depends on the learning sample $\Scal\!\sim\! \Dcal^m$. 
In fact, these bounds stand with high probability (at least $1{-}\delta$) over the random choice of learning sample $\Scal{\sim} \Dcal^m$ \emph{and} a hypothesis $h$.
This paper mainly focuses on the bound of \citet[][Th.1~{\it (i)}]{rivasplata2020pac} recalled below in \Cref{theorem:general-disintegrated-rivasplata}.

\begin{restatable}[General Disintegrated Bound of~\citet{rivasplata2020pac}]{theorem}{generaldisintegratedrivasplata}\label{theorem:general-disintegrated-rivasplata}
For any distribution $\Dcal$ on $\Xcal{\times}\Ycal$, for any hypothesis set $\Hcal$, for any distribution $\P\!\in\!\Mcal(\Hcal)$, for any measurable function $\varphi: \Hcal\times(\Xcal{\times}\Ycal)^m\to \R$, for any $\delta\!\in\!(0, 1]$, we have with probability at least $1-\delta$ over $\Scal\sim\Dcal^m$ and $h\sim\AQ$
\begin{align*}
\varphi(h,\Scal) \le \ln\frac{\AQ(h)}{\P(h)}\!+\!\ln\!\left[\frac{1}{\delta}\EE_{\Vcal\sim\Dcal^m}\EE_{g\sim\P}e^{\varphi(g,\Vcal)}\right],
\end{align*}
where $\AQ\in\Mcal(\Hcal)$ is a posterior distribution.
\end{restatable}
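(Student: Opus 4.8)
The plan is to control the nonnegative random variable obtained by exponentiating the quantity of interest, to compute its expectation through a change of measure that exploits the disintegrated structure of the law of $(\Scal,h)$, and then to conclude with Markov's inequality.

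First I would introduce the strictly positive random variable
\[
X \defeq \frac{\P(h)}{\AQ(h)}\, e^{\varphi(h,\Scal)},
\]
which is well defined because every element of $\Mcal(\Hcal)$ is a strictly positive density, so the Radon--Nikodym ratio $\P(h)/\AQ(h)$ neither vanishes nor diverges and no hypothesis is lost from the support. Since $\exp(-\ln\frac{\AQ(h)}{\P(h)}) = \P(h)/\AQ(h)$, the event we must bound is exactly $\{X > \frac{1}{\delta}\,\EE_{\Vcal}\EE_g e^{\varphi(g,\Vcal)}\}$ after exponentiating the claimed inequality. The law of the pair $(\Scal,h)$ is the disintegration in which $\Scal\sim\Dcal^m$ is drawn first and then $h\sim\AQ$ conditionally on $\Scal$, and I would keep this ordering explicit throughout.

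Then I would compute $\EE X$ under this joint law. Fixing $\Scal$, the inner expectation over $h\sim\AQ$ satisfies
\[
\EE_{h\sim\AQ}\frac{\P(h)}{\AQ(h)}\, e^{\varphi(h,\Scal)} = \int_{\Hcal}\AQ(h)\,\frac{\P(h)}{\AQ(h)}\, e^{\varphi(h,\Scal)}\,dh = \int_{\Hcal}\P(h)\, e^{\varphi(h,\Scal)}\,dh = \EE_{g\sim\P}\, e^{\varphi(g,\Scal)}.
\]
The decisive point is that the data-dependent posterior $\AQ$ cancels, leaving a quantity governed by the prior $\P$ alone. Taking the outer expectation over $\Scal\sim\Dcal^m$ and relabelling the dummy sample as $\Vcal$ yields $\EE X = \EE_{\Vcal\sim\Dcal^m}\EE_{g\sim\P} e^{\varphi(g,\Vcal)}$, the interchange of integrals being justified by Tonelli since the integrand is nonnegative. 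Finally I would apply Markov's inequality to $X$: with the threshold $\frac{1}{\delta}\,\EE X$ we get $\PP[X > \frac{1}{\delta}\,\EE X]\le\delta$, so with probability at least $1-\delta$ we have $X \le \frac{1}{\delta}\,\EE_{\Vcal}\EE_g e^{\varphi(g,\Vcal)}$; taking logarithms and moving the term $-\ln\frac{\AQ(h)}{\P(h)}$ to the right-hand side gives the statement.

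The step I expect to require the most care is the change-of-measure identity and its interaction with the order of integration: because $\AQ$ depends on $\Scal$, the cancellation of the posterior must be performed for each fixed $\Scal$ \emph{before} integrating over the sample, which is precisely why the exponential-moment term involves only the data-independent prior $\P$ even though $\AQ$ may depend on $\Scal$ in an arbitrary way. The strict positivity assumption on the densities is what guarantees that the ratio is everywhere finite and that this manipulation is valid.
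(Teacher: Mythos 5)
Your proof is correct and is the standard argument for this result: the paper recalls \Cref{theorem:general-disintegrated-rivasplata} from \citet{rivasplata2020pac} without reproving it, and your route (change of measure from $\AQ$ to $\P$ at fixed $\Scal$, Tonelli, then Markov's inequality applied to $\frac{\P(h)}{\AQ(h)}e^{\varphi(h,\Scal)}$ under the joint law of $(\Scal,h)$) is exactly the proof given in that reference. The only point worth making explicit is that the cancellation requires the support condition on $\AQ$ relative to $\P$, which you correctly tie to the strict positivity of densities in $\Mcal(\Hcal)$.
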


Remark that $\varphi$ can be any (measurable) function. 
However, it is usually defined as $\varphi(h,\Scal)=m\,\phi(\RiskLoss_{\Dcal}(h), \RiskLoss_{\Scal}(h))$, which is a deviation between the true risk $\RiskLoss_{\Dcal}(h)$ and the empirical risk $\RiskLoss_{\Scal}(h)$.
The bound depends on two terms: \mbox{{\it (a)} the} {\it disintegrated} KL divergence $\ln\tfrac{\AQ(h)}{\P(h)}$ defining how much $\P$ and $\AQ$ deviate for a single $h$, {\it (b)} the term $\ln\!\LB\frac{1}{\delta}\EE_{\Vcal}\EE_{g} \exp\LP\varphi(g,\Vcal)\RP\RB$ which is constant \wrt $h\!\in\!\Hcal$ and $\Scal\!\in\!(\Xcal{\times}\Ycal)^m$. 
Note that, to instantiate the bound with a given $\varphi$, the right-most term {\it (b)} is usually upper-bounded.
In fact, it is constant \wrt the hypothesis $g{\sim}\P$ and the learning sample $\Vcal{\sim}\Dcal^m$.
Then, to integrate the relevance of the prior belief and for the sake of simplicity, in the rest of the paper, we refer to as ``\textit{complexity measure}'' the right-hand side of the bound.
This is in slight contrast with the standard definition of complexity (\eg, in the case of the VC-dimension or the Rademacher complexity), where the term {\it (b)} is not included in the definition.

\looseness=-1
In the bound of \Cref{theorem:general-disintegrated-rivasplata}, the disintegrated KL divergence suffers from drawbacks: the KL complexity term is imposed by the framework and can be subject to high variance in practice~\citep{viallard2024general}.
Despite this shortcoming, it is important to notice that the disintegrated KL divergence has a clear advantage: it only depends on the sampled hypothesis $h\sim\AQ$ and the data sample $\Scal$, instead of the whole hypothesis class (as it is the case, for instance, with the Rényi divergence in the disintegrated PAC-Bayesian bounds of \citet{viallard2024general}, or with the bounds based on the VC-dimension, or the Rademacher complexity). 
This might imply a better correlation between the generalization gap and some complexity measures.
In the next section, we leverage the disintegrated KL divergence to derive our main contribution: a general bound that involves arbitrary complexity measures. 

\section{INTEGRATING MEASURES IN GENERALIZATION BOUNDS}
\label{sec:contrib} 

In \cref{sec:def,sec:gibbs-optim}, we give intuitions about our contribution and recall notions about the Gibbs distribution.
Second, we formalize our result in \Cref{sec:result}.

\subsection{The Framework}
\label{sec:def}
The idea to introduce our notion of complexity measure is to parametrize the complexity with an additional ``customizable'' function $\comp\!:\! \Hcal{\times}(\Xcal{\times}\Ycal)^m {\to} \R$ that we call \textit{parametric function}. 
Thanks to the function $\comp$, we define the randomized complexity measure $\PhiComp^{r}(h{,}\Scal{,}\delta)$ as a real-valued function parameterized by $\comp$ and an external randomness $r\!\sim\!\Rcal$ which takes as argument a hypothesis $h\!\in\!\Hcal$, a learning sample $\Scal{\in}(\Xcal{\times}\Ycal)^m$, \mbox{and $\delta$}.
As we will see in \Cref{sec:result}, the bound we derive in \Cref{theorem:disintegrated-comp} depends on the complexity measure $\PhiComp^{r}(h, \Scal, \delta)$ and takes the following form.

\begin{figure}
\includegraphics[width=1.0\linewidth]{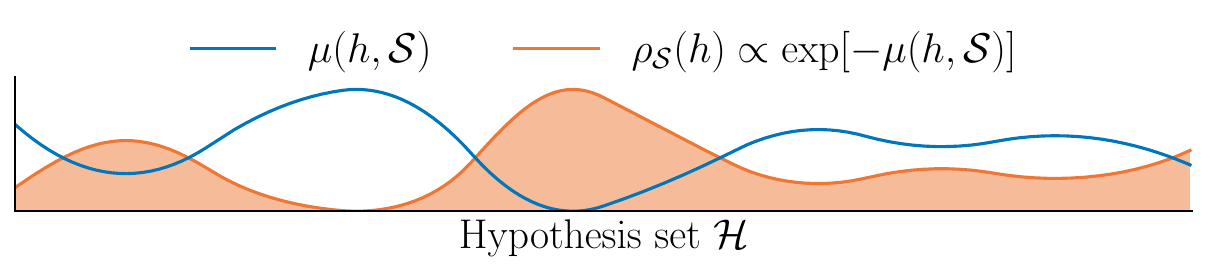}
\caption{\label{fig:gibbs}
\looseness=-1
 Illustration of the behavior of the Gibbs distribution $\AQ$ with a parametric function $\comp$. 
 The x-axis represents a (continuous) hypothesis set, and the y-axis the values of $\AQ$ and $\comp$.
 The distribution $\AQ$ gives a higher probability to the hypotheses with a low $\comp$ value.
}
\end{figure}

\begin{definition}
\label{def:comp-bound}\looseness=-1
Let $\loss\!:\! \Hcal{\times}(\Xcal{\times}\Ycal){\to}\R$ be a loss function, $\phi\!:\! \R^2{\to}\R$ be the generalization gap, and $\comp\!:\! \Hcal{\times}(\Xcal{\times}\Ycal)^m {\to} \R$ be a parametric function.
A generalization bound with a complexity measure is defined such that if for any distribution $\Dcal$ on $\Xcal{\times}\Ycal$, for any distribution $\Rcal$ representing the randomness, for any hypothesis set $\Hcal$, there exists a randomized real-valued function $\PhiComp^{r}: \Hcal{\times}(\Xcal{\times}\Ycal)^m{\times} (0,1] {\to} \R$ such that for any  $\delta\!\in\!(0, 1]$, we have
\begin{align*}
    \PP_{r\sim\Rcal,\Scal\sim\Dcal^m,h\sim\AQ}\!\Big[ \phi(\RiskLoss_{\Dcal}(h){,}\RiskLoss_{\Scal}(h)) \le \PhiComp^{r}(h, \Scal, \delta) \Big]{\ge}1{-}\delta,
\end{align*}
where $\AQ\in\Mcal(\Hcal)$ is a posterior distribution. 
\end{definition}
\looseness=-1
The main trick to obtain a bound that involves a parametrizable complexity measure is to consider a posterior distribution $\AQ$ that depends on $\comp$.
To do so, we propose to set $\AQ$ as the Gibbs distribution defined as
\begin{align}
\label{eq:gibbs-distribution}
   \AQ(h) \propto \exp\LB-\comp(h, \Scal)\RB.
\end{align}
This formulation might look restrictive, but it can represent any probability density function provided that a relevant complexity measure is selected. 
For instance, let $\AQ'$ be a distribution on $\Hcal$, \eg, a Gaussian or a Laplace distribution, by setting $\comp(h, \Scal) \!=\! - \ln\AQ'(h)$ we can retrieve the distribution $\AQ'$.
Moreover, this Gibbs distribution $\AQ$ is interesting from an optimization viewpoint: given a fixed learning \mbox{sample $\Scal$}, a hypothesis $h$ is more likely to be sampled from it when $\comp(h, \Scal)$ is low (see \Cref{fig:gibbs} for an illustration).
In fact, the function $h\mapsto\comp(h, \Scal)$ can be seen as an objective function.
For instance, to minimize the true risk $\RiskLoss_{\Dcal}(h)$, one can ideally set $\comp(h,\Scal) \!=\! \alpha\RiskLoss_{\Dcal}(h)$ that is associated with a Gibbs distribution which samples hypotheses with small true risks and concentrates around the small risks when $\alpha\!\in\!\Rpe$ increases.
However, since the true risk is unknown, it must be replaced with a computable function $\comp$.
For instance, $\comp$ can be the empirical risk such as $\comp(h,\Scal) \!=\! \alpha\RiskLoss_{\Scal}(h)$.  

\subsection{Gibbs Distribution and Optimization}
\label{sec:gibbs-optim}

Given a differentiable parametric function defined by $\comp(h,\Scal)\!=\!\alpha\nu(h,\Scal)$ (with $\alpha$ a concentration parameter), its associated Gibbs distribution can be related to the Stochastic Gradient Langevin Dynamics algorithm~\citep[SGLD,][]{welling2011bayesian} that learns the hypothesis $h\!\in\!\Hcal$ by running iterations of the form 
\begin{align}
    h_t \longleftarrow h_{t-1} - \eta\nabla\nu(h_t,\Scal) + \sqrt{\frac{2\eta}{\alpha}}\epsilon_t, \label{eq:sgld}
\end{align}
where $\epsilon_t\!\sim\!\Ncal(\zerobf, \Ibf_D)$, and $h_t$ is the hypothesis learned at iteration $t\!\in\!\Nbb$, and $\eta$ is the learning rate, and $\alpha$ is the concentration parameter for the Gibbs distribution.
When $\alpha$ increases, the noise $\epsilon_t$ has less influence on the next iterate obtained from SGLD as $\sqrt{\nicefrac{2\eta}{\alpha}}\epsilon_t$ decreases, and hence, minimizes better the function $\nu$.
Moreover, when the learning rate $\eta$ tends to zero, SGLD  becomes a continuous-time process called Langevin diffusion, defined as the stochastic differential equation in \Cref{eq:sgld-2}.
Indeed, \Cref{eq:sgld} can be seen as the Euler-Maruyama discretization \citep[see,][]{raginsky2017nonconvex} of \Cref{eq:sgld-2} defined for $t\ge 0$ as
\begin{align}
    dh_t = -\nabla \nu(h_t, \Scal)dt + \sqrt{\frac{2}{\alpha}}B_t,\label{eq:sgld-2}
\end{align}
where $B_t$ is the Brownian motion.
Under some mild assumptions on the function $\nu$, \citet{chiang1987diffusion} show that the invariant distribution of the Langevin diffusion is the Gibbs distribution $\AQ$ with $\comp(h,\Scal)=\alpha\nu(h,\Scal)$.

\subsection{Bounds with Complexity Measures}
\label{sec:result}

We now introduce our main results, \ie, generalization bounds with user-defined complexity measures. 
In \Cref{sec:result-general}, we present a general theorem that fulfills \Cref{def:comp-bound}.
We specialize our result to uniform priors in \Cref{sec:result-practical-uniform} and informed priors in \cref{sec:result-practical-informed}.

\subsubsection{General Generalization Bound}
\label{sec:result-general}

We state below our theorem that introduces a bound on the generalization gap involving the parametric function $\comp$, which stands for hypotheses sampled from the posterior distribution $\AQ(h) \!\propto\! \exp\LB-\!\comp(h, \Scal)\RB$.
Note that our bound is ``general,'' meaning the generalization gap $\phi$ has to be further upper-bounded.

\begin{restatable}{theorem}{theoremboundcomplexitymeasure}\label{theorem:disintegrated-comp}
Let $\loss: \Hcal{\times}(\Xcal{\times}\Ycal){\to}\R$ be a loss function and $\phi\!:\! \R^2{\to}\R$ be a generalization gap.
For any $\Dcal$ on $\Xcal{\times}\Ycal$, for any hypothesis set $\Hcal$, for any prior distribution $\P\in\Mcal(\Hcal)$ on $\Hcal$, for any $\comp\!:\! \Hcal{\times}(\Xcal{\times}\Ycal)^m{\to}\R$, for any $\delta\!\in\!(0, 1]$, we have with probability at least $1-\delta$ over $h'\sim\P$, $\Scal\sim\Dcal^m$, and $h\sim\AQ$
\ifnotappendix%
\begin{align*}
    \phi(\RiskLoss_{\Dcal}(h), \RiskLoss_{\Scal}(h)) &\le \comp(h'\!,\Scal) - \comp(h,\Scal) +\ln\frac{\P(h')}{\P(h)}\\[-1.5mm]
    &+ \ln\!\left[\frac{4}{\delta^2} \EE_{\Vcal\!{\sim}\Dcal^m}\EE_{g{\sim}\P} e^{\phi(\RiskLoss_{\Dcal}(g),\RiskLoss_{\Vcal}(g))}\right]\\
    &\defeq \PhiComp^{h'}(h,\Scal,\delta),
\end{align*}
\else%
\begin{align*}
    \phi(\RiskLoss_{\Dcal}(h), \RiskLoss_{\Scal}(h)) &\le \comp(h'\!,\Scal) - \comp(h,\Scal) +\ln\frac{\P(h')}{\P(h)} + \ln\!\left[\frac{4}{\delta^2} \EE_{\Vcal\!{\sim}\Dcal^m}\EE_{g{\sim}\P} e^{\phi(\RiskLoss_{\Dcal}(g),\RiskLoss_{\Vcal}(g))}\right] \defeq \PhiComp^{h'}(h,\Scal,\delta),
\end{align*}
\fi
where $\AQ$ is the Gibbs distribution as in \Cref{eq:gibbs-distribution}.
\end{restatable}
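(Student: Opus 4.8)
The plan is to apply the general disintegrated bound (\Cref{theorem:general-disintegrated-rivasplata}) and then to control the intractable normalizing constant of the Gibbs distribution by means of an auxiliary prior sample $h'$. First I would instantiate \Cref{theorem:general-disintegrated-rivasplata} with the measurable function $\varphi(h, \Scal) \defeq \phi(\RiskLoss_{\Dcal}(h), \RiskLoss_{\Scal}(h))$ and confidence level $\nicefrac{\delta}{2}$. This yields, with probability at least $1 - \nicefrac{\delta}{2}$ over $\Scal\sim\Dcal^m$ and $h\sim\AQ$,
\[
\phi(\RiskLoss_{\Dcal}(h), \RiskLoss_{\Scal}(h)) \le \ln\frac{\AQ(h)}{\P(h)} + \ln\!\LB\frac{2}{\delta}\, \EE_{\Vcal\sim\Dcal^m} \EE_{g\sim\P}\, e^{\phi(\RiskLoss_{\Dcal}(g), \RiskLoss_{\Vcal}(g))}\RB.
\]

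Next, writing the Gibbs distribution explicitly as $\AQ(h) = \exp\LB-\comp(h, \Scal)\RB / Z_{\Scal}$, with normalizing constant $Z_{\Scal} \defeq \int_{\Hcal} \exp\LB-\comp(g, \Scal)\RB\,dg$, the disintegrated KL term expands exactly to $\ln\tfrac{\AQ(h)}{\P(h)} = -\comp(h, \Scal) - \ln Z_{\Scal} - \ln \P(h)$. Comparing with the target inequality, every term already matches except $-\ln Z_{\Scal}$, which I must upper-bound by $\comp(h', \Scal) + \ln \P(h') + \ln\tfrac{2}{\delta}$.

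The key observation---and the main obstacle---is that $Z_{\Scal}$ is generally intractable and cannot be bounded directly, so I would control it probabilistically through a draw $h'\sim\P$ from the prior, identified with the external randomness $r$ of \Cref{def:comp-bound}. For every fixed $\Scal$, the change-of-measure identity $\EE_{h'\sim\P}\LB\nicefrac{\AQ(h')}{\P(h')}\RB = \int_{\Hcal} \AQ(h')\,dh' = 1$ holds; hence Markov's inequality gives $\PP_{\Scal, h'}\LB \nicefrac{\AQ(h')}{\P(h')} \ge \nicefrac{2}{\delta}\RB \le \nicefrac{\delta}{2}$. Thus, with probability at least $1 - \nicefrac{\delta}{2}$, we have $\ln\tfrac{\AQ(h')}{\P(h')} \le \ln\tfrac{2}{\delta}$; expanding the left-hand side exactly as above yields $-\comp(h', \Scal) - \ln Z_{\Scal} - \ln \P(h') \le \ln\tfrac{2}{\delta}$, which rearranges to the required control $-\ln Z_{\Scal} \le \comp(h', \Scal) + \ln \P(h') + \ln\tfrac{2}{\delta}$.

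Finally, I would combine the two events by a union bound: since each fails with probability at most $\nicefrac{\delta}{2}$, both hold simultaneously with probability at least $1 - \delta$ over the joint draw of $h'\sim\P$, $\Scal\sim\Dcal^m$, and $h\sim\AQ$. Substituting the bound on $-\ln Z_{\Scal}$ into the disintegrated bound and merging the two $\ln\tfrac{2}{\delta}$ contributions into $\ln\tfrac{4}{\delta^2}$ recovers $\PhiComp^{h'}(h, \Scal, \delta)$ exactly. The only delicate point is the bookkeeping of the probability space: verifying that both events live on the common randomness $(\Scal, h, h')$ and that the Markov step is valid jointly over $\Scal$ and $h'$, which holds because the change-of-measure identity is true for every fixed $\Scal$.
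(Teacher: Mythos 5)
Your proposal is correct and follows essentially the same route as the paper's proof: instantiate \Cref{theorem:general-disintegrated-rivasplata} at level $\nicefrac{\delta}{2}$, expand the disintegrated KL of the Gibbs posterior, control the normalization constant by Markov's inequality over $h'\sim\P$ (your event $\nicefrac{\AQ(h')}{\P(h')}\le\nicefrac{2}{\delta}$ is, after multiplying by the constant $\P(h)Z_{\Scal}$, literally the same event the paper uses), and conclude with a union bound. The only difference is presentational, not mathematical.
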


The bound $\PhiComp^{h'}(h{,}\Scal{,}\delta)$ of \Cref{theorem:disintegrated-comp} depends on three terms: {\it (i)} the difference $\comp(h'{,}\Scal){-}\comp(h{,}\Scal)$, \mbox{{\it (ii)} the} log ratio $\ln({\P(h')}/{\P(h)})$, {\it (iii)} a constant term $\ln[\frac{4}{\delta^2}\EE_{\Vcal\!{\sim}\Dcal^m}\EE_{g{\sim}\P} \exp[\phi(\RiskLoss_{\Dcal}(g),\RiskLoss_{\Vcal}(g))]]$.
Compared to \Cref{theorem:general-disintegrated-rivasplata}, we essentially upper-bound the disintegrated KL divergence $\ln\frac{\AQ(h)}{\P(h)}$ by the difference $\comp(h'{,}\Scal){-}\comp(h{,}\Scal)$ and the log ratio $\ln({\P(h')}/{\P(h)})$. 
The advantage of these two terms is that they are easily computable, as long as we can compute $\comp(h',\Scal)$, $\comp(h,\Scal)$ and the density of $\P$ (up to its normalization constant).
This is in contrast with the the result of \citet{lee2020neural}, that is essentially a bound that holds for all $\epsilon>0$ and of the form,
\begin{align*}
\PP_{\Scal\sim\Dcal^m,h\sim\AQ}\Big[ |\Risk_{\Dcal}(h){-}\Risk_{\Scal}(h)| \le \comp(h, \Scal){+}\epsilon \Big]\ge 1{-}\delta'(\epsilon),
\end{align*}
where $\delta'(\epsilon)$ depends on $n$ learning samples $\Scal_1,\dots,\Scal_n$, on $n$ hypotheses $h_1\sim \Q_{\Scal_1},\dots,h_n\sim\Q_{\Scal_n}$, and on the unknown distribution $\Dcal$.
This result has no restriction on the form of the distribution $\AQ$, however, the dependence on $\Dcal$ makes the term $\delta'(\epsilon)$ not computable (in contrast to our bound); see \Cref{sec:comparison-lee} for more details.
Note also that the term {\it (iii)} is usually negligible compared to {\it (i)} and {\it (ii)}, and it is upper-bounded when the generalization gap $\phi$ is instantiated.
To get a bound that converges when $m$ increases, it is sufficient to set $\phi$ as a function of $m$ as it is done further.
The tightness of the term {\it (ii)} depends on the instantiation of $\P$; we propose two types of instantiation in \Cref{sec:result-practical-uniform,sec:result-practical-informed}. 
Lastly, the term {\it (i)} depends on the choice of $\comp$ which has a big influence on the sampled hypothesis $h\!\sim\!\AQ$ and so on the gap $\phi(\RiskLoss_{\Dcal}(h), \RiskLoss_{\Scal}(h))$. 
For instance, when $\comp(h,\Scal){=}0$, the difference $\comp(h', \Scal){-}\comp(h, \Scal){=}0$, but, in this case, the posterior distribution $\AQ$ is uniform which does not permit to sample a hypothesis minimizing the true risk $\RiskLoss_{\Dcal}(h)$.
There is hence a trade-off to find between minimizing this difference and sampling a hypothesis minimizing the gap $\phi(\RiskLoss_{\Dcal}(h), \RiskLoss_{\Scal}(h))$ and the true risk $\RiskLoss_{\Dcal}(h)$.
In \Cref{sec:experiments}, we see how to instantiate the parametric function $\mu$.
Note that, when instantiated correctly, it also allows to get uniform-convergence-based and algorithm-dependent bounds; see \Cref{sec:comparison-uc-algo}.

\subsubsection{Practical Bound with Uniform Priors}
\label{sec:result-practical-uniform}

The remaining challenge to get a practical bound is to upper-bound  $\ln[\frac{4}{\delta^2}\EE_{\Vcal\!{\sim}\Dcal^m}\EE_{g{\sim}\P}\exp[\phi(\RiskLoss_{\Dcal}(g), \RiskLoss_{\Vcal}(g))]]$ and $\ln({\P(h')}/{\P(h)})$.
As an illustration, we restrict ourselves in the rest of the paper to the case where the loss is bounded, \ie, we consider a loss function $\loss \!:\! \Hcal{\times}(\Xcal{\times}\Ycal) \!\to\! [0,1]$. 
Under this assumption, we provide in the next corollary an instantiation of \Cref{theorem:disintegrated-comp} for the generalization gap $\phi(\RiskLoss_{\Dcal}(h), \RiskLoss_{\Scal}(h)){=}m\kl[\RiskLoss_{\Scal}(h)\|\RiskLoss_{\Dcal}(h)]$ where $\kl(q\|p) \defeq q\ln\frac{q}{p}+(1{-}q)\ln\frac{1{-}q}{1{-}p}$ for $p\in(0,1)$ and $q\in[0,1]$ and with a uniform distribution $\P$ on a bounded set $\Hcal$.
\begin{restatable}{corollary}{corollarydisintegratedcompunif}\label{corollary:disintegrated-comp-unif}
\looseness=-1
For any $\Dcal$ on $\Xcal{\times}\Ycal$, for any bounded hypothesis set $\Hcal$, given the uniform prior $\P$ on $\Hcal$, for any loss $\loss : \Hcal\times(\Xcal{\times}\Ycal) \to [0,1]$, for any $\comp\!:\! \Hcal{\times}(\Xcal{\times}\Ycal)^m{\to}\R$, for any $\delta\!\in\!(0,1]$, with probability at least $1{-}\delta$ over $\Scal{\sim}\Dcal^m$, $h'{\sim}\P$, $h{\sim}\AQ$ we have
\ifnotappendix%
\begin{align}
\kl\!\LB\RiskLoss_{\Scal}(h)\|\RiskLoss_{\Dcal}(h)\RB\le\frac{\comp(h'\!,\Scal){-}\!\comp(h{,}\Scal){+}\ln\tfrac{8\sqrt{m}}{\delta^2}}{m},\label{eq:disintegrated-comp-seeger}
\end{align}
\else%
\begin{align*}
\kl\!\LB\RiskLoss_{\Scal}(h)\|\RiskLoss_{\Dcal}(h)\RB\le \frac{1}{m}\LB\comp(h'\!,\Scal){-}\!\comp(h{,}\Scal){+}\ln\tfrac{8\sqrt{m}}{\delta^2}\RB,
\end{align*}
\fi
\looseness=-1
with $\AQ$ defined in \Cref{eq:gibbs-distribution}.
\end{restatable}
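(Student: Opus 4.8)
The plan is to specialize \Cref{theorem:disintegrated-comp} to the generalization gap $\phi(\RiskLoss_{\Dcal}(h),\RiskLoss_{\Scal}(h)) = m\,\kl[\RiskLoss_{\Scal}(h)\|\RiskLoss_{\Dcal}(h)]$ and then reduce the two remaining prior- and data-dependent terms of that bound to the single logarithmic term in the statement, using the uniform-prior assumption and a classical control of the moment-generating function of the empirical $\kl$ deviation.

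First I would substitute this choice of $\phi$ directly into \Cref{theorem:disintegrated-comp}. This yields, with probability at least $1-\delta$ over $h'\sim\P$, $\Scal\sim\Dcal^m$ and $h\sim\AQ$,
\begin{align*}
m\,\kl[\RiskLoss_{\Scal}(h)\|\RiskLoss_{\Dcal}(h)] \le{}& \comp(h',\Scal) - \comp(h,\Scal) + \ln\frac{\P(h')}{\P(h)}\\
&+ \ln\!\LB\frac{4}{\delta^2}\EE_{\Vcal\sim\Dcal^m}\EE_{g\sim\P}e^{m\,\kl[\RiskLoss_{\Vcal}(g)\|\RiskLoss_{\Dcal}(g)]}\RB.
\end{align*}
Since $\P$ is the uniform distribution on the bounded set $\Hcal$, its density is constant on $\Hcal$, so $\ln(\P(h')/\P(h)) = 0$ and this term simply disappears.

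The crux is the expectation term. I would first apply Fubini's theorem to exchange the two expectations and isolate, for each fixed $g\in\Hcal$, the quantity $\EE_{\Vcal\sim\Dcal^m}e^{m\,\kl[\RiskLoss_{\Vcal}(g)\|\RiskLoss_{\Dcal}(g)]}$. Because the loss takes values in $[0,1]$, for fixed $g$ the empirical risk $\RiskLoss_{\Vcal}(g)$ is an average of $m$ \iid $[0,1]$-valued random variables with mean $\RiskLoss_{\Dcal}(g)$, so Maurer's inequality yields the uniform bound $\EE_{\Vcal}e^{m\,\kl[\RiskLoss_{\Vcal}(g)\|\RiskLoss_{\Dcal}(g)]}\le 2\sqrt{m}$. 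Taking the outer expectation over $g\sim\P$ preserves this bound, and the logarithmic term collapses to $\ln[\tfrac{4}{\delta^2}\cdot 2\sqrt{m}] = \ln\tfrac{8\sqrt{m}}{\delta^2}$.

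Combining these two simplifications and dividing both sides by $m$ produces exactly the stated inequality. I expect the only non-routine step to be the control of the moment-generating function of the binary $\kl$ deviation: this is precisely where the boundedness of the loss is needed, and it rests on Maurer's refinement of Hoeffding's inequality rather than on any of the PAC-Bayes machinery developed earlier in the paper.
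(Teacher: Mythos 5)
Your proof is correct and follows essentially the same route as the paper: the paper first proves an intermediate corollary (instantiating \Cref{theorem:disintegrated-comp} with $\phi = m\,\kl[\cdot\|\cdot]$ and bounding the exponential moment via Fubini and \citet{maurer2004note} to get $2\sqrt{m}$), and then specializes to the uniform prior so that $\ln(\P(h')/\P(h))=0$. You simply merge these two steps into one argument, with no substantive difference.
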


Interestingly, \Cref{corollary:disintegrated-comp-unif} gives a computable bound on $\kl[\RiskLoss_{\Scal}(h)\|\RiskLoss_{\Dcal}(h)]$.
From \Cref{eq:disintegrated-comp-seeger}, we obtain the following generalization bounds on the true risk $\RiskLoss_{\Dcal}(h)$:
\begin{align}
\RiskLoss_{\Dcal}(h)\le\klmax\LB\RiskLoss_{\Scal}(h) \;\middle|\; \frac{\comp(h'\!,\Scal){-}\comp(h{,}\Scal){+}\ln\tfrac{8\sqrt{m}}{\delta^2}}{m}\,\RB\!,\label{eq:disintegrated-comp-seeger-risk}
\end{align}
with $\klmax[q | \tau]{=}\max\{ p \!\in\! (0,1) \;|\; \kl(q\|p) {\le} \tau\}$.  
We use these bounds in \Cref{sec:experiments} to illustrate the generalization guarantees for different parametric \mbox{functions $\comp$}.
For some trivial cases, the convergence rate can be arbitrary, \eg, when $\comp(h\!, \Scal)\!=\!m\RiskLoss_{\Scal}(h)$.
For example, for a large empirical risk $\RiskLoss_{\Scal}(h')$ (which is common when $h'$ is sampled from a uniform prior on $\Hcal$), the right-hand side of \Cref{eq:disintegrated-comp-seeger} simplifies to $\PhiComp^{h'}(h{,}\Scal{,}\delta)\!=\![(\RiskLoss_{\Scal}(h'){-}\RiskLoss_{\Scal}(h)){+}\frac{1}{m}\ln({2\sqrt{m}}/{\delta})]_+$ and is large, for all $m$.
In order for the bound to be meaningful, we have to set $\comp$ such that the distribution $\AQ$ allows to sample $h$ minimizing the empirical risk $\RiskLoss_{\Scal}(h)$ and the generalization gap, and we want the complexity measure $\PhiComp^{h'}(h, \Scal, \delta)$ to be tight (with $h'\!\sim\!\P$).

\subsubsection{Practical Bound with Informed Priors}
\label{sec:result-practical-informed}

\looseness=-1
While it is common to consider uninformed priors when we have no \textit{apriori} belief, informative priors can be necessary and useful to get better results.
For that purpose, one solution is to consider distribution-dependent priors, heavily used in PAC-Bayes~\citep[see \eg,][]{parradohernandez2012pac,dziugaite2021role,perezortiz2021tighter}.
We use a strategy similar to that for the posterior $\AQ$ by defining the prior $\P$ as follows
\begin{align}
\label{eq:gibbs-distribution-prior}
   \P(h) \propto \exp\LB-\omega(h)\RB,
\end{align}
\looseness=-1
where $\omega\!:\! \Hcal {\to} \R$ can depend on the distribution $\Dcal$.
Hence, the prior can depend on an additional learning sample $\Scal'\!\in\!(\Xcal{\times}\Ycal)^{m'}$ sampled from $\Dcal$.
We prove the following corollary with the prior of \Cref{eq:gibbs-distribution-prior}.

\begin{restatable}{corollary}{corollarydisintegratedcompdata}\label{corollary:disintegrated-comp-data}
For any $\Dcal$ on $\Xcal{\times}\Ycal$, for any hypothesis set $\Hcal$, for any loss $\loss : \Hcal\times(\Xcal{\times}\Ycal) \to [0,1]$, for any $\comp\!:\! \Hcal{\times}(\Xcal{\times}\Ycal)^m{\to}\R$, for any $\omega\!:\! \Hcal{\to}\R$, for any $\delta\!\in\!(0,1]$, with probability at least $1{-}\delta$ over $\Scal{\sim}\Dcal^m$, $h'{\sim}\P$, $h{\sim}\AQ$ we have
\allowdisplaybreaks
\ifnotappendix%
\begin{align}
    \kl\!\LB\RiskLoss_{\Scal}(h)\|\RiskLoss_{\Dcal}(h)\RB\le
    &\frac{1}{m}\Big[[\comp(h'\!,\Scal){-}\omega(h')]\nonumber\\
    &-[\comp(h{,}\Scal){-}\omega(h)]+\ln\tfrac{8\sqrt{m}}{\delta^2} \Big],\label{eq:disintegrated-comp-seeger-data}
\end{align}
\else%
\begin{align}
    \kl\!\LB\RiskLoss_{\Scal}(h)\|\RiskLoss_{\Dcal}(h)\RB\le \frac{1}{m}\Big[[\comp(h'\!,\Scal){-}\omega(h')]-[\comp(h{,}\Scal){-}\omega(h)]+\ln\tfrac{8\sqrt{m}}{\delta^2} \Big],
\end{align}
\fi
with  $\AQ$ and $\P$ resp. defined in \Cref{eq:gibbs-distribution,eq:gibbs-distribution-prior}.
\end{restatable}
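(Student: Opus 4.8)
The plan is to mirror the proof of \Cref{corollary:disintegrated-comp-unif}, instantiating \Cref{theorem:disintegrated-comp} with the gap $\phi(\RiskLoss_{\Dcal}(h),\RiskLoss_{\Scal}(h)) = m\,\kl[\RiskLoss_{\Scal}(h)\|\RiskLoss_{\Dcal}(h)]$ and then simplifying its three terms one by one. The only structural difference from the uniform case is that the prior log-ratio $\ln(\P(h')/\P(h))$ no longer vanishes, so the substance of the argument lies in rewriting that term using the Gibbs form of the informed prior in \Cref{eq:gibbs-distribution-prior}, while the remaining two terms are treated exactly as before.

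First I would plug the informed prior $\P(h)\propto\exp[-\omega(h)]$ into the ratio. Writing $\P(h)=\exp[-\omega(h)]/Z_{\P}$ with normalization constant $Z_{\P}=\int_{\Hcal}\exp[-\omega(g)]\,dg$ independent of $h$, the constant $Z_{\P}$ cancels in the quotient, giving $\ln(\P(h')/\P(h)) = \omega(h)-\omega(h')$. Adding this to the difference $\comp(h',\Scal)-\comp(h,\Scal)$ supplied by \Cref{theorem:disintegrated-comp} and regrouping yields precisely $[\comp(h',\Scal)-\omega(h')]-[\comp(h,\Scal)-\omega(h)]$, which is the leading term of the claimed bound in \Cref{eq:disintegrated-comp-seeger-data}.

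Next I would control the constant term $\ln[\tfrac{4}{\delta^2}\EE_{\Vcal\sim\Dcal^m}\EE_{g\sim\P}e^{m\kl[\RiskLoss_{\Vcal}(g)\|\RiskLoss_{\Dcal}(g)]}]$. Exchanging the two expectations and applying, for each fixed $g$, the same exponential-moment bound $\EE_{\Vcal\sim\Dcal^m}e^{m\kl[\RiskLoss_{\Vcal}(g)\|\RiskLoss_{\Dcal}(g)]}\le 2\sqrt{m}$ used in \Cref{corollary:disintegrated-comp-unif} (Maurer's inequality, valid since $\loss\in[0,1]$) produces the factor $2\sqrt{m}$, so this term is at most $\ln(8\sqrt{m}/\delta^2)$, identical to the uniform case. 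Dividing the resulting inequality on $m\,\kl[\RiskLoss_{\Scal}(h)\|\RiskLoss_{\Dcal}(h)]$ through by $m$ then gives \Cref{eq:disintegrated-comp-seeger-data}.

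The step requiring the most care is this Fubini-plus-Maurer argument, because the prior is now \emph{informed}: $\omega$, and hence $\P$, is allowed to depend on $\Dcal$ or on an auxiliary sample $\Scal'$. The point I would verify explicitly is that $\P$ remains independent of the evaluation sample $\Vcal$ appearing inside the exponential moment—since $\Scal'$ is drawn independently of $\Vcal$—so that swapping $\EE_{g\sim\P}$ and $\EE_{\Vcal\sim\Dcal^m}$ is legitimate and Maurer's bound applies unchanged for each fixed $g$. Once this independence is confirmed, everything else is the routine substitution and rescaling already carried out for the uniform prior.
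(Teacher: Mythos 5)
Your proposal is correct and follows essentially the same route as the paper: instantiate \Cref{theorem:disintegrated-comp} with $\phi = m\,\kl[\cdot\|\cdot]$, bound the exponential moment by $2\sqrt{m}$ via Fubini and Maurer's inequality to turn $\tfrac{4}{\delta^2}$ into $\tfrac{8\sqrt{m}}{\delta^2}$, and observe that the normalization constant of the Gibbs prior cancels so that $\ln(\P(h')/\P(h)) = \omega(h)-\omega(h')$ (the paper factors the first two steps into an intermediate \Cref{corollary:disintegrated-comp}, but the content is identical). Your explicit check that the informed prior depends only on data independent of the evaluation sample $\Vcal$, so the Fubini swap and Maurer's bound remain valid, is a point the paper leaves implicit and is worth having stated.
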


\section{USING COMPLEXITY MEASURES IN PRACTICE}
\label{sec:experiments}

\Cref{sec:experiments-setting} presents our experimental setting.
In \Cref{sec:experiments-emp-risk}, we first compare \Cref{corollary:disintegrated-comp-unif,corollary:disintegrated-comp-data} to two additional bounds with $\comp(h,\Scal)\!=\!\alpha\RiskLossp_{\Scal}(h)$ (where $\loss'$ is a differentiable loss).
In \Cref{sec:experiments-reg-risk} we study the behavior of our bounds when the parametric function $\mu$ is defined as a regularized empirical risk.
Finally, in \Cref{sec:experiments-neural-comp} we assess the tightness of our bounds when the complexity term is learned with a neural network.

\subsection[General Experimental Setting]{General Experimental Setting\protect\footnote{The source code is available at \url{https://github.com/paulviallard/AISTATS24-Complexity-Measures}.}}\label{sec:experiments-setting}

\looseness=-1
In this section, we investigate the tightness of \Cref{corollary:disintegrated-comp-unif,corollary:disintegrated-comp-data}'s bounds on the MNIST~\citep{lecun1998mnist} and FashionMNIST~\citep{xiao2017fashion} datasets. 
Specifically, we consider the bounds on the true risk and the empirical risk endowed with the 01-loss $\loss(h,(\xbf,y))\!=\!\indic[h(\xbf) \ne y]$ where $\indic[a]\!=\!1$ if $a$ is true and $0$ otherwise.

\textbf{Model.} 
Inspired by the setting of \citet{viallard2024general}, we train an All Convolutional Network~\citep{springenberg2015striving} that is fitted for the two datasets MNIST and FashionMNIST.
This network comprises $4$ convolutional layers of $10$ channels and a kernel of size $5{\times}5$ followed by a Leaky ReLU activation function (where the padding and the stride are set to $1$ except for the second layer where the stride is set to $2$). 
Finally, the network ends with an average pooling of size $8{\times}8$ followed by a Softmax activation function.
The weights are initialized with the Xavier Glorot uniform initializer~\citep{glorot2010understanding}, and the biases are initialized uniformly between $-1/\sqrt{250}$ and $+1/\sqrt{250}$ for all biases instead for the first layer they are initialized uniformly in $[-1/5, +1/5]$.

\textbf{Datasets.}
We keep the original test set $\Tcal$ to estimate the true risk that we refer to as test risk $\Risk_{\Tcal}(h)$.
To evaluate \Cref{corollary:disintegrated-comp-unif}'s bounds and to sample $h\!\sim\!\AQ$, 
we keep the original learning set $\Scal$ to evaluate.
To evaluate \Cref{corollary:disintegrated-comp-data}'s bound, and to sample $h\!\sim\!\AQ$ and $h'\!\sim\!\P$, 
the original learning set is split into $2$ sets $\Scal$ and $\Scal'$ respectively of size $m$ and $m'$; 
When $\frac{m'}{m+m'}{=}0$, the prior distribution is the uniform (non-data-dependent) and we retrieve \Cref{corollary:disintegrated-comp-unif}.

\looseness=-1
\textbf{Sampling and Bound Computation.}
To compute \Cref{corollary:disintegrated-comp-unif,corollary:disintegrated-comp-data}'s bounds, we aim to sample $h\!\sim\!\AQ$ and $h'\!\sim\!\P$ by performing SGLD\footnote{\citet{dziugaite2018data} also perform SGLD to sample from a Gibbs distribution.} (described in \Cref{eq:sgld}) and to evaluate these hypotheses with $\klmax$.
Note that using SGLD is efficient for sampling since it does not require computing the normalization constants of the two distributions.
To tune the learning rate, {\it (1)} we compute the mean loss over the learning sample (without training), {\it (2)} we start with a learning rate of size $0.1$, and we decrease it (by a factor of $0.1$) and reinitialize the model after each epoch if the mean loss is not decreasing (to retrain from scratch), {\it (3)} if the learning rate attains $10^{-10}$, we set the learning rate to its starting value $0.1$ and start learning from scratch.
Once the initial learning rate that decreases the mean loss is set, we perform SGLD for $10$ epochs and with a mini-batch of size $64$.
After each epoch, we decrease the learning rate by a factor of $0.5$.
Whenever we have to sample a risk value with SGLD, we replace the 01-loss by the differentiable bounded cross entropy of~\citet{dziugaite2018data} $\loss'(h, (\xbf,y)) {=} -\frac{1}{4}\ln(e^{-4}{+}(1{-}2e^{-4})h(\xbf)[y])$, where $h[y]$ is the probability assigned to the label $y$ by $h$.
The advantage of \citet{dziugaite2018data}'s cross-entropy is that it lies in $\loss(h, (\xbf,y)) \in [0,1]$, whereas the classical cross-entropy is unbounded.
For all experiments, we perform 5 runs to obtain a mean and a standard deviation, which involves sampling from $\AQ$ and $\P$ for each evaluation of the bounds and risks.

\subsection{Experiments on the Empirical Risk}
\label{sec:experiments-emp-risk}

In this section, we compare the tightness of our bounds of \Cref{corollary:disintegrated-comp-unif,corollary:disintegrated-comp-data} with bounds that share similarities with the literature.
More precisely, this comparison is done for the Gibbs distribution $\AQ$ defined with the parametric function $\mu(h,\Scal)\!=\!\alpha\RiskLossp_{\Scal}(h)$.
Indeed, this Gibbs distribution was already studied in the classical and disintegrated PAC-Bayesian theory but led to uncomputable bounds. 
We adapt these bounds to make them computable so that we can report them as baselines (more details are given in \Cref{sec:comparison-literature}).
More precisely, we compare our bounds to the following one (similar to \citet{lever2013tighter}): with probability at least $1{-}\delta$, we have
\newcommand{\eqcomparisonlever}{%
\ifnotappendix%
\begin{align}
\kl[\RiskLoss_{\Scal}(h)\|\RiskLoss_{\Dcal}(h)]\!\le\!{\textstyle\frac{1}{m}\!\!\LB\frac{\alpha^2}{8m}{+}\sqrt{\!\frac{\alpha^2}{2m}\!\ln\!\frac{6\sqrt{m}}{\delta}}{+}\ln\!\frac{6\sqrt{m}}{\delta}\RB}.\label{eq:comparaison-lever}
\end{align}
\else%
\begin{align*}
\kl[\RiskLoss_{\Scal}(h)\|\RiskLoss_{\Dcal}(h)]\le\frac{1}{m}\!\!\LB\frac{\alpha^2}{8m}{+}\sqrt{\frac{\alpha^2}{2m}\ln\frac{6\sqrt{m}}{\delta}}{+}\ln\frac{6\sqrt{m}}{\delta}\RB,
\end{align*}
\fi}
\eqcomparisonlever
We also adapt the proof technique of \citet{dziugaite2018data} to obtain with probability at least $1{-}\delta$
\newcommand{\eqcomparisondziugaite}{%
\ifnotappendix%
\begin{align}
\kl[\RiskLoss_{\Scal}(h) \|\RiskLoss_{\Dcal}(h)] \le \frac{1}{m}\Big[ \alpha\!\LB\RiskLossp_{\Scal}(h'){-}\RiskLossp_{\Scal}(h)\RB\nonumber\\
+ \alpha'\Big[\RiskLossp_{\Scal}(h){-}\RiskLossp_{\Scal}(h')\Big] + 2\alpha' + \ln\!\tfrac{8\sqrt{m}}{\delta^2}\Big],\label{eq:comparaison-dziugaite}
\end{align}
\else%
\begin{align*}
\kl[\RiskLoss_{\Scal}(h) \|\RiskLoss_{\Dcal}(h)] \le \frac{1}{m}\Big[ \alpha\!\LB\RiskLossp_{\Scal}(h'){-}\RiskLossp_{\Scal}(h)\RB + \alpha'\Big[\RiskLossp_{\Scal}(h){-}\RiskLossp_{\Scal}(h')\Big] + 2\alpha' + \ln\!\tfrac{8\sqrt{m}}{\delta^2}\Big],
\end{align*}
\fi}
\eqcomparisondziugaite
\looseness=-1
where $h'\!\sim\! \P$ and $\P(h)\!\propto\!\exp[-\alpha'\RiskLossp_{\Scal}(h)]$.
For \Cref{corollary:disintegrated-comp-data}, we set the prior $\P$ with the parametric function $\omega(h)\!=\!\alpha\RiskLossp_{\Scal'}(h)$ and with a learning sample $\Scal'$ satisfying the split ratio $\frac{m'}{m\!+\!m'} \!=\! 0.5$.
For all the bounds, we take $\alpha$ (and $\alpha'$) uniformly spaced on the logarithmic scale between $\sqrt{m}$ and $m$.
For each parameter $\alpha$ and bound, we select the prior minimizing the bound and average its value over $5$ runs.
We report in \Cref{fig:emp-risk} the evolution of the different bounds and the test risks \wrt to $\alpha$.

\textbf{Analysis.} As expected, the standard deviations of all the bounds are small only for large $\alpha$, as this parameter controls the concentration of the Gibbs distributions.
A larger $\alpha$ tends to imply lower test risks $\RiskLoss_{\Tcal}(h)$.
However, the bounds become large as $\alpha$ increases except for our bound of~\Cref{corollary:disintegrated-comp-data}.
This is an expected behavior in the case of \Cref{eq:comparaison-lever} since the bound increases when $\alpha$ increases.
For \Cref{corollary:disintegrated-comp-unif}, the bound is large when the difference $\alpha[\RiskLossp_{\Scal}(h')\!-\!\RiskLossp_{\Scal}(h)]$ is large.
This is effectively the case because $\RiskLossp_{\Scal}(h')$ is large since $h'$ is sampled from a uniform distribution and $\RiskLossp_{\Scal}(h)$ is small because $h$ is sampled from $\AQ(h)\propto \exp[-\alpha\RiskLossp_{\Scal}(h)]$.
The same phenomenon arises with \Cref{eq:comparaison-dziugaite} since $\RiskLossp_{\Scal}(h')$ is large when $\alpha'$ is small, \ie, the concentration is not sufficient to minimize the empirical risk.
The tightness of \Cref{corollary:disintegrated-comp-data} comes from the fact that both empirical risks for $h'\!\sim\!\P$ and $h\!\sim\!\AQ$ are small, and so is the bound when the risks $\RiskLossp_{\Scal}(h')$ and $\RiskLossp_{\Scal'}(h)$ are \mbox{small as well}.
Moreover, note that, for small $\alpha$, the test risks and the bound values are higher compared to the others.
This is due to the fact that we use half of the data ($\frac{m'}{m+m'}{=}0.5$) for learning an informed prior.
Indeed, the value of $\alpha$ is twice as small as for the other bounds, which makes the bound values and the test risks higher as the Gibbs distribution is less concentrated.

\subsection{Experiments on Regularized Risks}
\label{sec:experiments-reg-risk}

\looseness=-1
In order to tighten the bounds in \Cref{corollary:disintegrated-comp-unif,corollary:disintegrated-comp-data}, one might assume that selecting a hypothesis with a small trade-off between its empirical risk and a norm is a reasonable solution. 
Against all odds, we will see in this section that regularizing the empirical risk with a parametric function does not help to tighten the bounds.
To define the norms used as regularizers, we assume that the model $h$, composed of $L$ layers, is parameterized by weights (and biases) $\wbf\!\in\!\R^d$; we denote by $h_{\wbf^2}$ the hypothesis $h$ that has its weights replaced by $\wbf^2$.
We define by $\wbf_i$ the weights and biases on the $i$-th layer.
Moreover, we denote the parameters obtained at initialization by $\vbf\!\in\!\R^d$.
Thanks to this additional notation, we can now define $6$ parametric functions $\mu$ associated with $6$ Gibbs distributions (and bounds).
We consider regularized empirical risks with the optimizable norms studied by~\citet[Sec.C]{jiang2020fantastic} and defined as follows:

\raisebox{0.5ex}{\tiny\textbullet}~$\riskdistfro_{\beta}(h, \Scal) = {\footnotesize \alpha(\beta\RiskLossp_{\Scal}(h) + \bar{\beta}\distfro(h, \Scal))},$\\ 
\raisebox{0.5ex}{\tiny\textbullet}~$\riskdistltwo_{\beta}(h, \Scal) = {\footnotesize \alpha(\beta\RiskLossp_{\Scal}(h) + \bar{\beta}\distltwo(h, \Scal))},$\\ 
\raisebox{0.5ex}{\tiny\textbullet}~\mbox{$\riskparamnorm_{\beta}(h, \Scal) = {\footnotesize \alpha(\beta\RiskLossp_{\Scal}(h) + \bar{\beta}\paramnorm(h, \Scal))},$}\\ 
\raisebox{0.5ex}{\tiny\textbullet}~\mbox{$\riskpathnorm_{\beta}(h, \Scal)= {\footnotesize \alpha(\beta\RiskLossp_{\Scal}(h){+}\bar{\beta}\pathnorm(h, \Scal))},$}\\ 
\raisebox{0.5ex}{\tiny\textbullet}~$\risksumfro_{\beta}(h, \Scal) = {\footnotesize \alpha(\beta\RiskLossp_{\Scal}(h) + \bar{\beta}\sumfro(h, \Scal))},$\\
\raisebox{0.5ex}{\tiny\textbullet}~$\riskgap_{\beta}(h, \Scal) = {\footnotesize \alpha(\beta\RiskLossp_{\Scal}(h) + \bar{\beta}\gap(h, \Scal))}$,

where $\bar{\beta}=1-\beta$ and with 

\raisebox{0.5ex}{\tiny\textbullet}~$\distfro(h, \Scal) = \sum_{i=1}^{L} \|\wbf_{i}{-}\vbf_{i}\|_{2},$\\ 
\raisebox{0.5ex}{\tiny\textbullet}~$\distltwo(h, \Scal) = {\footnotesize \|\wbf{-}\vbf\|_{2}},$\\ 
\raisebox{0.5ex}{\tiny\textbullet}~$\paramnorm(h, \Scal) = {\footnotesize \sum_{i=1}^{L}\|\wbf_{i}\|_{2}^{2}},$\\ 
\raisebox{0.5ex}{\tiny\textbullet}~\mbox{$\pathnorm(h, \Scal)= {\footnotesize \sum_{y\in\Ycal} h_{\wbf^2}(\onebf)[y]},$}\\ 
\raisebox{0.5ex}{\tiny\textbullet}~$\sumfro(h, \Scal) = {\footnotesize L{[\prod_{i=1}^{L}\|\wbf_{i}\|_{2}^{2}]^{\frac1L}}},$\\
\raisebox{0.5ex}{\tiny\textbullet}~$\gap(h, \Scal) = {\footnotesize \vert\RiskLossp_{\Tcal}(h){-}\RiskLossp_{\Scal}(h)\vert}$.

Remark that $\gap$ is not a function that can be computed in practice, however, it can be interpreted as an ideal case when we want the norm to be representative of the gap and check if it correlates with it as done by \citet{jiang2019predicting}.
Note that during SGLD, we do not evaluate the gap on the whole learning samples $\Scal$ and $\Tcal$ but on a mini-batch of size $64$.
Moreover, by taking into account the \citet{dziugaite2018data}'s bounded cross-entropy instead of the classical one allows the parametric function not to focus too much on the risk since we want to take into account the norm.
We consider the bounds of \Cref{corollary:disintegrated-comp-unif,corollary:disintegrated-comp-data} with a split ratio of \mbox{$\frac{m'}{m+m'}\!=\!0.5$} and \mbox{$\frac{m'}{m+m'}\!=\!0.0$} while we set the parametric function $\omega$ associated with $\P$ to \mbox{$\omega(h_\wbf)\!=\!\comp(h_\wbf, \Scal')$} (the same function as for $\AQ$).
We report in \Cref{fig:reg-risk} the evolution of the test risks $\Risk_{\Tcal}(h)$ and the bound values for the different parametric functions as a function of the trade-off \mbox{parameter $\beta$}.

\textbf{Analysis.}
The main striking result is that the $\gap$ behaves differently than the norms.
Its test risk rapidly decreases (until $\beta\!=\!0.3$) while the associated bound remains tight. 
In contrast, the norms' curves show two regimes depending on the split ratio $\frac{m'}{m+m'}$.
For instance, when $\frac{m'}{m+m'}\!=\!0.0$, the test risks and the bounds decrease when $\beta\!\ge\!0.7$ but their gap increases.
When $\frac{m'}{m+m'}\!=\!0.5$, the test risks decrease for $\beta\!\ge\!0.7$ and the bounds stay tight.
Note that the bounds and test risks for $\paramnorm$ stay high because SGLD fails to minimize the regularized risk.
This experiment suggests that the norms are not good predictors of the generalization gap, as the norms' bounds are not close to the ideal one, given by $\gap$. 

\subsection{Experiments on Neural Complexities}
\label{sec:experiments-neural-comp}
In light of the previous results, we now study how our bounds behave when computed with a better predictor of the generalization gap.
Indeed, while \Cref{sec:experiments-emp-risk,sec:experiments-reg-risk} focus on hypotheses that minimize (regularized) empirical risks, we are ideally interested in concentrating the probability measure associated with $\AQ$ on the hypotheses with small generalization gaps.
To do so, the parametric function for $\AQ$ can depend on an estimation of the gap (this latter being not available in practice). 
In this section, we consider the bound of \Cref{corollary:disintegrated-comp-unif} (without data-dependent priors) and study the following parametric functions $\mu$
\begin{align*}
\mu(h,\Scal) = f^{\boldsymbol{D}}(h,\Scal) = \alpha\vert f(h,\Scal) - f(h_{\text{SGD}},\Scal)\vert,
\end{align*}
where $f\! \in\! \{\distfro, \distltwo, \paramnorm, \pathnorm,$ $\sumfro, \gap\}$, and $\alpha\!=\!m$, and  $h_{\text{SGD}}$ is obtained by Stochastic Gradient Descent (SGD).
This particular choice of $\mu$ allows us to sample hypotheses close to the value of the parametric function $f$ evaluated on $h_{\text{SGD}}$.
We additionally assess a parametric function $\distneural\!$, consisting of a neural network learned to predict the generalization gap.
More precisely, we learn the function $\neural(h,\Scal)$, which becomes the output of a feed-forward neural network (learned from $\Scal$), taking the parameters $\wbf$ of the model $h$ and outputting a positive real that must represent the generalization gap.
$\distneural$ is thus the function comparing the output of the feed-forward neural network associated with $h\sim\AQ$ and $h_{\text{SGD}}$.
Note that learning a neural network for predicting the gap was previously proposed by~\citep{lee2020neural}; we refer the reader to \Cref{sec:additional-experiments-neural-comp} for a discussion and a detailed presentation of the learning setting.
To obtain the model, we first run SGD on a random number of epochs uniformly sampled between $1$ and $10$, and with the same parameters as SGLD (\Cref{sec:experiments-setting}).
To sample $h\!\sim\!\AQ$, we start from $h_{\text{SGD}}$ as the initialization, and we run SGLD for $10$ epochs (unless the learning rate attains $10^{-10}$).
Finally, we consider a second setting, with the parametric functions are noted $\distfro$, $\distltwo$, $\paramnorm$, $\pathnorm$, $\sumfro$, $\gap$ (scaled by $\alpha\!=\!m$) and $\neural$ (without $\boldsymbol{D}$), where the parametric function $\comp(h, \Scal)$ is evaluated \textit{w.r.t.} the initial value of $h$ instead of $h_{\text{SGD}}$.
Note that for $\gap$ and $\distgap$, we skip the SGLD phase to have a bound on the model obtained from SGD directly; these two parametric functions correspond to our ideal cases.
We plot in \Cref{fig:neural-comp} the mean and the standard deviation of the bound values and test risks (averaged over 5 runs) for the considered parametric functions.
We provide additional experiments on $\neural$ and $\distneural$ in \Cref{sec:additional-experiments-neural-comp}.

\WarningFilter{latex}{Text page 8 contains only floats}
\begin{figure*}[!ht]
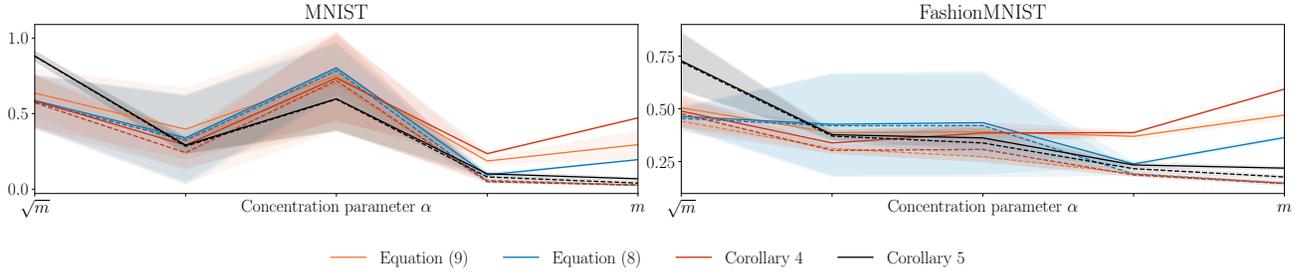

    \centering
    \includestandalone[width=1.0\linewidth]{figures/fig_2}
    \caption{Evolution of the bounds (the plain lines) and the test risks $\RiskLoss_{\Tcal}(h)$ (the dashed lines) {\it w.r.t.} the concentration parameter $\alpha$.
    The lines correspond to the mean, while the bands are the standard deviations.}
    \label{fig:emp-risk}
\end{figure*}

\begin{figure*}[!ht]
    \centering
    \includestandalone[width=1.0\linewidth]{figures/fig_3}
    \caption{Evolution of the bounds (the plain lines) and the test risks $\RiskLoss_{\Tcal}(h)$ (the dashed lines) \wrt the trade-off parameter $\beta$ for $\alpha=m$.
    The lines correspond to the mean, while the bands are the standard deviations.}
    \label{fig:reg-risk}
\end{figure*}

\begin{figure*}[!ht]
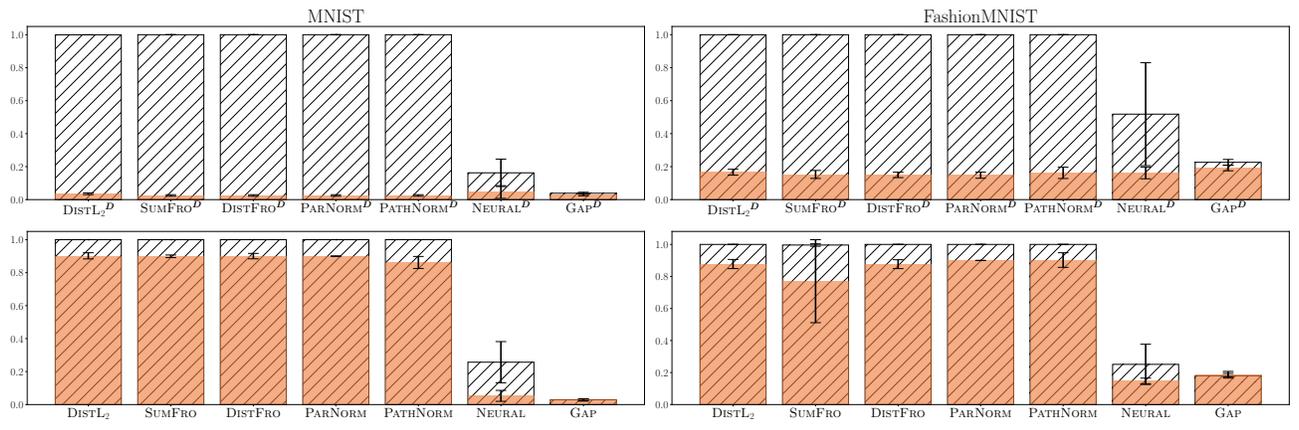

    \centering
    \includestandalone[width=1.0\linewidth]{figures/fig_4}
    \caption{Bar plot of the bound value associated with \Cref{corollary:disintegrated-comp-unif} and the different parametric functions. 
    The mean bound values of the sampled hypotheses $h\sim\AQ$ are shown with the hatched bars, and the mean test risks $\RiskLoss_{\Tcal}(h)$ are plotted in the colored bars. Moreover, the standard deviations are plotted in black.}
    \label{fig:neural-comp}
\end{figure*}

\textbf{Analysis.}
As a first general remark, the bounds with complexity measures based on norms behave differently than the ones based on $\neural$, $\distneural$, $\gap$, and $\distgap$.
Indeed, the mean bound values for the complexity measures based on the norms are all vacuous (\ie, they are equal to $1$), while their test risks are high for $\distfro$, $\distltwo$, $\paramnorm$, $\pathnorm$, and $\sumfro$, which is expected since we want to sample a hypothesis with a low norm (and not far from the initialization).
Similarly, for $\distdistfro$, $\distdistltwo$, $\distparamnorm$, $\distpathnorm$, and $\distsumfro$, we want to sample a hypothesis with a norm that is close to the one of $h_{\text{SGD}}$ and thus a hypothesis with a test risk $\RiskLoss_{\Tcal}(h)$ close to $\RiskLoss_{\Tcal}(h_{\text{SGD}})$.
In these cases, the bounds are vacuous because the parametric functions evaluated on $h$ are close to zero, and the ones evaluated on $h'$ are high.
This highlights a drawback of the empirical studies of \citet{jiang2020fantastic,dziugaite2020search}: they study the correlation between the norms and the generalization gaps on {\it trained} neural networks.
However, considering a norm as a good proxy for the generalization gap is impossible in this case. 
Indeed, rescaling the weights of the networks (by a scalar) gives the exact same predictions and thus keeps the same generalization gap while changing the norm; this is due to the use of non-negative homogeneous activation functions, such as the standard (Leaky) RELU~(see \eg, \citep{neyshabur2015path,dinh2017sharp}.
In contrast, the two parametric functions $\neural$ and $\distneural$ give tight bounds and are close to the ideal bounds of $\gap$ and $\distgap$.\footnote{For $\gap$, the bounds are sometimes lower than the test risks.
This is normal if the gap of $h_{\text{SGD}}$ is much higher than $0$ because sampling $h_{\text{SGD}}$ is unlikely in this context.}
This clearly illustrates that learning a parametric function (and so a complexity measure) can help to obtain tighter generalization bounds.
Note that the bounds with $\neural$ and $\distneural$ are tight even without a data-dependent prior, which is usually needed to obtain tight bounds for neural networks (see \eg, \citep{dziugaite2017computing,dziugaite2021role,perezortiz2021tighter,viallard2024general}).
This is an encouraging result and a step toward eliminating the need for data-dependent priors in PAC-Bayes to obtain tight bounds for neural networks.

\section{CONCLUSION}
\label{sec:conclu}

In contrast to classical statistical learning theory frameworks, for which a complexity measure is imposed, we provide a generic and novel generalization bound where the user can choose any parametric function acting as a complexity.
This measure incorporates a data and model-dependent function, which can be devised to favor desired properties for the hypotheses.
In particular, we show that when such a function is learned to be representative of the generalization gap, our bounds are tight even without data-dependent priors.
To the best of our knowledge, our framework is one of the few general enough to bring theoretical guarantees for learned complexity measures and for ones used in practice, 
\eg, based on some weight norms.
Last but not least, we believe this work paves the way for new research directions on bridging the gap between statistical learning theory and practice.
Indeed, our framework could provide meaningful insights into the generalization of deep models by plugging in new complexity measures, e.g., given by: {\it (i)} learning an interpretable model based on features such as training trajectory and network configuration, or {\it (ii)} new handcrafted parametric functions that are simple but predictive of generalization.

\subsubsection*{Acknowledgements}
This work was partially funded by the French ANR projects APRIORI ANR-18-CE23-0015, ANR-19-P3IA-0001 (PRAIRIE 3IA Institute), and FAMOUS ANR-23-CE23-0019.

\bibliography{main}
\bibliographystyle{plainnat}

\section*{Checklist}

 \begin{enumerate}

 \item For all models and algorithms presented, check if you include:
 \begin{enumerate}
   \item A clear description of the mathematical setting, assumptions, algorithm, and/or model. [Yes]
   \item An analysis of the properties and complexity (time, space, sample size) of any algorithm. [Not Applicable]
   \item (Optional) Anonymized source code, with specification of all dependencies, including external libraries. [Yes]
 \end{enumerate}

 \item For any theoretical claim, check if you include:
 \begin{enumerate}
   \item Statements of the full set of assumptions of all theoretical results. [Yes]
   \item Complete proofs of all theoretical results. [Yes]
   \item Clear explanations of any assumptions. [Yes]     
 \end{enumerate}

 \item For all figures and tables that present empirical results, check if you include:
 \begin{enumerate}
   \item The code, data, and instructions needed to reproduce the main experimental results (either in the supplemental material or as a URL). [Yes]
   \item All the training details (e.g., data splits, hyperparameters, how they were chosen). [Yes]
         \item A clear definition of the specific measure or statistics and error bars (e.g., with respect to the random seed after running experiments multiple times). [Yes]
         \item A description of the computing infrastructure used. (e.g., type of GPUs, internal cluster, or cloud provider). [No]
 \end{enumerate}

 \item If you are using existing assets (e.g., code, data, models) or curating/releasing new assets, check if you include:
 \begin{enumerate}
   \item Citations of the creator If your work uses existing assets. [Yes]
   \item The license information of the assets, if applicable. [Not Applicable]
   \item New assets either in the supplemental material or as a URL, if applicable. [Not Applicable]
   \item Information about consent from data providers/curators. [Not Applicable]
   \item Discussion of sensible content if applicable, e.g., personally identifiable information or offensive content. [Not Applicable]
 \end{enumerate}

 \item If you used crowdsourcing or conducted research with human subjects, check if you include:
 \begin{enumerate}
   \item The full text of instructions given to participants and screenshots. [Not Applicable]
   \item Descriptions of potential participant risks, with links to Institutional Review Board (IRB) approvals if applicable. [Not Applicable]
   \item The estimated hourly wage paid to participants and the total amount spent on participant compensation. [Not Applicable]
 \end{enumerate}

 \end{enumerate}

\clearpage


\appendix
\notappendixfalse
\onecolumn

The appendix is organized as follows:\begin{enumerate}[label={\it (\roman*)}]
    \item \Cref{sec:proof} is dedicated to the proof of \Cref{theorem:disintegrated-comp} (in \Cref{sec:proof-disintegrated-comp}), and to the proof of \Cref{corollary:disintegrated-comp-unif,corollary:disintegrated-comp-data} (in \Cref{sec:proof-corollary-disintegrated-comp}),
    \item \Cref{sec:comparison-literature} is dedicated to the theoretical results related to the comparison with other theoretical results of the literature,
    \item In \Cref{sec:comparison-uc-algo}, we explain how to obtain uniform-convergence and algorithmic-dependent bounds by setting appropriately the parametric function,
    \item Additional details on the experiments are provided in \Cref{sec:additional-experiments}.
\end{enumerate}

\section{PROOF OF THE MAIN RESULTS}
\label{sec:proof}

This section is dedicated to the proof of the results.
More precisely, in \Cref{sec:proof-disintegrated-comp}, we provide the proof of \Cref{theorem:disintegrated-comp}, whereas in \Cref{sec:proof-corollary-disintegrated-comp}, we prove \Cref{corollary:disintegrated-comp-unif,corollary:disintegrated-comp-data}.

\subsection[Proof of Theorem 3]{Proof of \Cref{theorem:disintegrated-comp}}
\label{sec:proof-disintegrated-comp}
\theoremboundcomplexitymeasure*
\begin{proof}
First of all, we denote as $Z=\int_{\Hcal}\exp\left[-\comp(g, \Scal)\right]d\lambda(g)$, the normalization constant of the Gibbs distribution $\AQ$ and $\lambda$ the reference measure on $\Hcal$. 
In other words, we have
\begin{align*}
    \AQ(h) = \frac{1}{Z}\exp\LB-\comp(h, \Scal)\RB \propto \exp\LB-\comp(h, \Scal)\RB.
\end{align*}
We apply \Cref{theorem:general-disintegrated-rivasplata} with $\frac{\delta}{2}$ instead of $\delta$ and with the function $\varphi(h,\Scal)=\phi(\RiskLoss_{\Dcal}(h), \RiskLoss_{\Scal}(h))$ to obtain
\begin{align*}
    \PP_{\Scal\sim\Dcal^m, h\sim\AQ}\LB\phi(\RiskLoss_{\Dcal}(h), \RiskLoss_{\Scal}(h)) \le \ln\!\LB\frac{\AQ(h)}{\P(h)}\RB\!+\!\ln\!\LB\frac{2}{\delta}\EE_{\Vcal\sim\Dcal^m}\EE_{g\sim\P}e^{\phi(\RiskLoss_{\Dcal}(g), \RiskLoss_{\Vcal}(g))}\RB \RB \ge 1-\frac{\delta}{2}.
\end{align*}
We develop the term $\ln\!\LB\frac{\AQ(h)}{\P(h)}\RB$ in \Cref{theorem:general-disintegrated-rivasplata}. 
We have 
\begin{align*} 
    \ln\!\LB\frac{\AQ(h)}{\P(h)}\RB &=  \ln\LP \frac{\exp\left[-\comp(h, \Scal)\right]}{Z}\frac{1}{\P(h)}\RP\\
    &= \ln\LP \exp\left[-\comp(h, \Scal)\right]\RP-\ln\LP\P(h)\int_{\Hcal}\!\exp\left[-\comp(g, \Scal)\right] d\lambda(g)\RP\\
    &= -\comp(h, \Scal)-\ln\LP\P(h)\int_{\Hcal}\frac{\P(g)}{\P(g)}  \exp\left[-\comp(g, \Scal)\right]d\lambda(g) \RP\\
    &= -\comp(h, \Scal) - \ln\LP\EE_{g\sim\P} \frac{\P(h)}{\P(g)}e^{-\comp(g, \Scal)}\RP.
\end{align*}

Hence, we obtain the following inequality
\begin{align}
    &\PP_{\Scal\sim\Dcal^m,h\sim\AQ}\Bigg[ \phi(\RiskLoss_{\Dcal}(h), \RiskLoss_{\Scal}(h)) \le \ln\!\LB\frac{2}{\delta}\EE_{\Vcal\sim\Dcal^m}\EE_{g\sim\P}e^{\phi(\RiskLoss_{\Dcal}(g), \RiskLoss_{\Vcal}(g))}\RB-\comp(h, \Scal) - \ln\LP{\textstyle\EE_{g\sim\P} \frac{\P(h)}{\P(g)}}e^{-\comp(g, \Scal)}\RP\Bigg] \ge 1-\frac{\delta}{2}.\label{eq:proof-theorem-disintegrated-comp-1}
\end{align}

We can now upper-bound the term $-\ln\LP{\textstyle\EE_{g\sim\P} \frac{\P(h)}{\P(g)}}e^{-\comp(g, \Scal)}\RP$.
To do so, since $\frac{\P(h)}{\P(h')}e^{-\comp(h', \Scal)} > 0$ for all $h\in\Hcal$, $h'\in\Hcal$ and $\Scal\in(\Xcal{\times}\Ycal)^{m}$, we apply Markov's inequality to obtain
\begin{align*}
    \forall h\in\Hcal,\quad \forall\Scal\in(\Xcal{\times}\Ycal)^{m},\quad \PP_{h'\sim\P}\Bigg[\frac{\P(h)}{\P(h')}e^{-\comp(h', \Scal)} \le \frac{2}{\delta}\EE_{g\sim\P}\frac{\P(h)}{\P(g)}e^{-\comp(g, \Scal)}\Bigg]\ge 1-\frac{\delta}{2}&\\
    \iff  \PP_{h'\sim\P}\Bigg[-\ln\LP\EE_{g\sim\P}\frac{\P(h)}{\P(g)}e^{-\comp(g, \Scal)}\RP \le \ln\frac{2}{\delta} -\ln\LP\frac{\P(h)}{\P(h')}e^{-\comp(h', \Scal)}\RP\Bigg]\ge 1-\frac{\delta}{2}.&
\end{align*}
Moreover, by simplifying the right-hand side of the inequality, we have 
\begin{align*}
    -\ln\LP\frac{\P(h)}{\P(h')}e^{-\comp(h'\!, \Scal)}\RP = \ln\frac{\P(h')}{\P(h)} +\comp(h'\!, \Scal).
\end{align*}

Hence, we obtain the following inequality
\begin{align}
    \PP_{h'\sim\P}\!\LB -\ln\!\LP\EE_{g\sim\P}\frac{\P(h)}{\P(g)}e^{-\comp(g, \Scal)}\RP \!\le \ln\frac{2}{\delta}{+}\ln\frac{\P(h')}{\P(h)}{+}\comp(h'\!, \Scal) \RB\! \ge 1{-}\frac{\delta}{2}.\label{eq:proof-theorem-disintegrated-comp-2}
\end{align}

By using a union bound on \Cref{eq:proof-theorem-disintegrated-comp-1,eq:proof-theorem-disintegrated-comp-2} and rearranging the terms, we obtain the claimed result.
\end{proof}

\subsection[Proof of Corollaries 4 and 5]{Proof of \Cref{corollary:disintegrated-comp-unif,corollary:disintegrated-comp-data}}
\label{sec:proof-corollary-disintegrated-comp}

In order to prove \Cref{corollary:disintegrated-comp-unif,corollary:disintegrated-comp-data}, we first provide another corollary that will be necessary.

\begin{restatable}{corollary}{corollarydisintegratedcomp}\label{corollary:disintegrated-comp}
For any $\Dcal$ on $\Xcal{\times}\Ycal$, for any hypothesis set $\Hcal$, for any loss $\loss : \Hcal\times(\Xcal{\times}\Ycal) \to [0,1]$, for any prior distribution $\P\in\Mcal(\Hcal)$ on $\Hcal$, for any $\comp\!:\! \Hcal{\times}(\Xcal{\times}\Ycal)^m{\to}\R$, for any $\delta\!\in\!(0,1]$, with probability at least $1{-}\delta$ over $\Scal{\sim}\Dcal^m$, $h'{\sim}\P$, $h{\sim}\AQ$ we have
\begin{align*}
\kl[\RiskLoss_{\Scal}(h) \|\RiskLoss_{\Dcal}(h)] &\le \frac{1}{m}\!\LB\comp(h'\!,\Scal) - \comp(h,\Scal) +\ln\frac{\P(h')}{\P(h)} + \ln\frac{8\sqrt{m}}{\delta^2}\RB.
\end{align*}
\end{restatable}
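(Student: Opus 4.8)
The plan is to specialize \Cref{theorem:disintegrated-comp} to the generalization gap $\phi(\RiskLoss_{\Dcal}(h),\RiskLoss_{\Scal}(h)) = m\,\kl[\RiskLoss_{\Scal}(h)\|\RiskLoss_{\Dcal}(h)]$ and then to control the single remaining exponential-moment term. First I would apply \Cref{theorem:disintegrated-comp} verbatim with this choice of $\phi$; since that theorem holds for an arbitrary measurable function, it immediately gives, with probability at least $1-\delta$ over $h'\sim\P$, $\Scal\sim\Dcal^m$, and $h\sim\AQ$,
\begin{align*}
m\,\kl[\RiskLoss_{\Scal}(h)\|\RiskLoss_{\Dcal}(h)] \le \comp(h'\!,\Scal) - \comp(h,\Scal) + \ln\frac{\P(h')}{\P(h)} + \ln\!\LB\frac{4}{\delta^2}\EE_{\Vcal\sim\Dcal^m}\EE_{g\sim\P} e^{m\,\kl[\RiskLoss_{\Vcal}(g)\|\RiskLoss_{\Dcal}(g)]}\RB.
\end{align*}
The only work left is to upper bound the double expectation by $2\sqrt{m}$; substituting $\ln[\frac{4}{\delta^2}\cdot 2\sqrt{m}] = \ln\frac{8\sqrt{m}}{\delta^2}$ and dividing both sides by $m$ then yields exactly the claimed inequality.

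The key technical step, and the main obstacle, is the bound $\EE_{\Vcal\sim\Dcal^m}\EE_{g\sim\P} e^{m\,\kl[\RiskLoss_{\Vcal}(g)\|\RiskLoss_{\Dcal}(g)]} \le 2\sqrt{m}$. Because the integrand is non-negative (indeed $\kl\ge 0$, so the exponential is at least $1$), Tonelli's theorem lets me swap the two expectations and reduce to a fixed hypothesis $g$. For fixed $g$, the boundedness assumption $\loss:\Hcal\times(\Xcal\times\Ycal)\to[0,1]$ makes $\RiskLoss_{\Vcal}(g)$ an average of $m$ i.i.d.\ $[0,1]$-valued losses with mean $\RiskLoss_{\Dcal}(g)$, so the inner expectation $\EE_{\Vcal\sim\Dcal^m} e^{m\,\kl[\RiskLoss_{\Vcal}(g)\|\RiskLoss_{\Dcal}(g)]}$ is governed by Maurer's lemma (the sharp form of the Langford--Seeger exponential-moment bound for the binary relative entropy), which gives the bound $2\sqrt{m}$ uniformly in $g$. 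Since $2\sqrt{m}$ does not depend on $g$, applying $\EE_{g\sim\P}$ to both sides preserves it, which is why the choice of prior $\P$ plays no role in this term.

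Finally I would assemble the pieces. The inequality on the double expectation is a deterministic bound on the constant term of \Cref{theorem:disintegrated-comp}, so substituting it into the logarithm only enlarges the right-hand side and hence leaves the high-probability event intact. Dividing the whole inequality by $m>0$ and reading off the resulting bound then gives the statement of \Cref{corollary:disintegrated-comp}. The only care needed throughout is that the boundedness of $\loss$ is precisely what guarantees $\RiskLoss_{\Vcal}(g)\in[0,1]$, so that $\kl$ is well defined and Maurer's lemma is applicable; everything else is routine bookkeeping that the division by $m$ cannot disturb.
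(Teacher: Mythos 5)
Your proposal is correct and follows essentially the same route as the paper's proof: instantiate \Cref{theorem:disintegrated-comp} with $\phi(\RiskLoss_{\Dcal}(h),\RiskLoss_{\Scal}(h)) = m\kl[\RiskLoss_{\Scal}(h)\|\RiskLoss_{\Dcal}(h)]$, exchange the two expectations, bound the exponential moment by $2\sqrt{m}$ via \citet{maurer2004note}, and rearrange. The only (immaterial) difference is that you invoke Tonelli's theorem via non-negativity of the integrand where the paper cites Fubini's theorem for the swap.
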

\begin{proof}
We instantiate \Cref{theorem:disintegrated-comp} with $\phi(\RiskLoss_{\Dcal}(h), \RiskLoss_{\Scal}(h)){=}m\kl[\RiskLoss_{\Scal}(h)\|\RiskLoss_{\Dcal}(h)]$.
By rearranging the term, we have 
\begin{align*}
\kl[\RiskLoss_{\Scal}(h) \|\RiskLoss_{\Dcal}(h)] &\le \frac{1}{m}\!\LB\comp(h'\!,\Scal) - \comp(h,\Scal) +\ln\frac{\P(h')}{\P(h)} + \ln\!\left[\frac{4}{\delta^2} \EE_{\Vcal\!{\sim}\Dcal^m}\EE_{g{\sim}\P} e^{m\kl[\RiskLoss_{\Scal}(g) \|\RiskLoss_{\Dcal}(g)]}\right]\RB.
\end{align*}
We also have to upper-bound $\EE_{\Vcal\sim\Dcal^m}\!\EE_{g\sim\P}\!\exp\LP m\kl\LB\RiskLoss_{\Vcal}(g) \|\RiskLoss_{\Dcal}(g)\RB\RP$. 
Indeed, we have
\begin{align}
    \EE_{\Vcal\sim\Dcal^m}\EE_{g\sim\P}e^{m\kl\LB\RiskLoss_{\Vcal}(g) \|\RiskLoss_{\Dcal}(g)\RB}  &=  \EE_{g\sim\P}\EE_{\Vcal\sim\Dcal^m}e^{m\kl\LB\RiskLoss_{\Vcal}(g) \|\RiskLoss_{\Dcal}(g)\RB}\label{eq:proof-corollary-disintegrated-comp-1}\\
    \text{and}\quad \EE_{g\sim\P}\EE_{\Vcal\sim\Dcal^m}e^{m\kl\LB\RiskLoss_{\Vcal}(g) \|\RiskLoss_{\Dcal}(g)\RB}  &\leq 2\sqrt{m}\label{eq:proof-corollary-disintegrated-comp-2},
\end{align}
where \Cref{eq:proof-corollary-disintegrated-comp-1} is due to Fubini's theorem (\ie, we can exchange the two expectations), and \Cref{eq:proof-corollary-disintegrated-comp-2} is due to \citet{maurer2004note}.
\end{proof}
Thanks to \Cref{corollary:disintegrated-comp}, we are now able to prove \Cref{corollary:disintegrated-comp-unif}.
\corollarydisintegratedcompunif*
\begin{proof}
We instantiate \Cref{corollary:disintegrated-comp} with $\P$ being a uniform distribution. 
Moreover, we have $\ln\frac{\P(h')}{\P(h)}=0$.  
\end{proof}
Similarly than for \Cref{corollary:disintegrated-comp-unif}, we prove \Cref{corollary:disintegrated-comp-data} thanks to \Cref{corollary:disintegrated-comp}.
\corollarydisintegratedcompdata*
\begin{proof}
We instantiate \Cref{corollary:disintegrated-comp} with $\P(h)\propto \exp(-\omega(h))$.
Moreover, we have $\ln\frac{\P(h')}{\P(h)}=\omega(h)-\omega(h')$.
\end{proof}

\section{COMPARISON WITH THE BOUNDS OF THE LITERATURE}
\label{sec:comparison-literature}

In this section, we first provide the bound of \citet{lee2020neural} in \Cref{sec:comparison-lee}.
Additionally, we discuss three bounds that are in the (classical or disintegrated) PAC-Bayesian literature and that consider Gibbs distributions.
More precisely, we discuss in \Cref{sec:comparison-catoni} a disintegrated bound of \citet[Section 1.2.4]{catoni2007pac} that was proven for a specific Gibbs distribution (\ie, with a fixed parametric function $\comp$).
Moreover, in \Cref{sec:comparison-lever,sec:comparison-dziugaite}, we provide two disintegrated PAC-Bayesian bounds for $\mu(h,\Scal)=\alpha\RiskLossp_{\Scal}(h)$ inspired by two works from the classical PAC-Bayesian literature.
Moreover, in \Cref{sec:related-works}, we discuss the related works.

\subsection[About Lee et al. {[2020]}’s Bound]{About \citet{lee2020neural}'s Bound}
\label{sec:comparison-lee}

For the sake of completeness, we provide a (refined) proof of the \citet{lee2020neural}'s bound.

\begin{theorem}

For any distribution $\Dcal$ on $\Xcal{\times}\Ycal$, for any hypothesis set $\Hcal$, for any distribution $\P\!\in\!\Mcal(\Hcal)$, for any $\delta\!\in\!(0, 1]$, with probability at least $1{-}\delta$ over $\Scal_1{\sim}\Dcal^m, \dots, \Scal_n{\sim}\Dcal^m$ and $h_1{\sim}\Q_{\Scal_1},\dots h_n{\sim}\Q_{\Scal_n}$ we have for all $\epsilon{>}0$
\begin{align*}
\PP_{\Scal\sim\Dcal^m,h\sim\AQ}\Big[ |\Risk_{\Dcal}(h)-\Risk_{\Scal}(h)| \le \comp(h, \Scal) + \epsilon \Big]\ge1{-}\frac{1}{n}\sum_{i=1}^{n}\indic\Big[ |\Risk_{\Dcal}(h_i){-}\Risk_{\Scal_i}(h_i)|{-}\comp(h_i, \Scal_i) > \epsilon \Big]{-}\sqrt{\frac{\ln\frac{2}{\delta}}{2n}} \defeq 1{-}\delta'(\epsilon),
\end{align*}
where $\indic[a]=1$ if $a$ is true and $0$ otherwise.
\end{theorem}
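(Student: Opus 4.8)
The plan is to recognize this statement as a concentration bound for an empirical cumulative distribution function and to invoke the Dvoretzky--Kiefer--Wolfowitz (DKW) inequality. First I would introduce the real-valued random variable $X \defeq |\Risk_{\Dcal}(h)-\Risk_{\Scal}(h)| - \comp(h,\Scal)$ associated with a draw $\Scal\sim\Dcal^m$, $h\sim\AQ$, together with its $n$ copies $X_i \defeq |\Risk_{\Dcal}(h_i)-\Risk_{\Scal_i}(h_i)| - \comp(h_i,\Scal_i)$. The $X_i$ are \iid copies of $X$ because each pair $(\Scal_i,h_i)$ is produced by the same two-stage process $\Scal_i\sim\Dcal^m$ followed by $h_i\sim\Q_{\Scal_i}$, independently across $i$. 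This is the step that turns the double randomness (over samples and over sampled hypotheses) into a clean \iid sequence of scalars.

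Next I would rewrite both sides of the target inequality in terms of the law of $X$. Writing $F(\epsilon) \defeq \PP_{\Scal\sim\Dcal^m,\,h\sim\AQ}[X\le\epsilon]$ for the CDF of $X$ and $\hat{F}_n(\epsilon) \defeq \frac{1}{n}\sum_{i=1}^{n}\indic[X_i\le\epsilon]$ for its empirical counterpart, the inner probability in the statement is exactly $F(\epsilon)$, while the sum appearing on the right equals $\frac{1}{n}\sum_{i=1}^{n}\indic[X_i>\epsilon] = 1-\hat{F}_n(\epsilon)$. Hence the asserted inequality $F(\epsilon)\ge 1-(1-\hat{F}_n(\epsilon))-\sqrt{\ln(2/\delta)/(2n)}$ rearranges, for each fixed $\epsilon$, into $\hat{F}_n(\epsilon)-F(\epsilon)\le\sqrt{\ln(2/\delta)/(2n)}$. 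Requiring this for \emph{all} $\epsilon>0$ simultaneously is precisely the requirement that $\sup_{\epsilon}\,(\hat{F}_n(\epsilon)-F(\epsilon))$ be bounded by $\sqrt{\ln(2/\delta)/(2n)}$.

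Finally I would apply the DKW inequality with Massart's sharp constant, namely $\PP[\sup_{\epsilon}|\hat{F}_n(\epsilon)-F(\epsilon)|>t]\le 2\,e^{-2nt^2}$, and set $t=\sqrt{\ln(2/\delta)/(2n)}$ so that the right-hand side is exactly $\delta$. On the complementary event (probability at least $1-\delta$) the uniform bound $\hat{F}_n(\epsilon)-F(\epsilon)\le t$ holds for every $\epsilon$, which after undoing the rearrangement is the claimed statement.

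The main obstacle is precisely this uniformity over $\epsilon$: a fixed-$\epsilon$ Hoeffding bound on the Bernoulli average $\hat{F}_n(\epsilon)$ (each $\indic[X_i\le\epsilon]$ being a $[0,1]$-valued variable with mean $F(\epsilon)$) is immediate, but it does not survive the ``for all $\epsilon>0$'' quantifier, and naively union-bounding over a continuum of thresholds fails. It is exactly to control the supremum over the continuum of thresholds that the Glivenko--Cantelli / DKW machinery is needed. Two minor points I would still check are the measurability of $X$ as a function of $(\Scal,h)$, and the bookkeeping of the constant: the two-sided DKW reproduces the stated $\ln(2/\delta)$, whereas since only the one-sided deviation $\hat{F}_n-F$ matters here, the one-sided DKW bound $\PP[\sup_{\epsilon}(\hat{F}_n(\epsilon)-F(\epsilon))>t]\le e^{-2nt^2}$ would in fact sharpen the constant to $\ln(1/\delta)$.
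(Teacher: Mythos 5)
Your proposal is correct and follows essentially the same route as the paper: both identify the inner probability as the CDF $F(\epsilon)$ of the scalar $|\Risk_{\Dcal}(h)-\Risk_{\Scal}(h)|-\comp(h,\Scal)$, and both invoke the Dvoretzky--Kiefer--Wolfowitz inequality with $t=\sqrt{\ln(2/\delta)/(2n)}$ to control the empirical CDF uniformly over $\epsilon$ before rewriting the indicators of the complementary event. Your closing remark that the one-sided DKW bound would sharpen $\ln(2/\delta)$ to $\ln(1/\delta)$ is a valid minor improvement that the paper does not exploit.
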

\begin{proof}
First of all, we have
\begin{align*}
\PP_{\Scal\sim\Dcal^m,h\sim\AQ}\Big[ |\Risk_{\Dcal}(h)-\Risk_{\Scal}(h)| \le \comp(h, \Scal) + \epsilon \Big] = \PP_{\Scal\sim\Dcal^m,h\sim\AQ}\Big[ |\Risk_{\Dcal}(h)-\Risk_{\Scal}(h)| -\comp(h, \Scal) \le \epsilon \Big] \defeq F(\epsilon),  
\end{align*}
where $F(\cdot)$ is the cumulative distribution function of $|\Risk_{\Dcal}(h){-}\Risk_{\Scal}(h)|{-}\comp(h, \Scal)$ where $\Scal\sim\Dcal^m$ and $h\sim\AQ$.
Then, from the Dvoretzky–Kiefer–Wolfowitz inequality, we have with probability at least $1-\delta$ over $\Scal_1{\sim}\Dcal^m, \dots, \Scal_n{\sim}\Dcal^m$ and $h_1{\sim}\Q_{\Scal_1},\dots h_n{\sim}\Q_{\Scal_n}$
\begin{align*}
F(\epsilon) \ge \frac{1}{n}\sum_{i=1}^{n}\indic\Big[ |\Risk_{\Dcal}(h_i){-}\Risk_{\Scal_i}(h_i)|{-}\comp(h_i, \Scal_i) \le \epsilon \Big] - \sqrt{\frac{\ln\frac{2}{\delta}}{2n}}.
\end{align*}
Moreover, remark that we have
\begin{align*}
\frac{1}{n}\sum_{i=1}^{n}\indic\Big[ |\Risk_{\Dcal}(h_i){-}\Risk_{\Scal_i}(h_i)|{-}\comp(h_i, \Scal_i) \le \epsilon \Big] &= \frac{1}{n}\sum_{i=1}^{n}\Bigg(1-\indic\Big[ |\Risk_{\Dcal}(h_i){-}\Risk_{\Scal_i}(h_i)|{-}\comp(h_i, \Scal_i) > \epsilon \Big]\Bigg)\\
&= 1-\frac{1}{n}\sum_{i=1}^{n}\indic\Big[ |\Risk_{\Dcal}(h_i){-}\Risk_{\Scal_i}(h_i)|{-}\comp(h_i, \Scal_i) > \epsilon \Big].
\end{align*}
Finally, combining the equations gives the desired result. 
\end{proof}

Note that we improve their result by replacing  $\Risk_{\Dcal}(h){-}\Risk_{\Scal}(h){-}\comp(h, \Scal)$ with $|\Risk_{\Dcal}(h){-}\Risk_{\Scal}(h)|{-}\comp(h, \Scal)$ in the empirical cumulative distribution function, which improves the constant in the statistical term $\sqrt{\nicefrac{\ln\frac{2}{\delta}}{2n}}$.
This does not change the interpretation of their results. 
Indeed, while the term $\comp(h, \Scal) {+} \epsilon$ is computable, the probability term $1{-}\delta'(\epsilon)$ is not since $\Risk_{\Dcal}(h_i)$ is unknown (as it depends on the data distribution $\Dcal$).
This makes the overall bound uncomputable as the probability $\delta'(\epsilon)$ by which it stands is unknown.

\subsection[About Catoni {[2007]}'s Bound]{About \citet{catoni2007pac}'s Bound}
\label{sec:comparison-catoni}

\citet[Theorem 1.2.7]{catoni2007pac} proved the following disintegrated PAC-Bayesian bound; we give a proof for the sake of completeness.
\begin{lemma}\label{theorem:disintegrated-catoni}
For any distribution $\Dcal$ on $\Xcal{\times}\Ycal$, for any hypothesis set $\Hcal$, for any distribution $\P\!\in\!\Mcal(\Hcal)$, for any $\delta\!\in\!(0, 1]$, we have with probability at least $1-\delta$ over $\Scal\sim\Dcal^m$ and $h\sim\AQ$
\begin{align*}
\RiskLoss_{\Dcal}(h) \le \frac{1}{1-e^{-c}}\LC1-\exp\LP -c\RiskLoss_{\Scal}(h) - \frac{1}{m}\LB \ln\frac{\AQ(h)}{\P(h)} + \ln\frac{1}{\delta}\RB\RP\RC.
\end{align*}
\end{lemma}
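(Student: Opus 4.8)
The plan is to instantiate the general disintegrated bound of \Cref{theorem:general-disintegrated-rivasplata} with a Catoni-style choice of $\varphi$ tailored so that the exponential-moment term collapses to a constant. For the fixed parameter $c>0$ appearing in the statement, I would take
\[
\varphi(g,\Vcal) \defeq -cm\,\RiskLoss_{\Vcal}(g) - m\ln\!\LP 1-(1-e^{-c})\RiskLoss_{\Dcal}(g)\RP,
\]
whose first term produces the empirical risk $\RiskLoss_{\Scal}(h)$ on the left-hand side once we substitute $\Vcal=\Scal$ and $g=h$, while the second (deterministic in $\Vcal$) term is exactly the logarithm needed to later isolate the true risk $\RiskLoss_{\Dcal}(h)$.

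The crucial step is to show $\EE_{\Vcal\sim\Dcal^m}\EE_{g\sim\P}\,e^{\varphi(g,\Vcal)}\le 1$. First I would invoke the convexity of $a\mapsto e^{-ca}$ to get the chord bound $e^{-ca}\le 1-(1-e^{-c})a$ valid for all $a\in[0,1]$; applying it with $a=\loss(g,(\xbf,y))$ and taking expectation over a single example yields $\EE_{(\xbf,y)\sim\Dcal}\,e^{-c\loss(g,(\xbf,y))}\le 1-(1-e^{-c})\RiskLoss_{\Dcal}(g)$. Because $\RiskLoss_{\Vcal}(g)$ averages $m$ i.i.d.\ losses, the moment factorizes and
\[
\EE_{\Vcal\sim\Dcal^m}e^{-cm\RiskLoss_{\Vcal}(g)} = \LP\EE_{(\xbf,y)\sim\Dcal}\,e^{-c\loss(g,(\xbf,y))}\RP^{m}\le\LP 1-(1-e^{-c})\RiskLoss_{\Dcal}(g)\RP^{m}.
\]
The deterministic factor in $e^{\varphi(g,\Vcal)}$ carries precisely the reciprocal of this last quantity, so $\EE_{\Vcal}\,e^{\varphi(g,\Vcal)}\le 1$ for every $g$, and averaging over $g\sim\P$ preserves the bound.

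With $\EE_{\Vcal}\EE_{g}\,e^{\varphi(g,\Vcal)}\le 1$, \Cref{theorem:general-disintegrated-rivasplata} gives, with probability at least $1-\delta$ over $\Scal\sim\Dcal^m$ and $h\sim\AQ$,
\[
-cm\,\RiskLoss_{\Scal}(h) - m\ln\!\LP 1-(1-e^{-c})\RiskLoss_{\Dcal}(h)\RP \le \ln\frac{\AQ(h)}{\P(h)}+\ln\frac{1}{\delta}.
\]
The remaining work is purely algebraic: isolate the logarithm, exponentiate, and solve the resulting affine inequality in $\RiskLoss_{\Dcal}(h)$ to recover the claimed closed form. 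The one genuine obstacle is the choice of $\varphi$ itself---identifying the Catoni function $-\ln(1-(1-e^{-c})x)$ as the object that exactly cancels the i.i.d.\ moment; once this is in place, the rest follows from the chord inequality and routine rearrangement. I would also flag that the argument implicitly assumes $\loss\in[0,1]$, on which both the chord bound and the positivity of $1-(1-e^{-c})\RiskLoss_{\Dcal}(g)$ (needed for the logarithms to be well defined) rely.
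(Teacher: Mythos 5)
Your proposal is correct and follows essentially the same route as the paper: both instantiate \Cref{theorem:general-disintegrated-rivasplata} with the Catoni choice $\varphi(g,\Vcal)=m\bigl[-\ln(1-(1-e^{-c})\RiskLoss_{\Dcal}(g))-c\RiskLoss_{\Vcal}(g)\bigr]$ and then rearrange. The only difference is that you prove the exponential-moment bound $\EE_{\Vcal}\EE_{g}e^{\varphi(g,\Vcal)}\le 1$ directly via the chord inequality and i.i.d.\ factorization (correctly flagging the implicit assumption $\loss\in[0,1]$), whereas the paper obtains the same inequality by citing Fubini, \citet[Lemma 3]{maurer2004note}, and \citet[Corollary 2.2]{germain2009pac}.
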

\begin{proof}
We apply \Cref{theorem:general-disintegrated-rivasplata} with $\varphi(h, \Scal)= m\LB -\ln(1{-}\RiskLoss_{\Dcal}(h)\LB 1{-}e^{-c}\RB) - c\RiskLoss_{\Scal}(h)\RB$.
By rearranging the terms, we obtain 
\begin{align}
\RiskLoss_{\Dcal}(h) \le \frac{1}{1-e^{-c}}\LC1-\exp\LP -c\RiskLoss_{\Scal}(h) - \frac{1}{m}\LB \ln\frac{\AQ(h)}{\P(h)} + \ln\LP\frac{1}{\delta}\EE_{\Scal\sim\Dcal^m}\EE_{g\sim\P}e^{m\LB -\ln(1{-}\RiskLoss_{\Dcal}(h)[1{-}e^{-c}])-c\RiskLoss_{\Scal}(h)\RB}\RP\RB\RP\RC.\label{eq:proof-disintegrated-catoni-1}
\end{align}
Moreover, from Fubini's theorem, \citet[Lemma 3]{maurer2004note}, and \citet[Corollary 2.2]{germain2009pac}, we have 
\begin{align}
\EE_{\Scal\sim\Dcal^m}\EE_{g\sim\P}e^{m\LB -\ln(1{-}\RiskLoss_{\Dcal}(h)[1{-}e^{-c}]) - c\RiskLoss_{\Scal}(h)\RB} \le 1.\label{eq:proof-disintegrated-catoni-2}
\end{align}
Finally, by merging \Cref{eq:proof-disintegrated-catoni-1,eq:proof-disintegrated-catoni-2}, we have the stated result.
\end{proof}
Compared to the bounds that we provided in this paper, this one depends on a parameter $c>0$ that is fixed before seeing the learning sample $\Scal\sim\Dcal^m$ and the hypothesis $h\sim\AQ$. 
\citeauthor{catoni2007pac} applied \Cref{theorem:disintegrated-catoni} for a particular Gibbs distribution.
In the following, we provide a more general corollary.
To obtain \citeauthor{catoni2007pac}'s corollary, we have to fix $\comp(h, \Scal)=cm\RiskLoss_{\Scal}(h)-\ln\P(h)$.
\begin{corollary}\label{corollary:disintegrated-catoni-original}
For any distribution $\Dcal$ on $\Xcal{\times}\Ycal$, for any hypothesis set $\Hcal$, for any loss $\loss : \Hcal\times(\Xcal{\times}\Ycal) \to [0,1]$, for any prior distribution $\P\in\Mcal(\Hcal)$ on $\Hcal$, for any parametric function $\comp\!:\! \Hcal{\times}(\Xcal{\times}\Ycal)^m{\to}\R$, for any $\delta\!\in\!(0,1]$, with probability at least $1{-}\delta$ over $\Scal{\sim}\Dcal^m$, $h{\sim}\AQ$ we have
\begin{align*}
\RiskLoss_{\Dcal}(h) \le \frac{1}{1-e^{-c}}\LC1-\exp\LP -c\RiskLoss_{\Scal}(h) - \frac{1}{m}\LB -\comp(h, \Scal) -\ln\P(h) - \ln\LP\EE_{g\sim\P} \frac{1}{\P(g)}e^{-\comp(g, \Scal)}\RP + \ln\frac{1}{\delta}\RB\RP\RC,
\end{align*}
where $\AQ$ is the Gibbs distribution (see \Cref{eq:gibbs-distribution}).
\end{corollary}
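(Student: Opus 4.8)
The plan is to derive this corollary directly from Catoni's disintegrated bound (\Cref{theorem:disintegrated-catoni}) by substituting the specific Gibbs posterior $\AQ$ into the disintegrated KL term $\ln\frac{\AQ(h)}{\P(h)}$ and expanding it explicitly. Since \Cref{theorem:disintegrated-catoni} already holds for an arbitrary posterior $\AQ\in\Mcal(\Hcal)$, the only work is to rewrite $\ln\frac{\AQ(h)}{\P(h)}$ when $\AQ$ is the Gibbs distribution of \Cref{eq:gibbs-distribution}, keeping the $\ln\P(h)$ contribution as a separate term.

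First I would recall that the Gibbs distribution has density $\AQ(h)=\tfrac{1}{Z}\exp[-\comp(h,\Scal)]$ with normalization constant $Z=\int_{\Hcal}\exp[-\comp(g,\Scal)]\,d\lambda(g)$, where $\lambda$ is the reference measure on $\Hcal$. Then, mirroring exactly the computation carried out in the proof of \Cref{theorem:disintegrated-comp}, I would write
\begin{align*}
\ln\frac{\AQ(h)}{\P(h)} = -\comp(h,\Scal) - \ln Z - \ln\P(h),
\end{align*}
and re-express the normalizer as an expectation over the prior via the importance-sampling identity $Z=\int_{\Hcal}\tfrac{\P(g)}{\P(g)}\exp[-\comp(g,\Scal)]\,d\lambda(g)=\EE_{g\sim\P}\LB\tfrac{1}{\P(g)}e^{-\comp(g,\Scal)}\RB$. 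Substituting this back yields
\begin{align*}
\ln\frac{\AQ(h)}{\P(h)} = -\comp(h,\Scal) - \ln\P(h) - \ln\LP\EE_{g\sim\P}\frac{1}{\P(g)}e^{-\comp(g,\Scal)}\RP.
\end{align*}

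Finally I would plug this identity into the bracketed term of \Cref{theorem:disintegrated-catoni}, which immediately produces the claimed inequality. There is no genuine obstacle here: the entire content is the algebraic rewriting of the Gibbs density, which is identical to the corresponding step in the proof of \Cref{theorem:disintegrated-comp}. The only point requiring care is the bookkeeping—namely, keeping $\ln\P(h)$ split off from the $\ln Z$ term so that the $-\ln\P(h)$ and $-\ln\LP\EE_{g\sim\P}\tfrac{1}{\P(g)}e^{-\comp(g,\Scal)}\RP$ contributions appear separately, exactly as in the statement, rather than being merged into a single disintegrated KL expression.
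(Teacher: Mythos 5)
Your proposal is correct and follows essentially the same route as the paper: both start from \Cref{theorem:disintegrated-catoni} and expand the disintegrated KL term $\ln\frac{\AQ(h)}{\P(h)}$ by substituting the Gibbs density and rewriting the normalizer as $\EE_{g\sim\P}\bigl[\tfrac{1}{\P(g)}e^{-\comp(g,\Scal)}\bigr]$ via the importance-sampling identity, keeping $-\ln\P(h)$ as a separate term. The only cosmetic difference is that the paper passes through the intermediate form $-\comp(h,\Scal)-\ln\bigl(\EE_{g\sim\P}\tfrac{\P(h)}{\P(g)}e^{-\comp(g,\Scal)}\bigr)$ before splitting off $\ln\P(h)$, whereas you split it off immediately; the computation is identical.
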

\begin{proof}
Starting from \Cref{theorem:disintegrated-catoni}, we develop the disintegrated KL divergence $\ln\frac{\AQ(h)}{\P(h)}$ as in \Cref{theorem:disintegrated-comp}.
We have
\begin{align*} 
    \ln\frac{\AQ(h)}{\P(h)} &= -\comp(h, \Scal) - \ln\LP\EE_{g\sim\P} \frac{\P(h)}{\P(g)}e^{-\comp(g, \Scal)}\RP\\
    &= -\comp(h, \Scal) -\ln\P(h) - \ln\LP\EE_{g\sim\P} \frac{1}{\P(g)}e^{-\comp(g, \Scal)}\RP,
\end{align*}
which leads to the desired result.
\end{proof}

In its current form, the generalization bound presented in \Cref{corollary:disintegrated-catoni-original} is not computable because of the expectation $\EE_{g\sim\P} \frac{1}{\P(g)}\exp[-\comp(g, \Scal)]$.
In order to obtain a term that is computable, we can do the same trick as in \Cref{theorem:disintegrated-comp}.
This gives the following bound.
\begin{corollary}\label{corollary:disintegrated-catoni}
For any distribution $\Dcal$ on $\Xcal{\times}\Ycal$, for any hypothesis set $\Hcal$, for any loss $\loss : \Hcal\times(\Xcal{\times}\Ycal) \to [0,1]$, for any prior distribution $\P\in\Mcal(\Hcal)$ on $\Hcal$, for any parametric function $\comp\!:\! \Hcal{\times}(\Xcal{\times}\Ycal)^m{\to}\R$, for any $\delta\!\in\!(0,1]$, with probability at least $1{-}\delta$ over $\Scal{\sim}\Dcal^m$, $h'{\sim}\P$, $h{\sim}\AQ$ we have
\begin{align*}
\RiskLoss_{\Dcal}(h) \le \frac{1}{1-e^{-c}}\LC1-\exp\LP -c\RiskLoss_{\Scal}(h) - \frac{1}{m}\!\LB\comp(h'\!,\Scal) - \comp(h,\Scal) +\ln\frac{\P(h')}{\P(h)} + \ln\frac{4}{\delta^2}\RB\RP\RC,
\end{align*}
where $\AQ$ is the Gibbs distribution (see \Cref{eq:gibbs-distribution}).
\end{corollary}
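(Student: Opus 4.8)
The plan is to start from \Cref{corollary:disintegrated-catoni-original}, which is already established, and to render its single non-computable term computable by reusing \emph{verbatim} the Markov-inequality trick from the proof of \Cref{theorem:disintegrated-comp}. First I would invoke \Cref{corollary:disintegrated-catoni-original} with $\frac{\delta}{2}$ in place of $\delta$, so that the stated bound on $\RiskLoss_{\Dcal}(h)$ holds with probability at least $1-\frac{\delta}{2}$ over $\Scal\sim\Dcal^m$ and $h\sim\AQ$, the additive constant $\ln\frac{1}{\delta}$ becoming $\ln\frac{2}{\delta}$.

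The only obstacle to computability is the term $-\ln\P(h) - \ln\LP\EE_{g\sim\P}\frac{1}{\P(g)}e^{-\comp(g,\Scal)}\RP$. I would merge the two logarithms into $-\ln\LP\EE_{g\sim\P}\frac{\P(h)}{\P(g)}e^{-\comp(g,\Scal)}\RP$, which is precisely the quantity already controlled in the proof of \Cref{theorem:disintegrated-comp}. Applying Markov's inequality to the positive random variable $\frac{\P(h)}{\P(h')}e^{-\comp(h',\Scal)}$ over $h'\sim\P$ at confidence $\frac{\delta}{2}$ — that is, \Cref{eq:proof-theorem-disintegrated-comp-2} with $\frac{\delta}{2}$ substituted for $\delta$ — gives, with probability at least $1-\frac{\delta}{2}$ over $h'\sim\P$ and uniformly in $h$ and $\Scal$, the bound $-\ln\LP\EE_{g\sim\P}\frac{\P(h)}{\P(g)}e^{-\comp(g,\Scal)}\RP \le \ln\frac{2}{\delta} + \ln\frac{\P(h')}{\P(h)} + \comp(h',\Scal)$.

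Substituting this into the exponent of \Cref{corollary:disintegrated-catoni-original} collapses the bracketed expression to $\comp(h',\Scal) - \comp(h,\Scal) + \ln\frac{\P(h')}{\P(h)} + 2\ln\frac{2}{\delta}$, and since $2\ln\frac{2}{\delta} = \ln\frac{4}{\delta^2}$ this is exactly the claimed quantity. The one point needing care is the direction of the substitution: the outer map $x\mapsto\frac{1}{1-e^{-c}}\LP1-e^{-x}\RP$ is increasing for $c>0$, and increasing the bracketed term decreases the argument $-c\RiskLoss_{\Scal}(h) - \frac{1}{m}[\cdots]$, hence increases the right-hand side; thus replacing the non-computable bracket by a valid upper bound only weakens the inequality, keeping $\RiskLoss_{\Dcal}(h)$ below it.

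Finally I would combine the two high-probability events by a union bound: the event from \Cref{corollary:disintegrated-catoni-original} holds with probability at least $1-\frac{\delta}{2}$ over $(\Scal,h)$, while the Markov event holds with probability at least $1-\frac{\delta}{2}$ over $h'\sim\P$ for every fixed $(\Scal,h)$, so by independence of $h'$ from $(\Scal,h)$ both hold jointly with probability at least $1-\delta$ over $\Scal\sim\Dcal^m$, $h'\sim\P$, $h\sim\AQ$, which is the claimed event. The main obstacle here is purely bookkeeping — tracking the two $\frac{\delta}{2}$ splits so they coalesce into the $\ln\frac{4}{\delta^2}$ constant and verifying the monotonicity direction — rather than any conceptual difficulty, since both the Catoni-type starting bound and the Markov trick are already available in the excerpt.
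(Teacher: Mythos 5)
Your proposal is correct and follows essentially the same route as the paper's own proof: invoke \Cref{corollary:disintegrated-catoni-original} at confidence $\frac{\delta}{2}$, control the non-computable term $-\ln\LP\EE_{g\sim\P}\frac{\P(h)}{\P(g)}e^{-\comp(g,\Scal)}\RP$ via the Markov-inequality bound of \Cref{eq:proof-theorem-disintegrated-comp-2} at confidence $\frac{\delta}{2}$, and combine by a union bound to obtain the $\ln\frac{4}{\delta^2}$ constant. Your explicit check of the monotonicity of $x\mapsto\frac{1}{1-e^{-c}}(1-e^{-x})$ is a detail the paper leaves implicit but is indeed needed for the substitution to preserve the inequality.
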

\begin{proof}
We first consider \Cref{corollary:disintegrated-catoni-original} with $\delta/2$ instead of $\delta$.
This gives the following bound
\begin{align}
\PP_{\Scal\sim\Dcal^m,h\sim\AQ}\LB\RiskLoss_{\Dcal}(h) \le \frac{1}{1{-}e^{-c}}\LC1{-}\exp\LP\!-c\RiskLoss_{\Scal}(h){-}\frac{1}{m}\LB-\!\comp(h, \Scal){-}\ln\LP\EE_{g\sim\P} \frac{\P(h)}{\P(g)}e^{-\comp(g, \Scal)}\RP{+}\ln\frac{2}{\delta}\RB\RP\RC\RB\!\ge1{-}\frac{\delta}{2}.\label{eq:proof-disintegrated-catoni-original-1}
\end{align}
Then, we use \Cref{eq:proof-theorem-disintegrated-comp-2}, which tells us that 
\begin{align*}
    \PP_{h'\sim\P}\!\LB -\ln\!\LP\EE_{g\sim\P}\frac{\P(h)}{\P(g)}e^{-\comp(g, \Scal)}\RP \!\le \ln\frac{2}{\delta}{+}\ln\frac{\P(h')}{\P(h)}{+}\comp(h'\!, \Scal) \RB\! \ge 1{-}\frac{\delta}{2}.
\end{align*}
Finally, combining \Cref{eq:proof-disintegrated-catoni-original-1} with \Cref{eq:proof-theorem-disintegrated-comp-2} gives us the desired result.
\end{proof}

As we can remark, the bound of \Cref{corollary:disintegrated-catoni} depends on the same terms as \Cref{corollary:disintegrated-comp}.
Hence, in order to compare \Cref{corollary:disintegrated-catoni} and \Cref{corollary:disintegrated-comp}, we prove the following proposition.

\begin{proposition}\label{proposition:disintegrated-catoni-comp}
For any distribution $\Dcal$ on $\Xcal{\times}\Ycal$, for any hypothesis set $\Hcal$, for any loss $\loss : \Hcal\times(\Xcal{\times}\Ycal) \to [0,1]$, for any prior distribution $\P\in\Mcal(\Hcal)$ on $\Hcal$, for any parametric function $\comp\!:\! \Hcal{\times}(\Xcal{\times}\Ycal)^m{\to}\R$, for any $\delta\!\in\!(0,1]$, with probability at least $1{-}\delta$ over $\Scal{\sim}\Dcal^m$, $h'{\sim}\P$, $h{\sim}\AQ$ we have
    \begin{align*}
    \RiskLoss_{\Dcal}(h) &\le \inf_{c>0}\LC\frac{1}{1-e^{-c}}\LC1-\exp\LP -c\RiskLoss_{\Scal}(h) - \frac{1}{m}\!\LB\comp(h'\!,\Scal) - \comp(h,\Scal) +\ln\frac{\P(h')}{\P(h)} + \ln\frac{8\sqrt{m}}{\delta^2}\RB\RP\RC\RC\\
    &= \underbrace{\klmax\LB\RiskLoss_{\Scal}(h) \;\middle|\; \frac{1}{m}\!\LP\comp(h'\!,\Scal) - \comp(h,\Scal) +\ln\frac{\P(h')}{\P(h)} + \ln\frac{8\sqrt{m}}{\delta^2}\RP\,\RB}_{\text{Bound of \Cref{corollary:disintegrated-comp}}},
    \end{align*}
where $\AQ$ is the Gibbs distribution (see \Cref{eq:gibbs-distribution}).
\end{proposition}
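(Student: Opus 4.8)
The plan is to split the proposition into a high-probability inequality and a purely deterministic identity; only the latter requires real work, and it is also what produces the equality with the bound of \Cref{corollary:disintegrated-comp}.

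First I would deliberately \emph{not} start from \Cref{corollary:disintegrated-catoni}: its constant $\ln\tfrac{4}{\delta^2}$ differs from the $\ln\tfrac{8\sqrt m}{\delta^2}$ appearing here, and, more importantly, there the parameter $c$ is fixed before seeing $\Scal$ and $h$, so one cannot legitimately move an $\inf_{c>0}$ inside the probability. Instead I would start from \Cref{corollary:disintegrated-comp}, which gives, with probability at least $1-\delta$ over $\Scal\sim\Dcal^m$, $h'\sim\P$, $h\sim\AQ$,
\begin{align*}
\kl[\RiskLoss_{\Scal}(h)\|\RiskLoss_{\Dcal}(h)] \le B, \qquad B \defeq \tfrac{1}{m}\big[\comp(h'\!,\Scal)-\comp(h,\Scal)+\ln\tfrac{\P(h')}{\P(h)}+\ln\tfrac{8\sqrt m}{\delta^2}\big].
\end{align*}
Since $p\mapsto\kl(\RiskLoss_{\Scal}(h)\|p)$ is nondecreasing on $[\RiskLoss_{\Scal}(h),1)$, this inverts to $\RiskLoss_{\Dcal}(h)\le\klmax[\RiskLoss_{\Scal}(h)\,|\,B]$, which is exactly the last line of the statement (the bound of \Cref{corollary:disintegrated-comp}). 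This already settles the overall inequality $\RiskLoss_{\Dcal}(h)\le\cdots$ as soon as the middle equality is established.

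Second, I would prove the deterministic identity, valid for every $q\in[0,1)$ and $B\ge 0$,
\begin{align*}
\klmax[q\,|\,B] = \inf_{c>0}\frac{1-e^{-cq-B}}{1-e^{-c}}.
\end{align*}
The engine is the Legendre-type formula $\kl(q\|p)=\sup_{c>0}\{-cq-\ln(1-p+pe^{-c})\}$: differentiating $-cq-\ln(1-p+pe^{-c})$ in $c$, the stationary point is $c^\star=\ln\tfrac{p(1-q)}{q(1-p)}$, which is positive precisely when $q<p$, and substituting $e^{-c^\star}$ back recovers $q\ln\tfrac{q}{p}+(1-q)\ln\tfrac{1-q}{1-p}=\kl(q\|p)$. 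Writing $p=\klmax[q\,|\,B]$, so that $\kl(q\|p)=B$, the chain $\kl(q\|p)\ge -cq-\ln(1-p+pe^{-c})$ rearranges to $p\le\frac{1-e^{-cq-B}}{1-e^{-c}}$ for every $c>0$, with equality attained at $c=c^\star$; taking the infimum over $c$ then yields the identity. Matching $q=\RiskLoss_{\Scal}(h)$ and $B$ with the threshold above makes the infimum expression coincide termwise with the Catoni-type form stated in the proposition.

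The main obstacle is this second step: verifying the supremum representation of $\kl$, correctly identifying $c^\star$ together with its sign, and rearranging the stationarity relation into the stated infimum. I would additionally dispatch the degenerate cases separately — $q=\RiskLoss_{\Scal}(h)=0$, where the logarithms simplify, and the regime where $B$ is large enough that $\klmax[q\,|\,B]=1$, so that the constraint $p<1$ is inactive and the infimum is approached only in the limit — in order to make the identity hold across the full range.
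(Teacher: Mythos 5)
Your proposal is correct and follows essentially the same route as the paper: the high-probability event is exactly that of \Cref{corollary:disintegrated-comp}, and everything reduces to the deterministic identity between the $c$-optimized Catoni form and $\klmax$, which the paper simply imports by citing Theorem~3 of \citet{letarte2019dichotomize} with $\xi$ set to the threshold. Your variational derivation via $\kl(q\|p)=\sup_{c>0}\{-cq-\ln(1-p+pe^{-c})\}$ and the stationary point $c^\star=\ln\tfrac{p(1-q)}{q(1-p)}$ is precisely the content of that cited result, so you have merely made the paper's one-line proof self-contained.
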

\begin{proof}
We apply the same proof of \citet{letarte2019dichotomize}'s Theorem 3 where in our case their ``$\Lcal_{\Dcal}(G_{\theta})$'', ``$\Lcal_{\Scal}(G_{\theta})$'' are respectively $\RiskLoss_{\Dcal}(h)$ and $\RiskLoss_{\Scal}(h)$ and ``$\xi$'' is defined by $\xi\defeq\frac{1}{m}\!\LP\comp(h'\!,\Scal) - \comp(h,\Scal) +\ln\frac{\P(h')}{\P(h)} + \ln\frac{8\sqrt{m}}{\delta^2}\RP$.
\end{proof}

In other words, the bound of \Cref{corollary:disintegrated-comp} is a \citeauthor{catoni2007pac}-like bound where the parameter $c>0$ is optimized.
At first sight, the bound in \Cref{corollary:disintegrated-catoni} might appear slightly tighter than \Cref{corollary:disintegrated-comp} (in light of \Cref{proposition:disintegrated-catoni-comp}).
Indeed, \Cref{corollary:disintegrated-comp}'s bound has an additional cost of $\frac{\ln(2\sqrt{m})}{m}$, which is negligible for a large number of examples $m$. 
However, the parameter $c>0$ in \Cref{corollary:disintegrated-catoni} cannot be optimized since the bound holds for a fixed parameter.
In order to optimize the bound, the bound must hold for a set of parameters $c$.
This can be done through the union bound (that adds an additional cost to the bound).
Hence, in order for \Cref{corollary:disintegrated-catoni} to be tighter, the additional cost cannot be larger than $\frac{\ln(2\sqrt{m})}{m}$, which is challenging for large $m$.
Hence, for the experiments, we did not consider the bound of \Cref{corollary:disintegrated-catoni}, which is only as tight as \Cref{corollary:disintegrated-comp} or larger.

\subsection[About Equation (8)]{About \Cref{eq:comparaison-lever}}
\label{sec:comparison-lever}

\citet[Lemma 5]{lever2013tighter} proved a (classical) PAC-Bayesian bound on the expected risk with $\comp(h,\Scal)\!=\!\alpha\RiskLossp_{\Scal}(h)$, \ie, they proved a bound on $\kl[\EE_{h\sim\AQ}\RiskLoss_{\Scal}(h)\|\EE_{h\sim\AQ}\RiskLoss_{\Dcal}(h)]$.
For the sake of comparison, we prove the following disintegrated bound that is similar to the one of \citet{lever2013tighter}.
We consider this bound as a baseline for the experiments in \Cref{sec:experiments}.

\begin{theorem}
\label{theorem:disintegrated-lever}
For any distribution $\Dcal$ on $\Xcal{\times}\Ycal$, for any hypothesis set $\Hcal$, for any losses $\loss : \Hcal\times(\Xcal{\times}\Ycal) \to [0,1]$ and $\loss' : \Hcal\times(\Xcal{\times}\Ycal) \to [0,1]$, for any $\delta\!\in\!(0,1]$, with probability at least $1{-}\delta$ over $\Scal{\sim}\Dcal^m$, $h{\sim}\AQ$ we have
\eqcomparisonlever
where the posterior $\AQ$ and the prior $\P$ are defined respectively by $\AQ(h) \propto e^{-\alpha\RiskLossp_{\Scal}(h)}$ and $\P(h) \propto e^{-\alpha\RiskLossp_{\Dcal}(h)}$.
\end{theorem}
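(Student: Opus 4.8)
The plan is to reduce the claim to a high-probability bound on the single disintegrated ratio $\ln\tfrac{\AQ(h)}{\P(h)}$ between the two Gibbs measures, and then to exploit that the posterior $\AQ\propto e^{-\alpha\RiskLossp_{\Scal}}$ and the prior $\P\propto e^{-\alpha\RiskLossp_{\Dcal}}$ share the same inverse temperature $\alpha$. First I would invoke \Cref{theorem:general-disintegrated-rivasplata} with $\varphi(h,\Scal)=m\kl[\RiskLoss_{\Scal}(h)\|\RiskLoss_{\Dcal}(h)]$ (built from the loss $\loss$) at confidence level $\delta/3$, and bound the exponential-moment factor by Maurer's inequality, $\EE_{\Vcal\sim\Dcal^m}\EE_{g\sim\P}e^{m\kl[\RiskLoss_{\Vcal}(g)\|\RiskLoss_{\Dcal}(g)]}\le 2\sqrt m$. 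Since $\ln\tfrac{2\sqrt m}{\delta/3}=\ln\tfrac{6\sqrt m}{\delta}$, this already yields the last term of the bound and leaves $m\kl[\RiskLoss_{\Scal}(h)\|\RiskLoss_{\Dcal}(h)]\le\ln\tfrac{\AQ(h)}{\P(h)}+\ln\tfrac{6\sqrt m}{\delta}$, so everything hinges on the disintegrated ratio.

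I would then expand that ratio exactly as in the proof of \Cref{theorem:disintegrated-comp}: cancelling the shared partition structure gives $\ln\tfrac{\AQ(h)}{\P(h)}=\alpha[\RiskLossp_{\Dcal}(h)-\RiskLossp_{\Scal}(h)]-\ln\EE_{g\sim\P}e^{\alpha[\RiskLossp_{\Dcal}(g)-\RiskLossp_{\Scal}(g)]}$, where the $\loss'$-risks now appear because the Gibbs measures are built from $\loss'$. The log-partition term I would upper bound by Jensen's inequality, $-\ln\EE_{g\sim\P}e^{\alpha[\RiskLossp_{\Dcal}(g)-\RiskLossp_{\Scal}(g)]}\le\alpha\,\EE_{g\sim\P}[\RiskLossp_{\Scal}(g)-\RiskLossp_{\Dcal}(g)]$. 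Because $\P$ is independent of $\Scal$, this is $\alpha$ times a centred empirical average of $[0,1]$ quantities; controlling it by Maurer's inequality together with Markov (contributing another $2\sqrt m$ factor) and then Pinsker's inequality produces precisely the term $\sqrt{\tfrac{\alpha^2}{2m}\ln\tfrac{6\sqrt m}{\delta}}$.

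The main obstacle is the remaining piece $\alpha[\RiskLossp_{\Dcal}(h)-\RiskLossp_{\Scal}(h)]$ evaluated at the sampled $h\sim\AQ$: ordinary fixed-hypothesis concentration fails here since $\AQ$ itself depends on $\Scal$. The clean way I would handle this is to fold the term into the exponential moment from the outset, i.e.\ apply \Cref{theorem:general-disintegrated-rivasplata} with the augmented choice $\varphi(h,\Scal)=m\kl[\RiskLoss_{\Scal}(h)\|\RiskLoss_{\Dcal}(h)]+\alpha[\RiskLossp_{\Dcal}(h)-\RiskLossp_{\Scal}(h)]$, so that the $\alpha[\RiskLossp_{\Dcal}(h)-\RiskLossp_{\Scal}(h)]$ emerging from the expansion cancels, and one is left only with the ghost-sample moment $\EE_{\Vcal}\EE_{g\sim\P}e^{m\kl[\RiskLoss_{\Vcal}(g)\|\RiskLoss_{\Dcal}(g)]+\alpha[\RiskLossp_{\Dcal}(g)-\RiskLossp_{\Vcal}(g)]}$ for $g\sim\P$ independent of $\Vcal$. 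The sub-Gaussian Hoeffding bound $\EE_{\Vcal}e^{\alpha[\RiskLossp_{\Dcal}(g)-\RiskLossp_{\Vcal}(g)]}\le e^{\alpha^2/(8m)}$ is exactly what generates the $\tfrac{\alpha^2}{8m}$ term. The delicate point is that the $\kl$ factor and the $\loss'$-deviation factor depend on the same ghost sample $\Vcal$, so they cannot simply be multiplied: one must either linearise $\RiskLossp_{\Dcal}-\RiskLossp_{\Vcal}$ against $m\kl$ via Pinsker before taking the moment, or split the product by Cauchy--Schwarz, and it is here that the shared temperature $\alpha$ and Maurer's $2\sqrt m$ must be tracked carefully to recover the advertised constants.

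Finally I would take a union bound over the confidence events (splitting $\delta$ across the concentration steps, each contributing a $2\sqrt m$ from Maurer, whence the $\tfrac{6\sqrt m}{\delta}$), add the three resulting contributions $\tfrac{\alpha^2}{8m}$, $\sqrt{\tfrac{\alpha^2}{2m}\ln\tfrac{6\sqrt m}{\delta}}$ and $\ln\tfrac{6\sqrt m}{\delta}$, and divide by $m$ to obtain the stated inequality.
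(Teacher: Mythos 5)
Your overall architecture matches the paper's up to the decisive step: both proofs start from \Cref{theorem:general-disintegrated-rivasplata} instantiated with $m\kl$ and Maurer's bound (yielding the $\ln\frac{6\sqrt m}{\delta}$ term), both expand the ratio of the two Gibbs densities, and both control the log-partition term via Jensen plus a classical concentration argument to obtain $\sqrt{\frac{\alpha^2}{2m}\ln\frac{6\sqrt m}{\delta}}$. The divergence---and the gap---lies in how the remaining term $\alpha[\RiskLossp_{\Dcal}(h)-\RiskLossp_{\Scal}(h)]$ for the sampled $h\sim\AQ$ is handled. The paper bounds it by a \emph{second} application of the disintegrated bound (the Pinsker/McAllester form of \Cref{lemma:general-disintegrated-rivasplata-seeger-mcallester} applied to the loss $\loss'$), which reintroduces $\ln_{+}\frac{\AQ(h)}{\P(h)}$ on the right-hand side and produces a self-referential quadratic inequality; solving that quadratic is what generates the $\frac{\alpha^2}{8m}$ term. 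You instead propose to cancel the term by augmenting $\varphi$ and to extract $\frac{\alpha^2}{8m}$ from Hoeffding's lemma inside a single joint ghost moment $\EE_{\Vcal}\EE_{g\sim\P}\exp\!\big(m\kl[\RiskLoss_{\Vcal}(g)\|\RiskLoss_{\Dcal}(g)]+\alpha[\RiskLossp_{\Dcal}(g)-\RiskLossp_{\Vcal}(g)]\big)$.

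That joint moment is exactly where your argument breaks, and neither of the two escape routes you mention repairs it. Cauchy--Schwarz requires $\EE_{\Vcal}e^{2m\kl[\RiskLoss_{\Vcal}(g)\|\RiskLoss_{\Dcal}(g)]}$, and Maurer's $2\sqrt m$ bound does not survive doubling the exponent: already in the Bernoulli case $\Pbb[\RiskLoss_{\Vcal}(g)=k/m]$ behaves like $e^{-m\kl(k/m\|\RiskLoss_{\Dcal}(g))}/\sqrt{m}$, so the doubled moment contains terms of order $e^{m\kl(k/m\|\RiskLoss_{\Dcal}(g))}/\sqrt{m}$ and grows exponentially in $m$. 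The Pinsker linearisation fares no better: it relates $m\kl[\RiskLoss_{\Vcal}(g)\|\RiskLoss_{\Dcal}(g)]$ to the deviation of the $\loss$-risks, whereas the cross term involves the $\loss'$-risks, and the statement explicitly allows $\loss\neq\loss'$ (in the experiments $\loss$ is the 01-loss and $\loss'$ the bounded cross-entropy), so there is no inequality connecting the two exponents. You correctly flag this as ``the delicate point,'' but flagging it is not resolving it: as written the single-moment route does not yield the advertised constants, and one has to fall back on the paper's device of a second disintegrated bound on the $\loss'$-gap of the same sampled $h$ followed by the quadratic argument---which is also what accounts for the three-way $\delta/3$ split behind $\frac{6\sqrt m}{\delta}$, whereas your plan only involves two high-probability events.
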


Compared to the other bounds, \Cref{theorem:disintegrated-lever} does not depend on a parametric function $\comp$.
Instead, it depends only on the concentration parameter $\alpha\in\R$ and the number of examples $m$.
To obtain such a bound, the disintegrated KL divergence $\frac{\AQ(h)}{\P(h)}$ is upper-bounded. 
Hence, to prove \Cref{theorem:disintegrated-lever}, we first prove the following lemma (that is also inspired by \citeauthor{lever2013tighter}'s Lemma 4).

\begin{lemma}[Disintegrated version of \citeauthor{lever2013tighter}'s Lemma 4]\label{lemma:dis-kl-lever}
Given the posterior $\AQ$ and the prior $\P$ defined as $\AQ(h) \propto e^{-\comp(h, \Scal)}$ and $\P(h) \propto e^{-\omega(h)}$, we have the following upper-bound:
\begin{align*}
\forall h\in\Hcal,\quad \ln_{+}\!\frac{\AQ(h)}{\P(h)} \le \LB \omega(h)-\comp(h, \Scal) \RB_{+} +\LB\EE_{h'\sim\P} \comp(h', \Scal)-\omega(h') \RB_{+},
\end{align*}
where $[\cdot]_{+}\defeq\max(\cdot, 0)$ and $\ln_{+}(\cdot) \defeq [\ln(\cdot)]_{+}$.
\end{lemma}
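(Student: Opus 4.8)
The plan is to expand the log-ratio of the two Gibbs densities into a hypothesis-dependent term and a constant partition-function term, then bound each separately. Writing $Z_{\AQ}{=}\int_{\Hcal} e^{-\comp(g,\Scal)}d\lambda(g)$ and $Z_{\P}{=}\int_{\Hcal} e^{-\omega(g)}d\lambda(g)$ for the normalizers ($\lambda$ the reference measure on $\Hcal$), the densities are $\AQ(h){=}e^{-\comp(h,\Scal)}/Z_{\AQ}$ and $\P(h){=}e^{-\omega(h)}/Z_{\P}$, so that
\[
\ln\frac{\AQ(h)}{\P(h)} = \omega(h) - \comp(h,\Scal) + \ln\frac{Z_{\P}}{Z_{\AQ}}.
\]
First I would use the elementary subadditivity of the positive part, $[a{+}b]_+\le[a]_+{+}[b]_+$, which, since $\ln_{+}{=}[\ln(\cdot)]_+$, immediately gives
\[
\ln_{+}\frac{\AQ(h)}{\P(h)} \le \big[\omega(h){-}\comp(h,\Scal)\big]_+ + \Big[\ln\tfrac{Z_{\P}}{Z_{\AQ}}\Big]_+ .
\]
This already yields the first term of the claimed bound, and it remains only to control the partition-function term, which is independent of $h$.

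The key step is to rewrite the ratio of partition functions as an expectation under the prior $\P$ and then apply Jensen's inequality. Factoring $e^{-\omega(g)}/Z_{\P}{=}\P(g)$ out of the integrand gives
\[
\frac{Z_{\AQ}}{Z_{\P}} = \int_{\Hcal} e^{\omega(g)-\comp(g,\Scal)}\,\P(g)\,d\lambda(g) = \EE_{h'\sim\P}\, e^{\omega(h')-\comp(h',\Scal)}.
\]
By convexity of the exponential, $\EE_{h'\sim\P}e^{\omega(h')-\comp(h',\Scal)} \ge e^{\EE_{h'\sim\P}[\omega(h')-\comp(h',\Scal)]}$; taking logarithms and negating yields
\[
\ln\frac{Z_{\P}}{Z_{\AQ}} \le \EE_{h'\sim\P}\big[\comp(h',\Scal)-\omega(h')\big].
\]
Since $[\cdot]_+$ is monotone non-decreasing, applying it to both sides bounds the partition-function term by $\big[\EE_{h'\sim\P}\comp(h',\Scal)-\omega(h')\big]_+$, and combining this with the display above closes the argument.

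The computation is short, so I do not expect a serious obstacle; the one point requiring care is the direction of the Jensen bound. One needs a \emph{lower} bound on $Z_{\AQ}/Z_{\P}$ so that the logarithm of its reciprocal is \emph{upper}-bounded, and this is exactly what convexity of $\exp$ provides. A secondary subtlety worth flagging is that it is subadditivity of $\ln_{+}$ (via $[a{+}b]_+\le[a]_+{+}[b]_+$), rather than of $\ln$ itself, that lets the constant term be split off without any sign issue, and that monotonicity of $[\cdot]_+$ is what transfers the inequality on $\ln(Z_{\P}/Z_{\AQ})$ to its positive part.
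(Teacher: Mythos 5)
Your proof is correct and follows essentially the same route as the paper's: the same splitting of $\ln_{+}\frac{\AQ(h)}{\P(h)}$ via $[a+b]_{+}\le[a]_{+}+[b]_{+}$, the same rewriting of $Z_{\AQ}/Z_{\P}$ as $\EE_{h'\sim\P}e^{\omega(h')-\comp(h',\Scal)}$, and the same application of Jensen's inequality together with monotonicity of $[\cdot]_{+}$. No gaps; the one subtlety you flag (the direction of the Jensen bound) is handled correctly.
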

\begin{proof}
First of all, we denote as $Z_{\AQ}=\int_{\Hcal}\exp\left[-\comp(g, \Scal)\right]d\lambda(g)$ and $Z_{\P}=\int_{\Hcal}\exp\left[-\omega(g)\right]d\lambda(g)$, the normalization constant of the Gibbs distributions $\AQ$ and $\P$ respectively while $\lambda$ is the reference measure on $\Hcal$.
Then, we have
\begin{align}
\ln_{+}\!\frac{\AQ(h)}{\P(h)} &= \ln_{+}\!\frac{Z_{\P}e^{-\comp(h, \Scal)}}{Z_{\AQ}e^{-\omega(h)}}\nonumber\\
&\le \LB \omega(h)-\comp(h, \Scal) \RB_{+} + \ln_{+}\frac{Z_{\P}}{Z_{\AQ}}\label{eq:proof-dis-kl-lever-1}\\
&= \LB \omega(h)-\comp(h, \Scal) \RB_{+} + \max\LP \ln\frac{Z_{\P}}{Z_{\AQ}}, 0\RP\nonumber\\
&= \LB \omega(h)-\comp(h, \Scal) \RB_{+} + \max\LP - \ln\LP\frac{1}{Z_{\P}}\int_{\Hcal}e^{-\comp(g, \Scal)}d\lambda(g)\RP, 0\RP\nonumber\\
&= \LB \omega(h)-\comp(h, \Scal) \RB_{+} + \max\LP - \ln\LP\frac{1}{Z_{\P}}\int_{\Hcal}e^{\omega(g)}e^{-\omega(g)}e^{-\comp(g, \Scal)}d\lambda(g)\RP, 0\RP\nonumber\\
&= \LB \omega(h)-\comp(h, \Scal) \RB_{+} + \max\LP - \ln\LP\int_{\Hcal}\P(g)e^{\omega(g)-\comp(g, \Scal)}d\lambda(g)\RP, 0\RP\nonumber\\
&= \LB \omega(h)-\comp(h, \Scal) \RB_{+} + \max\LP - \ln\LP\EE_{h'\sim\P}e^{\omega(h')-\comp(h', \Scal)}\RP, 0\RP\nonumber\\
&\le \LB \omega(h)-\comp(h, \Scal) \RB_{+} + \max\LP - \EE_{h'\sim\P}\LB \omega(h')-\comp(h', \Scal) \RB, 0\RP\label{eq:proof-dis-kl-lever-2}\\
&= \LB \omega(h)-\comp(h, \Scal) \RB_{+} + \LB\EE_{h'\sim\P} \comp(h', \Scal)-\omega(h')\RB_{+},\nonumber
\end{align}
where \Cref{eq:proof-dis-kl-lever-1} is obtained thanks to the inequality $[a{+}b]_{+} \le [a]_{+}{+}[b]_{+}$ while \Cref{eq:proof-dis-kl-lever-2} holds thanks to Jensen's inequality and because $[\cdot]_{+}$ is monotonically increasing.
\end{proof}

Moreover, in order to prove \Cref{theorem:disintegrated-lever}, we need the following lemma, which is an application of \Cref{theorem:general-disintegrated-rivasplata} given by \citet{rivasplata2020pac}.
\begin{lemma}\label{lemma:general-disintegrated-rivasplata-seeger-mcallester}
For any $\Dcal$ on $\Xcal{\times}\Ycal$, for any hypothesis set $\Hcal$, for any loss $\loss : \Hcal\times(\Xcal{\times}\Ycal) \to [0,1]$, for any prior distribution $\P\in\Mcal(\Hcal)$ on $\Hcal$, for any $\delta\!\in\!(0,1]$, with probability at least $1{-}\delta$ over $\Scal{\sim}\Dcal^m$, $h{\sim}\AQ$ we have
\begin{align}
&\kl[\RiskLoss_{\Scal}(h) \|\RiskLoss_{\Dcal}(h)] \le \frac{1}{m}\!\LB \ln_{+}\!\frac{\AQ(h)}{\P(h)} + \ln\frac{2\sqrt{m}}{\delta}\RB,\label{eq:general-disintegrated-rivasplata-seeger}\\
\text{and}\quad&\LN\RiskLoss_{\Scal}(h)-\RiskLoss_{\Dcal}(h)\RN \le \sqrt{\frac{1}{2m}\!\LB \ln_{+}\!\frac{\AQ(h)}{\P(h)} + \ln\frac{2\sqrt{m}}{\delta}\RB},\label{eq:general-disintegrated-rivasplata-mcallester}
\end{align}
where $\AQ\in\Mcal(\Hcal)$ is a posterior distribution.
\end{lemma}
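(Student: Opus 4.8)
The plan is to specialize \Cref{theorem:general-disintegrated-rivasplata} to the deviation function $\varphi(h,\Scal) = m\,\kl[\RiskLoss_{\Scal}(h)\|\RiskLoss_{\Dcal}(h)]$, exactly as in the proof of \Cref{corollary:disintegrated-comp}. This single invocation produces one high-probability event: with probability at least $1-\delta$ over $\Scal\sim\Dcal^m$ and $h\sim\AQ$,
\begin{align*}
m\,\kl[\RiskLoss_{\Scal}(h)\|\RiskLoss_{\Dcal}(h)] \le \ln\frac{\AQ(h)}{\P(h)} + \ln\!\LB\frac{1}{\delta}\EE_{\Vcal\sim\Dcal^m}\EE_{g\sim\P}e^{m\,\kl[\RiskLoss_{\Vcal}(g)\|\RiskLoss_{\Dcal}(g)]}\RB.
\end{align*}
Everything that follows is deterministic given this event, so no further union bound is needed and the confidence stays at $1-\delta$.

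First I would control the moment-generating term precisely as done for \Cref{corollary:disintegrated-comp}: by Fubini's theorem and the bound of \citet{maurer2004note} (\Cref{eq:proof-corollary-disintegrated-comp-1,eq:proof-corollary-disintegrated-comp-2}), one has $\EE_{\Vcal}\EE_{g}\,e^{m\,\kl[\RiskLoss_{\Vcal}(g)\|\RiskLoss_{\Dcal}(g)]} \le 2\sqrt{m}$, so the constant term collapses to $\ln\tfrac{2\sqrt{m}}{\delta}$. The only genuinely new twist relative to the earlier corollaries is to replace the signed log-ratio by its positive part: since $\ln x \le [\ln x]_+ = \ln_{+} x$ for every $x>0$, dividing through by $m$ yields \Cref{eq:general-disintegrated-rivasplata-seeger}. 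This relaxation is harmless when $\AQ(h)\ge\P(h)$, and it is exactly what makes the bound chain cleanly with \Cref{lemma:dis-kl-lever}, whose right-hand side is itself expressed through $\ln_{+}$.

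Finally, to obtain \Cref{eq:general-disintegrated-rivasplata-mcallester} I would invoke Pinsker's inequality in its Bernoulli form, $2\,(\RiskLoss_{\Scal}(h)-\RiskLoss_{\Dcal}(h))^2 \le \kl[\RiskLoss_{\Scal}(h)\|\RiskLoss_{\Dcal}(h)]$, chain it with \Cref{eq:general-disintegrated-rivasplata-seeger}, and take square roots. I expect no real obstacle here, since the lemma is an assembly of ingredients already proved in the excerpt; the only steps requiring attention are the correct choice $\varphi = m\,\kl[\cdot\|\cdot]$ (so that the Maurer estimate applies verbatim) and the observation that the $\ln_{+}$ relaxation is both valid and necessary for the downstream combination. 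Once these are in place, the remaining manipulations are elementary.
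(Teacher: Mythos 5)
Your proposal is correct and follows exactly the paper's own proof: instantiate \Cref{theorem:general-disintegrated-rivasplata} with $\varphi(h,\Scal)=m\kl[\RiskLoss_{\Scal}(h)\|\RiskLoss_{\Dcal}(h)]$, bound the exponential moment by $2\sqrt{m}$ via Fubini and \citet{maurer2004note}, relax $\ln$ to $\ln_{+}$, and apply Pinsker's inequality for the second display. No meaningful differences to report.
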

\begin{proof}
We apply \Cref{theorem:general-disintegrated-rivasplata} with $\varphi(h,\Scal)=m\kl[\RiskLoss_{\Scal}(h) \|\RiskLoss_{\Dcal}(h)]$ to obtain 
\begin{align*}
\kl[\RiskLoss_{\Scal}(h) \|\RiskLoss_{\Dcal}(h)] &\le \frac{1}{m}\!\LB \ln\frac{\AQ(h)}{\P(h)} + \ln\!\left[\frac{1}{\delta} \EE_{\Vcal\!{\sim}\Dcal^m}\EE_{g{\sim}\P} e^{m\kl[\RiskLoss_{\Scal}(g) \|\RiskLoss_{\Dcal}(g)]}\right]\RB.
\end{align*}
From Fubini's theorem and \citet{maurer2004note}, we have
\begin{align}
\EE_{\Vcal\sim\Dcal^m}\EE_{g\sim\P}e^{m\kl\LB\RiskLoss_{\Vcal}(g) \|\RiskLoss_{\Dcal}(g)\RB} =  \EE_{g\sim\P}\EE_{\Vcal\sim\Dcal^m}e^{m\kl\LB\RiskLoss_{\Vcal}(g) \|\RiskLoss_{\Dcal}(g)\RB} \leq 2\sqrt{m}.
\end{align}
By definition of $\ln_{+}(\cdot)$, we have $\ln\frac{\AQ(h)}{\P(h)} \le \ln_{+}\frac{\AQ(h)}{\P(h)}$, which is \Cref{eq:general-disintegrated-rivasplata-seeger}.
Finally, thanks to Pinsker's inequality, we have $2(\RiskLoss_{\Scal}(h)-\RiskLoss_{\Dcal}(h))^2 \le \kl[\RiskLoss_{\Scal}(h) \|\RiskLoss_{\Dcal}(h)]$ and 
we obtain \Cref{eq:general-disintegrated-rivasplata-mcallester}.
\end{proof}

Thanks to \Cref{lemma:dis-kl-lever,lemma:general-disintegrated-rivasplata-seeger-mcallester}, we are now able to prove \Cref{theorem:disintegrated-lever}.
\begin{proof}[Proof of \Cref{theorem:disintegrated-lever}]
Starting from \Cref{lemma:general-disintegrated-rivasplata-seeger-mcallester} (and \Cref{eq:general-disintegrated-rivasplata-seeger}) with probability at least $1-\delta/3$ instead of $1-\delta$, we have
\begin{align}
\kl[\RiskLoss_{\Scal}(h) \|\RiskLoss_{\Dcal}(h)] &\le \frac{1}{m}\!\LB \ln_{+}\!\frac{\AQ(h)}{\P(h)} + \ln\!\frac{6\sqrt{m}}{\delta}\RB.\label{eq:proof-dis-kl-lever-risk-1}
\end{align}
From \Cref{lemma:dis-kl-lever}, we have
\begin{align}
\ln_{+}\!\frac{\AQ(h)}{\P(h)} \le \alpha\LB\RiskLossp_{\Dcal}(h)-\RiskLossp_{\Scal}(h)\RB_{+} + \alpha\LB\EE_{h'\sim\P} \RiskLossp_{\Scal}(h')-\RiskLossp_{\Dcal}(h') \RB_{+}.\label{eq:proof-dis-kl-lever-risk-2}
\end{align}
From \citep[Equation (4)]{maurer2004note} and Pinsker's inequality, we have with probability at least $1-\delta/3$ over $\Scal\sim\Dcal^m$
\begin{align}
\alpha\LB \EE_{h'\sim\P}\RiskLossp_{\Scal}(h')-\RiskLossp_{\Dcal}(h')\RB_{+} \le \alpha\LN\EE_{h'\sim\P}\RiskLossp_{\Scal}(h')-\RiskLossp_{\Dcal}(h')\RN \le \sqrt{\frac{\alpha^2}{2m}\ln\frac{6\sqrt{m}}{\delta}}.\label{eq:proof-dis-kl-lever-risk-3}
\end{align}
Moreover, from \Cref{lemma:general-disintegrated-rivasplata-seeger-mcallester} (and \Cref{eq:general-disintegrated-rivasplata-mcallester}), we can obtain with probability at least $1-\delta/3$ over $\Scal\sim\Dcal^m$ and $h\sim\AQ$
\begin{align}
\alpha\LB\RiskLossp_{\Dcal}(h)-\RiskLossp_{\Scal}(h)\RB_{+} 
\le \alpha\LN\RiskLossp_{\Scal}(h)-\RiskLossp_{\Dcal}(h)\RN \le \sqrt{\frac{\alpha^2}{2m}\LB\ln_{+}\!\frac{\AQ(h)}{\P(h)} + \ln\frac{6\sqrt{m}}{\delta} \RB}.\label{eq:proof-dis-kl-lever-risk-4}
\end{align}
From combining \Cref{eq:proof-dis-kl-lever-risk-2,eq:proof-dis-kl-lever-risk-3} with a union bound, we have with probability at least $1-2\delta/3$ over $\Scal\sim\Dcal^m$ and $h\sim\AQ$
\begin{align*}
&\ln_{+}\!\frac{\AQ(h)}{\P(h)} \le \sqrt{\frac{\alpha^2}{2m}\LB\ln_{+}\!\frac{\AQ(h)}{\P(h)} + \ln\frac{6\sqrt{m}}{\delta}\RB} + \sqrt{\frac{\alpha^2}{2m}\ln\frac{6\sqrt{m}}{\delta}}\\
\iff &\ln_{+}\!\frac{\AQ(h)}{\P(h)} + \ln\frac{6\sqrt{m}}{\delta} - \ln\frac{6\sqrt{m}}{\delta} \le \sqrt{\frac{\alpha^2}{2m}\LB\ln_{+}\!\frac{\AQ(h)}{\P(h)} + \ln\frac{6\sqrt{m}}{\delta}\RB} + \sqrt{\frac{\alpha^2}{2m}\ln\frac{6\sqrt{m}}{\delta}}\\
\iff &\ln_{+}\!\frac{\AQ(h)}{\P(h)} + \ln\frac{6\sqrt{m}}{\delta} - \ln\frac{6\sqrt{m}}{\delta} - \sqrt{\frac{\alpha^2}{2m}\LB\ln_{+}\!\frac{\AQ(h)}{\P(h)} + \ln\frac{6\sqrt{m}}{\delta}\RB} - \sqrt{\frac{\alpha^2}{2m}\ln\frac{6\sqrt{m}}{\delta}} \le 0.
\end{align*}
We obtain the upper-bound on $\ln_{+}\!\frac{\AQ(h)}{\P(h)}$ by solving the quadratic (in)equation
\begin{align*}
ax^2+bx+c \le 0 \quad\text{such that}\quad x\in\R^{+} \quad\text{with}\quad a=1, \quad b=-\sqrt{\frac{\alpha^2}{2m}}, \quad\text{and}\quad c=-\ln\frac{6\sqrt{m}}{\delta}-\sqrt{\frac{\alpha^2}{2m}\ln\frac{6\sqrt{m}}{\delta}}.
\end{align*}
Hence solving the quadratic (in)equation gives 
\begin{align*}
x \in \LB 0, \sqrt{\frac{\alpha^2}{8m}+\ln\frac{6\sqrt{m}}{\delta}+ \sqrt{\frac{\alpha^2}{2m}\ln\frac{6\sqrt{m}}{\delta}}}-\sqrt{\frac{\alpha^2}{8m}}\RB.
\end{align*}
Hence, we can deduce that  
\begin{align*}
\sqrt{\ln_{+}\!\frac{\AQ(h)}{\P(h)} + \ln\frac{6\sqrt{m}}{\delta}} &\le \sqrt{\frac{\alpha^2}{8m}+\ln\frac{6\sqrt{m}}{\delta}+ \sqrt{\frac{\alpha^2}{2m}\ln\frac{6\sqrt{m}}{\delta}}}-\sqrt{\frac{\alpha^2}{8m}}\\
&\le \sqrt{\frac{\alpha^2}{8m}+\ln\frac{6\sqrt{m}}{\delta}+ \sqrt{\frac{\alpha^2}{2m}\ln\frac{6\sqrt{m}}{\delta}}}
\end{align*}
and
\begin{align}
\ln_{+}\!\frac{\AQ(h)}{\P(h)} \le \frac{\alpha^2}{8m} + \sqrt{\frac{\alpha^2}{2m}\ln\frac{6\sqrt{m}}{\delta}}.\label{eq:proof-dis-kl-lever-risk-5}
\end{align}
Combining \Cref{eq:proof-dis-kl-lever-risk-1,eq:proof-dis-kl-lever-risk-5} gives the desired result.
\end{proof}

Note that the proof technique differs from the one of \citet{lever2013tighter} because we have to use two disintegrated PAC-Bayesian bounds and one classical PAC-Bayesian bound instead of only one classical PAC-Bayesian bound.
Indeed, since the disintegrated bounds are valid only for one posterior distribution, we have to use one bound to obtain \Cref{eq:proof-dis-kl-lever-risk-1} and one, in \Cref{eq:proof-dis-kl-lever-risk-4}, that serves to upper-bound the disintegrated KL divergence.
The classical PAC-Bayesian bound in \Cref{eq:proof-dis-kl-lever-risk-3} serves to upper-bound the second term for the disintegrated KL divergence. 

\subsection[About Equation (9)]{About \Cref{eq:comparaison-dziugaite}}
\label{sec:comparison-dziugaite}

More recently, \citet[Theorem 4.2]{dziugaite2018data} proved a (classical) PAC-Bayesian bound on the expected risk with $\comp(h,\Scal)\!=\!\alpha\RiskLoss_{\Scal}(h)$ and considers data-dependent priors obtained from a $\epsilon$-differentially private mechanism.
However, their proof relies on the \textit{approximate max-information}~\citep{dwork2015generalization} that we cannot straightforwardly adapt to the disintegrated setting.
Instead, our proof is based on the definition of $\epsilon$-differential privacy (given by \citet[Section III]{mironov2017renyi}).

\begin{definition}\label{def:differentially-private}
A randomized mechanism $\P$ is $\epsilon$-differentially private if and only if for any learning samples $\Tcal'$ and $\Tcal$ differing from one example we have
\begin{align*}
D_{\infty}(\P_{\Tcal'}\|\P_{\Tcal}) \defeq \ln\LP\esssup_{h\sim \P_{\Tcal'}}\frac{\P_{\Tcal'}(h)}{\P_{\Tcal}(h)}\RP \le \epsilon.
\end{align*}
\end{definition}

Put into words, a randomized mechanism (\ie, the sampling from the prior $\P$) is $\epsilon$-differentially private if the ratio between the densities obtained from the two learning samples $\Tcal'$ and $\Tcal$ (differing from one point) is bounded by $\epsilon$.
Intuitively, the two densities must be close when the learning samples $\Tcal$ and $\Tcal'$ differ from only one example. 

From the definition, we are able to prove the following bound.
\begin{lemma}\label{lemma:general-disintegrated-differential-privacy}
For any distribution $\Dcal$ on $\Xcal{\times}\Ycal$, for any $\epsilon$-differentially private randomized mechanism $\P$, for any measurable function $\varphi: \Hcal\times(\Xcal{\times}\Ycal)^m\to \R$, for any $\delta\!\in\!(0, 1]$, we have with probability at least $1-\delta$ over $\Scal'\sim\Dcal^m$, $\Scal\sim\Dcal^m$ and $h\sim\AQ$
\begin{align*}
\varphi(h,\Scal) \le \ln\frac{\AQ(h)}{\P_{\Scal}(h)}+ m\epsilon + \ln\!\left[\frac{1}{\delta}\EE_{\Vcal\sim\Dcal^m}\EE_{g\sim\P_{\Scal'}}e^{\varphi(g,\Vcal)}\right].
\end{align*}
\end{lemma}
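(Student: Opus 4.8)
The plan is to mimic the proof of \Cref{theorem:general-disintegrated-rivasplata}, which rests on a change-of-measure identity followed by Markov's inequality, and to insert \emph{group} differential privacy as the one new ingredient needed to convert the data-dependent prior $\P_{\Scal}$ (which appears in the slack) into the independent prior $\P_{\Scal'}$ (which appears in the moment term). First I would fix $\Scal$ and use that $\AQ$ is strictly positive to run the disintegration change of measure on the exponentiated slack:
\begin{align*}
\EE_{h\sim\AQ}\exp\!\LP\varphi(h,\Scal)-\ln\tfrac{\AQ(h)}{\P_{\Scal}(h)}\RP = \EE_{h\sim\AQ}\tfrac{\P_{\Scal}(h)}{\AQ(h)}e^{\varphi(h,\Scal)} = \EE_{g\sim\P_{\Scal}}e^{\varphi(g,\Scal)}.
\end{align*}
Taking $\EE_{\Scal\sim\Dcal^m}$ then shows that the deterministic quantity $A\defeq\EE_{\Scal}\EE_{g\sim\P_{\Scal}}e^{\varphi(g,\Scal)}$ is the mean of the exponentiated slack $Z\defeq\exp(\varphi(h,\Scal)-\ln\frac{\AQ(h)}{\P_{\Scal}(h)}-m\epsilon)$ up to the $e^{-m\epsilon}$ factor, i.e.\ $\EE_{\Scal,h}Z=e^{-m\epsilon}A$.

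Next I would replace $\P_{\Scal}$ by $\P_{\Scal'}$. \Cref{def:differentially-private} gives $\P_{\Tcal'}(g)\le e^{\epsilon}\P_{\Tcal}(g)$ whenever $\Tcal',\Tcal$ differ in one example; chaining this over the at most $m$ positions in which two size-$m$ samples can disagree yields the group-privacy inequality $\P_{\Scal}(g)\le e^{m\epsilon}\P_{\Scal'}(g)$ for all $g$ and all pairs $\Scal,\Scal'$. Multiplying by $e^{\varphi(g,\Scal)}\ge 0$ and integrating in $g$ gives, for every fixed $\Scal'$, the inequality $\EE_{g\sim\P_{\Scal}}e^{\varphi(g,\Scal)}\le e^{m\epsilon}\EE_{g\sim\P_{\Scal'}}e^{\varphi(g,\Scal)}$. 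Taking $\EE_{\Scal}$ and using that $\Scal$ is independent of $\Scal'$ (so the inner sample decouples from the prior and may be renamed $\Vcal$) yields, pointwise in $\Scal'$,
\begin{align*}
e^{-m\epsilon}A \le \EE_{\Vcal\sim\Dcal^m}\EE_{g\sim\P_{\Scal'}}e^{\varphi(g,\Vcal)} \defeq W(\Scal').
\end{align*}

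Finally I would close with Markov's inequality. Since $W(\Scal')\ge e^{-m\epsilon}A$ holds surely, the failure event $\{Z>\frac{1}{\delta}W(\Scal')\}$ is contained in $\{Z>\frac{1}{\delta}e^{-m\epsilon}A\}$, and the latter has probability at most $\delta$ by Markov because $\EE_{\Scal,h}Z=e^{-m\epsilon}A$ and $Z$ does not depend on $\Scal'$. Taking logarithms in $Z\le\frac{1}{\delta}W(\Scal')$ then recovers the claimed bound. The main obstacle is conceptual rather than computational: unlike in \Cref{theorem:general-disintegrated-rivasplata}, the moment term $W(\Scal')$ is now a \emph{random} object depending on $\Scal'$, so one cannot simply divide by it inside Markov; the fix is to first replace it by the deterministic lower bound $e^{-m\epsilon}A$, which is precisely where group privacy (to pass from $\P_{\Scal}$ to $\P_{\Scal'}$ at cost $m\epsilon$) and the independence of $\Scal'$ from $(\Scal,h)$ (to decouple $\Vcal$ and justify the pointwise comparison) are both indispensable.
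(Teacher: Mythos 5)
Your proof is correct, and it reaches the lemma by a genuinely different decomposition than the paper's. The paper treats the statement as a two-line corollary of \Cref{theorem:general-disintegrated-rivasplata}: it applies that theorem with the ghost-sample prior $\P_{\Scal'}$ (legitimate because $\Scal'$ is independent of $\Scal$), so the moment term already involves $\P_{\Scal'}$, and then pays the $m\epsilon$ inside the \emph{disintegrated KL term}, via the interpolation $\Scal^{(0)},\dots,\Scal^{(m)}$ giving $\ln\frac{\AQ(h)}{\P_{\Scal'}(h)} \le \ln\frac{\AQ(h)}{\P_{\Scal}(h)} + m\epsilon$ pointwise in $h$. You instead re-run the change-of-measure-plus-Markov argument from scratch with the data-dependent prior $\P_{\Scal}$ in the density ratio, and pay the $m\epsilon$ inside the \emph{moment term}, using the same group-privacy chaining in the form $\P_{\Scal}(g)\le e^{m\epsilon}\P_{\Scal'}(g)$ under the integral. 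The two uses of group privacy are dual to one another (numerator versus denominator of the prior density), and both rest on the same mild $\esssup$/almost-everywhere caveat in \Cref{def:differentially-private}, which is harmless in your version since the inequality is only used inside an integral. What your route buys is transparency about the role of $\Scal'$: you make explicit that the bound $W(\Scal')\ge e^{-m\epsilon}A$ holds surely, so the probability over $\Scal'$ in the statement is not actually consumed, and you correctly identify and resolve the one genuine obstacle (that $W(\Scal')$ is random, so Markov must be applied to the deterministic lower bound $e^{-m\epsilon}A$ rather than to $W(\Scal')$ directly). What the paper's route buys is brevity and reuse of \Cref{theorem:general-disintegrated-rivasplata} as a black box. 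Both are sound and yield the identical bound.
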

\begin{proof}
First of all, note that we can apply \Cref{theorem:general-disintegrated-rivasplata} with the data-dependent prior $\P_{\Scal'}$ depending on the ghost sample $\Scal'$.
Indeed, we have with probability at least $1-\delta$ over $\Scal'\sim\Dcal^m$, $\Scal\sim\Dcal^m$ and $h\sim\AQ$
\begin{align}
\varphi(h,\Scal) \le \ln\frac{\AQ(h)}{\P_{\Scal'}(h)}\!+\!\ln\!\left[\frac{1}{\delta}\EE_{\Vcal\sim\Dcal^m}\EE_{g\sim\P_{\Scal'}}e^{\varphi(g,\Vcal)}\right].\label{eq:proof-general-disintegrated-differential-privacy-1}
\end{align}
Let's denote by $\Scal^{(i)}$ the learning sample $\Scal$ such that the examples from index $1$ to $i$ have been replaced by the examples coming from the learning sample $\Scal'$.
By convention, we have thus $\Scal^{(0)}=\Scal$ and $\Scal^{(m)}=\Scal'$.
We can hence upper-bound the disintegrated KL divergence by 
\begin{align}
\ln\frac{\AQ(h)}{\P_{\Scal'}(h)} &= \ln\frac{\AQ(h)}{\P_{\Scal^{(m)}}(h)}\nonumber\\
&\le \ln\frac{\AQ(h)}{\P_{\Scal^{(m-1)}}(h)} + \epsilon\nonumber\\
&\dotsb\nonumber\\
&\le \ln\frac{\AQ(h)}{\P_{\Scal^{(0)}}(h)} + m\epsilon\nonumber\\
&= \ln\frac{\AQ(h)}{\P_{\Scal}(h)} + m\epsilon.\label{eq:proof-general-disintegrated-differential-privacy-2}
\end{align}
By combining \Cref{eq:proof-general-disintegrated-differential-privacy-1,eq:proof-general-disintegrated-differential-privacy-2}, we obtain the stated result.
\end{proof}

\Cref{lemma:general-disintegrated-differential-privacy} can be interpreted as a special case of \Cref{theorem:general-disintegrated-rivasplata} where $\P$ is a $\epsilon$-differentially private randomized mechanism.
Note that the bound is also in probability over $\Scal'\sim\Dcal^{m}$, which is a ghost sample (that we do not have in practice).
However, it is not problematic since \Cref{eq:comparaison-dziugaite} does not depend explicitly on $\Scal'\sim\Dcal^{m}$.
In order to prove further \Cref{eq:comparaison-dziugaite}, we now specialize \Cref{lemma:general-disintegrated-differential-privacy} to obtain a bound with a parametric function $\comp$ and a $\epsilon$-differentially private randomized mechanism $\P$.

\begin{theorem}\label{theorem:general-disintegrated-differential-privacy}
Let $\loss: \Hcal{\times}(\Xcal{\times}\Ycal){\to}\R$ be a loss function and $\phi\!:\! \R^2{\to}\R$ be a generalization gap.
For any distribution $\Dcal$ on $\Xcal{\times}\Ycal$, for any hypothesis set $\Hcal$, for any $\epsilon$-differentially private randomized mechanism $\P$, for any parametric function $\comp\!:\! \Hcal{\times}(\Xcal{\times}\Ycal)^m{\to}\R$, for any $\delta\!\in\!(0, 1]$, we have with probability at least $1-\delta$ over $\Scal'\sim\Dcal^m$, $\Scal\sim\Dcal^m$, $h'\sim\P_{\Scal}$ and $h\sim\AQ$
\begin{align*}
    \phi(\RiskLoss_{\Dcal}(h), \RiskLoss_{\Scal}(h)) &\le \comp(h'\!,\Scal) - \comp(h,\Scal) + \ln\frac{\P_{\Scal}(h')}{\P_{\Scal}(h)} + m\epsilon + \ln\!\left[\frac{4}{\delta^2} \EE_{\Vcal\!{\sim}\Dcal^m}\EE_{g{\sim}\P_{\Scal'}} e^{\phi(\RiskLoss_{\Dcal}(g),\RiskLoss_{\Vcal}(g))}\right],
\end{align*}
where $\AQ$ is the Gibbs distribution (see \Cref{eq:gibbs-distribution}).
\end{theorem}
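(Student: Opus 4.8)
The plan is to mirror the proof of \Cref{theorem:disintegrated-comp} almost verbatim, substituting the $\epsilon$-differentially private statement \Cref{lemma:general-disintegrated-differential-privacy} for the appeal to \Cref{theorem:general-disintegrated-rivasplata}. First I would apply \Cref{lemma:general-disintegrated-differential-privacy} with $\delta/2$ in place of $\delta$ and with the measurable function $\varphi(h,\Scal)=\phi(\RiskLoss_{\Dcal}(h),\RiskLoss_{\Scal}(h))$, which yields, with probability at least $1-\tfrac{\delta}{2}$ over $\Scal'\sim\Dcal^m$, $\Scal\sim\Dcal^m$, and $h\sim\AQ$,
\begin{align*}
\phi(\RiskLoss_{\Dcal}(h),\RiskLoss_{\Scal}(h)) \le \ln\frac{\AQ(h)}{\P_{\Scal}(h)} + m\epsilon + \ln\!\LB\frac{2}{\delta}\EE_{\Vcal\sim\Dcal^m}\EE_{g\sim\P_{\Scal'}}e^{\phi(\RiskLoss_{\Dcal}(g),\RiskLoss_{\Vcal}(g))}\RB.
\end{align*}
The extra additive term $m\epsilon$ and the fact that the exponential-moment factor is taken under the ghost prior $\P_{\Scal'}$ are precisely the features inherited from the differentially private telescoping step inside \Cref{lemma:general-disintegrated-differential-privacy}; everything else is unchanged from the non-private argument.

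Second, I would expand the disintegrated KL term exactly as in \Cref{sec:proof-disintegrated-comp}. Writing $Z=\int_{\Hcal}\exp[-\comp(g,\Scal)]d\lambda(g)$ for the normalization constant of $\AQ$ and inserting $\P_{\Scal}(g)/\P_{\Scal}(g)$ under the integral defining $Z$, one obtains
\begin{align*}
\ln\frac{\AQ(h)}{\P_{\Scal}(h)} = -\comp(h,\Scal) - \ln\!\LP\EE_{g\sim\P_{\Scal}}\frac{\P_{\Scal}(h)}{\P_{\Scal}(g)}e^{-\comp(g,\Scal)}\RP.
\end{align*}
Since $\P_{\Scal}$ is a fixed density once $\Scal$ is fixed, this identity goes through with the data-dependent prior exactly as with a fixed prior.

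Third, I would control the $-\ln$ term by Markov's inequality, read conditionally on $(\Scal,h)$. For every fixed $h$ and $\Scal$ the quantity $\tfrac{\P_{\Scal}(h)}{\P_{\Scal}(h')}e^{-\comp(h',\Scal)}$ is positive, so Markov over $h'\sim\P_{\Scal}$ gives, with probability at least $1-\tfrac{\delta}{2}$,
\begin{align*}
-\ln\!\LP\EE_{g\sim\P_{\Scal}}\frac{\P_{\Scal}(h)}{\P_{\Scal}(g)}e^{-\comp(g,\Scal)}\RP \le \ln\frac{2}{\delta} + \ln\frac{\P_{\Scal}(h')}{\P_{\Scal}(h)} + \comp(h',\Scal),
\end{align*}
which is the analogue of \Cref{eq:proof-theorem-disintegrated-comp-2}. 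As this holds for every fixed $(\Scal,h)$, it also holds with the same probability under the joint law of $(\Scal',\Scal,h,h')$ with $h'\sim\P_{\Scal}$. A union bound over the two $\tfrac{\delta}{2}$-events, followed by substituting the expansion and rearranging, then yields the claimed inequality.

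The only genuinely delicate point is the bookkeeping around the data dependence of the prior: $\P_{\Scal}$ appears in the disintegrated KL and log-ratio terms while the ghost prior $\P_{\Scal'}$ appears inside the exponential-moment term, and the Markov step must be treated as a conditional statement given $(\Scal,h)$ before being integrated against the outer randomness. Once one observes that the $\epsilon$-DP bound of \Cref{lemma:general-disintegrated-differential-privacy} has already absorbed the mismatch between $\P_{\Scal}$ and $\P_{\Scal'}$ into the additive $m\epsilon$ term, the remaining manipulations are identical to those carried out for \Cref{theorem:disintegrated-comp}.
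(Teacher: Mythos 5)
Your proposal is correct and follows essentially the same route as the paper: apply \Cref{lemma:general-disintegrated-differential-privacy} with $\delta/2$, expand the disintegrated KL with the normalization constant of $\AQ$, control the resulting $-\ln$ term by Markov's inequality over $h'\sim\P_{\Scal}$ for fixed $(\Scal,h)$, and conclude by a union bound. The points you flag as delicate (the ghost prior $\P_{\Scal'}$ in the exponential-moment term versus $\P_{\Scal}$ in the KL and log-ratio terms, and the conditional reading of the Markov step) are handled the same way in the paper's proof.
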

\begin{proof}
Starting from \Cref{lemma:general-disintegrated-differential-privacy}, we follow the same steps as for \Cref{theorem:disintegrated-comp} to obtain the result.
Indeed, we first develop the term $\ln\frac{\AQ(h)}{\P_{\Scal}(h)}$ to have 
\begin{align*} 
    \ln\frac{\AQ(h)}{\P_{\Scal}(h)} &= -\comp(h, \Scal) - \ln\LP\EE_{g\sim\P_{\Scal}} \frac{\P_{\Scal}(h)}{\P_{\Scal}(g)}e^{-\comp(g, \Scal)}\RP.
\end{align*}
Hence, we obtain the following inequality
\begin{align}
    &\PP_{\Scal\sim\Dcal^m,h\sim\AQ}\!\Bigg[ \phi(\RiskLoss_{\Dcal}(h), \RiskLoss_{\Scal}(h)) \le \ln\!\LB\frac{2}{\delta}\EE_{\Vcal\sim\Dcal^m}\EE_{g\sim\P_{\Scal'}}e^{\phi(\RiskLoss_{\Dcal}(g), \RiskLoss_{\Vcal}(g))}\RB\!{-}\comp(h, \Scal) {-}\ln\!\LP{\textstyle\EE_{g\sim\P_{\Scal}} \frac{\P_{\Scal}(h)}{\P_{\Scal}(g)}}e^{-\comp(g, \Scal)}\RP\Bigg]\! \ge 1-\frac{\delta}{2}.\label{eq:proof-disintegrated-comp-1}
\end{align}
We can now upper-bound the term $-\ln\LP{\textstyle\EE_{g\sim\P_{\Scal}} \frac{\P_{\Scal}(h)}{\P_{\Scal}(g)}}e^{-\comp(g, \Scal)}\RP$.
To do so, since $\frac{\P_{\Scal}(h)}{\P_{\Scal}(h')}e^{-\comp(h', \Scal)} > 0$, we apply Markov's inequality to obtain
\begin{align*}
    \forall h\in\Hcal,\quad \forall\Scal\in(\Xcal{\times}\Ycal)^{m},\quad \PP_{h'\sim\P_{\Scal}}\Bigg[\frac{\P_{\Scal}(h)}{\P_{\Scal}(h')}e^{-\comp(h', \Scal)} \le \frac{2}{\delta}\EE_{g\sim\P_{\Scal}}\frac{\P_{\Scal}(h)}{\P(g)}e^{-\comp(g, \Scal)}\Bigg]\ge 1-\frac{\delta}{2}&\\
    \iff  \PP_{h'\sim\P_{\Scal}}\Bigg[-\ln\LP\EE_{g\sim\P_{\Scal}}\frac{\P_{\Scal}(h)}{\P_{\Scal}(g)}e^{-\comp(g, \Scal)}\RP \le \ln\frac{2}{\delta} -\ln\LP\frac{\P_{\Scal}(h)}{\P_{\Scal}(h')}e^{-\comp(h', \Scal)}\RP\Bigg]\ge 1{-}\delta.&
\end{align*}
Moreover, by simplifying the right-hand side of the inequality, we have 
\begin{align*}
    -\ln\LP\frac{\P_{\Scal}(h)}{\P_{\Scal}(h')}e^{-\comp(h'\!, \Scal)}\RP &= \ln\frac{\P_{\Scal}(h')}{\P_{\Scal}(h)} +\comp(h'\!, \Scal).
\end{align*}
Hence, we obtain the following inequality
\begin{align}
    \PP_{h'\sim\P_{\Scal}}\!\LB -\ln\!\LP\EE_{g\sim\P_{\Scal}}\frac{\P_{\Scal}(h)}{\P_{\Scal}(g)}e^{-\comp(g, \Scal)}\RP \!\le \ln\frac{2}{\delta}+\ln\frac{\P_{\Scal}(h')}{\P_{\Scal}(h)} +\comp(h'\!, \Scal) \RB\! \ge 1{-}\frac{\delta}{2}.\label{eq:proof-disintegrated-comp-2}
\end{align}
By using a union bound on \Cref{eq:proof-disintegrated-comp-1,eq:proof-disintegrated-comp-2} and rearranging the terms, we obtain the claimed result.
\end{proof}

We are now able to prove the bound stated in \Cref{eq:comparaison-dziugaite}.

\begin{corollary}\label{corollary:disintegrated-differential-privacy}
For any distribution $\Dcal$ on $\Xcal{\times}\Ycal$, for any hypothesis set $\Hcal$, for any losses $\loss: \Hcal{\times}(\Xcal{\times}\Ycal){\to}\R$ and $\loss': \Hcal{\times}(\Xcal{\times}\Ycal){\to}\R$, for any $\alpha,\alpha'\ge0$, for any $\delta\in(0, 1]$, we have with probability at least $1-\delta$ over $\Scal'\sim\Dcal^m$, $\Scal\sim\Dcal^m$, $h'\sim\P_{\Scal}$ and $h\sim\AQ$
\eqcomparisondziugaite
where the posterior $\AQ$ and the prior $\P$ are defined respectively by $\AQ(h) \propto e^{-\alpha\RiskLossp_{\Scal}(h)}$ and $\P(h) \propto e^{-\alpha'\RiskLossp_{\Scal}(h)}$.
\end{corollary}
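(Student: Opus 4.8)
The plan is to instantiate \Cref{theorem:general-disintegrated-differential-privacy} (the differential-privacy variant of our main result) with the parametric functions that define $\AQ$ and $\P$, and then to control the additive differential-privacy cost $m\epsilon$ incurred by the data-dependent Gibbs prior. Concretely, I would set $\comp(h,\Scal)=\alpha\RiskLossp_{\Scal}(h)$, so that $\AQ(h)\propto e^{-\alpha\RiskLossp_{\Scal}(h)}$, and take the prior $\P_{\Scal}(h)\propto e^{-\alpha'\RiskLossp_{\Scal}(h)}$. Choosing the generalization gap $\phi(\RiskLoss_{\Dcal}(h),\RiskLoss_{\Scal}(h))=m\,\kl[\RiskLoss_{\Scal}(h)\|\RiskLoss_{\Dcal}(h)]$ produces the $\kl$ form of the conclusion.

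With these choices the first two terms of \Cref{theorem:general-disintegrated-differential-privacy} become explicit: $\comp(h'\!,\Scal)-\comp(h,\Scal)=\alpha[\RiskLossp_{\Scal}(h')-\RiskLossp_{\Scal}(h)]$, and since the normalization constant of $\P_{\Scal}$ cancels in the ratio, $\ln\frac{\P_{\Scal}(h')}{\P_{\Scal}(h)}=\alpha'[\RiskLossp_{\Scal}(h)-\RiskLossp_{\Scal}(h')]$. For the exponential-moment term, Fubini's theorem followed by the bound of \citet{maurer2004note} gives $\EE_{\Vcal\sim\Dcal^m}\EE_{g\sim\P_{\Scal'}}e^{m\kl[\RiskLoss_{\Vcal}(g)\|\RiskLoss_{\Dcal}(g)]}\le 2\sqrt{m}$, exactly as in \Cref{corollary:disintegrated-comp}, so the constant term reduces to $\ln\frac{8\sqrt{m}}{\delta^2}$.

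The crux of the argument is bounding $m\epsilon$. I would show that the mechanism $\Scal\mapsto\P_{\Scal}$ is $\epsilon$-differentially private with $\epsilon=\frac{2\alpha'}{m}$, yielding $m\epsilon=2\alpha'$, which is precisely the $2\alpha'$ term in the statement. To verify this, for any two samples $\Tcal,\Tcal'$ differing in a single example, the boundedness $\loss'\in[0,1]$ gives the pointwise estimate $|\RiskLossp_{\Tcal'}(g)-\RiskLossp_{\Tcal}(g)|\le\frac{1}{m}$ for every $g\in\Hcal$. This inequality controls the density ratio in two places: directly through the exponent, giving $e^{\alpha'[\RiskLossp_{\Tcal}(h)-\RiskLossp_{\Tcal'}(h)]}\le e^{\alpha'/m}$, and through the ratio of normalization constants, where applying the same pointwise bound inside the integral defining $Z_{\Tcal}$ yields $\frac{Z_{\Tcal}}{Z_{\Tcal'}}\le e^{\alpha'/m}$. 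Multiplying the two contributions gives $\esssup_{h}\frac{\P_{\Tcal'}(h)}{\P_{\Tcal}(h)}\le e^{2\alpha'/m}$, hence $D_{\infty}(\P_{\Tcal'}\|\P_{\Tcal})\le\frac{2\alpha'}{m}$ as required by \Cref{def:differentially-private}.

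Assembling these pieces into \Cref{theorem:general-disintegrated-differential-privacy} and dividing through by $m$ yields the claimed inequality~\eqref{eq:comparaison-dziugaite}. The main obstacle is the differential-privacy bound on the Gibbs prior: both the exponent and the partition-function ratio must be controlled separately, and it is the combination of these two $e^{\alpha'/m}$ factors that produces the constant $2\alpha'$ rather than $\alpha'$. The rest is routine bookkeeping over the three explicit terms.
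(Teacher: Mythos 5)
Your proposal is correct and follows essentially the same route as the paper's proof: instantiate \Cref{theorem:general-disintegrated-differential-privacy} with $\phi=m\kl$, write out the two Gibbs terms explicitly, bound the exponential moment via Fubini's theorem and \citet{maurer2004note} to get $\ln\frac{8\sqrt{m}}{\delta^2}$, and identify $m\epsilon=2\alpha'$. The one point of divergence is that where the paper simply cites \citet[Theorem 6]{mcsherry2007mechanism} for the $\frac{2\alpha'}{m}$-differential privacy of the Gibbs prior, you re-derive it directly by bounding the exponent and the partition-function ratio separately by $e^{\alpha'/m}$ each; this is the standard proof of the exponential mechanism's guarantee and is valid, though note that it (exactly like the citation) silently requires $\loss'$ to have sensitivity $\frac{1}{m}$, i.e.\ to be bounded in $[0,1]$, which the corollary's stated signature $\loss':\Hcal\times(\Xcal\times\Ycal)\to\R$ does not make explicit.
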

\begin{proof}
We instantiate \Cref{theorem:general-disintegrated-differential-privacy}  with $\phi(\RiskLoss_{\Dcal}(h), \RiskLoss_{\Scal}(h))=m\kl[\RiskLoss_{\Scal}(h) \|\RiskLoss_{\Dcal}(h)]$. 
Additionally, from Fubini's theorem and \citet{maurer2004note} we have $\EE_{\Vcal\sim\Dcal^m}\EE_{g\sim\P}\exp(m\kl[\RiskLoss_{\Vcal}(g) \|\RiskLoss_{\Dcal}(g)]) \leq 2\sqrt{m}$.
Hence, we can deduce that
\begin{align*}
    \kl[\RiskLoss_{\Scal}(h) \|\RiskLoss_{\Dcal}(h)] \le \frac{1}{m}\LB\comp(h'\!,\Scal) - \comp(h,\Scal) + \ln\frac{\P_{\Scal}(h')}{\P_{\Scal}(h)} + m\epsilon + \ln\!\frac{8\sqrt{m}}{\delta^2}\RB.
\end{align*}
Let the posterior distribution $\AQ$ and the prior distribution $\P$ defined respectively by $\AQ\propto \exp(-\alpha\RiskLossp_{\Scal}(h))$ (\ie, $\comp(h,\Scal)=\alpha\RiskLossp_{\Scal}(h)$) and $\P_{\Scal}\propto \exp(-\alpha'\RiskLossp_{\Scal}(h))$.
From these definitions, we obtain
\begin{align}
    \kl[\RiskLoss_{\Scal}(h) \|\RiskLoss_{\Dcal}(h)] \le \frac{1}{m}\LB \alpha\!\LB\RiskLossp_{\Scal}(h'){-}\RiskLossp_{\Scal}(h)\RB + \alpha'\!\LB\RiskLossp_{\Scal}(h){-}\RiskLossp_{\Scal}(h')\RB + m\epsilon + \ln\!\frac{8\sqrt{m}}{\delta^2}\RB.\label{eq:proof-corollary:disintegrated-differential-privacy-1}
\end{align}
From \citet[Theorem 6]{mcsherry2007mechanism}, we can deduce that the randomized mechanism $\P_{\Scal}$ (\ie, the prior) gives $\epsilon=2\alpha'\frac{1}{m}$-differential privacy.
Hence, by simplifying the \Cref{eq:proof-corollary:disintegrated-differential-privacy-1}, we have the desired result.
\end{proof}

Even though \Cref{corollary:disintegrated-differential-privacy} does not use the approximate max-information as done by \citet{dziugaite2018data}, we are still able to provide a bound with a prior that gives a hypothesis $h'$ from an $\epsilon$-differentially private randomize mechanism.
The main advantage of these bounds compared to the others is that the prior can depend on the learning sample $\Scal$. 
This is why this bound is a good candidate for a baseline in \Cref{sec:experiments}.

\subsection{Related Works}
\label{sec:related-works}

The Gibbs distribution has been used in information-theoretic generalization bounds\footnote{See \citet{xu2017information,goyal2017pac,bu2020tightening} for some examples of information-theoretic bounds.} that upper-bound the expected generalization gap $\EE_{\Scal\sim\Dcal^m, h\sim\AQ}\RiskLoss_{\Dcal}(h){-}\RiskLoss_{\Scal}(h)$~.
For instance, \citet[Theorem 4,][]{raginsky2016information} provided  bounds for $\comp(h,\Scal){=}\alpha\RiskLoss_{\Scal}(h)$ with losses bounded between $0$ and $1$, while \citet[Theorem 1,][]{kuzborskij2019distribution} with sub-Gaussian losses.
\citet[Theorem 1,][]{aminian2021exact} proved a closed-form solution of the expected generalization gap with the Gibbs distribution with $\comp(h,\Scal){=}\alpha\RiskLoss_{\Scal}(h)$ (where the loss is non-negative); they also considered regularized empirical risks.
\citet{xu2017information,kuzborskij2019distribution} upper-bound the expected true risk $\EE_{\Scal\sim\Dcal^m, h\sim\AQ}\RiskLoss_{\Dcal}(h)$ by excess risk bounds, \ie, bounds \wrt the minimal true risk over the hypothesis set.
In the PAC-Bayesian literature, \citet{alquier2016properties} develop PAC-Bayesian bounds on the true risk $\EE_{h\sim\AQ}\RiskLoss_{\Dcal}(h)$ with $\comp(h,\Scal){=}\alpha\RiskLoss_{\Scal}(h)$.
However, all these bounds consider a (regularised) empirical risk scaled by $\alpha$ for the parametric function, while we are interested in user-defined parametric functions $\comp$.
Moreover, these bounds are in expectation over $h\sim\AQ$, while we are interested in the risk of a {\it single} hypothesis $h$ sampled from $\AQ$.
Hence, to the best of our knowledge, our contribution is the first to derive probabilistic bounds for a single hypothesis sampled from a Gibbs distribution with general parametric functions $\comp$.

\section{OBTAINING UNIFORM-CONVERGENCE AND ALGORITHMIC-DEPENDENT BOUNDS}
\label{sec:comparison-uc-algo}

\looseness=-1
In this section, we theoretically compare generalization bounds with arbitrary complexity measures and the literature's bounds.
To do so, we prove in \Cref{corollary:dis-uc,corollary:dis-algo} that, from an appropriate parametric function $\comp$, we can obtain two types of generalization bounds: the uniform-convergence-based and the algorithmic-dependent generalization bounds.
Hence, \Cref{corollary:dis-uc,corollary:dis-algo} do not present new uniform-convergence bounds but show how to obtain existing bounds by integrating a specific complexity measure.
In other words, we show that \Cref{theorem:disintegrated-comp} is general enough to obtain one bound belonging to one of these frameworks.
As we see in \Cref{sec:obtaining-uc,sec:obtaining-algo}, this is done by {\it (i)} assuming that we can effectively find an upper bound of the generalization gap and {\it (ii)} fixing the appropriate function $\comp$.
In order to present our results in \Cref{corollary:dis-uc,corollary:dis-algo}, we first recall the definitions of the literature's bounds.

\subsection{Types of Generalization Bounds in the Literature}

The uniform-convergence-based bounds were the first type to be introduced, notably in \citet{vapnik1971uniform} using the VC-dimension.
Other bounds were later developed based on the Gaussian/Rademacher complexity~\citep{bartlett2002rademacher}.
The definition of this type of bounds is the following.

\begin{restatable}[Uniform Convergence Bound]{definition}{definitionUC}\label{def:uc}
Let $\loss: \Hcal{\times}(\Xcal{\times}\Ycal){\to}\R$ be a loss function and $\phi\!:\! \R^2{\to}\R$ be a generalization gap.
A uniform convergence bound is defined such that if for any distribution $\Dcal$ on $\Xcal\times\Ycal$, for any hypothesis set $\Hcal$, there exists a function $\PhiUC: (0,1]{\to} \R$, such that for any  $\delta\in(0, 1]$ we have
\begin{align}
\PP_{\Scal\sim\Dcal^{m}}\Bigg[\, \forall h{\in}\Hcal,\ \phi(\RiskLoss_{\Dcal}(h),\RiskLoss_{\Scal}(h)) \le \PhiUC\big(\delta\big) \Bigg] = \PP_{\Scal\sim\Dcal^{m}}\LB\, \sup_{h\in\Hcal} \phi(\RiskLoss_{\Dcal}(h),\RiskLoss_{\Scal}(h)) \le \PhiUC\big(\delta\big) \RB\ge 1-\delta.\label{eq:uc}
\end{align}
\end{restatable}

\looseness=-1
This definition encompasses different complexity measures, such as $\PhiUC(\delta){=}\rad(\Hcal) {+} \sqrt{\frac{1}{2m}\ln\frac{1}{\delta}}$ for the Rademacher complexity $\rad(\Hcal)$, or $\PhiUC(\delta){=}\sqrt{\frac{1}{m}2\vc(\Hcal)\ln\frac{em}{\vc(\Hcal)}}{+}\sqrt{\tfrac{1}{2m}\ln\frac{1}{\delta}}$ for the VC-dimension $\vc(\Hcal)$~\citep[see Theorem 3.3 and Corollary 3.19 of ][]{mohri2012foundations} where the generalization gap is defined by $\phi(\RiskLoss_{\Dcal}(h),\RiskLoss_{\Scal}(h))=\RiskLoss_{\Dcal}(h)-\RiskLoss_{\Scal}(h)$ and $\loss$ is the 01-loss.
This definition also highlights the worst-case nature of the uniform-convergence bounds:
given a confidence $\delta$, the generalization gap $\phi(\RiskLoss_{\Dcal}(h), \RiskLoss_{\Scal}(h))$ is upper-bounded by a complexity measure $\PhiUC(\delta)$ constant for all $(h,\Scal)\!\in\!\Hcal{\times}(\Xcal{\times}\Ycal)^m$.
The upper bound $\PhiUC(\delta)$ can generally be improved by considering algorithmic-dependent bounds~\citep{bousquet2002stability,xu2012robustness}.
This kind of bounds upper-bound the generalization gap for the hypothesis $h_{\Scal}$ learned by an algorithm from a learning sample $\Scal$.
The definition of such bounds is recalled below.

\begin{restatable}[Algorithmic-dependent Generalization Bound]{definition}{definitionALGODEP}\label{def:algo} 
Let $\loss: \Hcal{\times}(\Xcal{\times}\Ycal){\to}\R$ be a loss function and $\phi\!:\! \R^2{\to}\R$ be a generalization gap. 
An algorithmic-dependent generalization bound is defined such that if for any distribution $\Dcal$ on $\Xcal\times\Ycal$, there exists a function $\PhiA: (0,1]{\to} \R$, such that for any  $\delta\in(0, 1]$ we have
\begin{align}
    \PP_{\Scal\sim\Dcal^m}\Big[ \phi(\RiskLoss_{\Dcal}(h_{\Scal}), \RiskLoss_{\Scal}(h_{\Scal})) \le \PhiA(\delta) \Big] \ge 1{-}\delta,\label{eq:algo}
\end{align}
where $h_{\Scal}\in\Hcal$ is the hypothesis learned from an algorithm with $\Scal\sim\Dcal^m$.
\end{restatable}

For example, when $\phi(\RiskLoss_{\Dcal}(h_{\Scal}), \RiskLoss_{\Scal}(h_{\Scal})) = \RiskLoss_{\Dcal}(h_\Scal) - \RiskLoss_{\Scal}(h_\Scal)$, the upper bound $\PhiA(\delta)=2\beta + (4m\beta+1)\sqrt{\frac{\ln1/\delta}{2m}}$ where $\beta$ is the uniform stability parameter~\citep[see,][]{bousquet2002stability} and a bounded loss $\loss: \Hcal{\times}(\Xcal{\times}\Ycal){\to} [0, 1]$.
Similarly to the uniform-convergence-based bounds, the upper bound $\PhiA(\delta)$ is a constant \wrt the hypothesis $h_{\Scal}$ and the learning sample $\Scal$.

\subsection{Obtaining Uniform-convergence Bounds}
\label{sec:obtaining-uc}

Since the parametric function $\comp$ in \Cref{theorem:disintegrated-comp} depends on the learning sample $\Scal$ and the hypothesis $h$, we can obtain from a specific $\comp$ a uniform-convergence-based bound (\Cref{eq:uc}).
Indeed, from \Cref{theorem:disintegrated-comp}, we obtain the following uniform-convergence-based bound.

\begin{restatable}{corollary}{corollarydisuc} \label{corollary:dis-uc}
Let $\loss: \Hcal{\times}(\Xcal{\times}\Ycal){\to}\R$ be a loss function, $\phi\!:\! \R^2{\to}\R$ be the generalization gap and assume that there exists a function $\PhiUC: (0,1]\to\R$ fulfilling \Cref{def:uc}.
Applying \Cref{theorem:disintegrated-comp} with the parametric function $\comp$ defined by
\begin{align*}
    \forall (h,\Scal)\in\Hcal{\times}(\Xcal{\times}\Ycal)^m,\quad \comp(h, \Scal) = -\phi(\RiskLoss_{\Dcal}(h),\RiskLoss_{\Scal}(h))  -\PhiUC(\tfrac{\delta}{2}) -\ln\P(h)
\end{align*}
gives the following bound 
\begin{align}
    \PP_{\Scal\sim\Dcal^m, h'\sim\P}&\Bigg[ \sup_{f\in\Hcal}\phi(\RiskLoss_{\Dcal}(f),\RiskLoss_{\Scal}(f)) \le \underbrace{\PhiUC(\tfrac{\delta}{2}) + \ln\!\left[\frac{16}{\delta^2} \EE_{\Vcal\!{\sim}\Dcal^m}\EE_{g{\sim}\P} e^{\phi(\RiskLoss_{\Dcal}(g),\RiskLoss_{\Vcal}(g))-\phi(\RiskLoss_{\Dcal}(h'),\RiskLoss_{\Scal}(h'))}\right]}_{\defeq\ \PhiUC'(\delta)} \Bigg] \ge 1{-}\delta.
\end{align}
\end{restatable}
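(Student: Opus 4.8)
The plan is to instantiate \Cref{theorem:disintegrated-comp} with the prescribed $\comp$ and, separately, the uniform-convergence assumption of \Cref{def:uc}, each at confidence level $\tfrac{\delta}{2}$, and then merge the two via a union bound. The crucial point is that with this choice of $\comp$ the Gibbs posterior reduces to $\AQ(h)\propto\P(h)\exp\LB\phi(\RiskLoss_{\Dcal}(h),\RiskLoss_{\Scal}(h))\RB$ (the additive constant $\PhiUC(\tfrac{\delta}{2})$ disappears in the normalization), and substituting it into the bound makes every $h$-dependent quantity cancel.

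First, I would apply \Cref{theorem:disintegrated-comp} at level $\tfrac{\delta}{2}$. Expanding $\comp(h',\Scal)-\comp(h,\Scal)+\ln\tfrac{\P(h')}{\P(h)}$ with the given $\comp$, the two copies of $\PhiUC(\tfrac{\delta}{2})$ cancel, the term $\ln\P(h')-\ln\P(h)$ coming from the log-ratio cancels against the $-\ln\P$ contributions hidden in $\comp$, and what remains is exactly $\phi(\RiskLoss_{\Dcal}(h),\RiskLoss_{\Scal}(h))-\phi(\RiskLoss_{\Dcal}(h'),\RiskLoss_{\Scal}(h'))$. Hence the right-hand side of the theorem equals $\phi(\RiskLoss_{\Dcal}(h),\RiskLoss_{\Scal}(h))-\phi(\RiskLoss_{\Dcal}(h'),\RiskLoss_{\Scal}(h'))+\ln\LB\tfrac{16}{\delta^2}\EE_{\Vcal\sim\Dcal^m}\EE_{g\sim\P}e^{\phi(\RiskLoss_{\Dcal}(g),\RiskLoss_{\Vcal}(g))}\RB$, and the left-hand side $\phi(\RiskLoss_{\Dcal}(h),\RiskLoss_{\Scal}(h))$ cancels the matching term on the right. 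This leaves, with probability at least $1-\tfrac{\delta}{2}$ over $h'\sim\P$, $\Scal\sim\Dcal^m$, $h\sim\AQ$, the inequality $\phi(\RiskLoss_{\Dcal}(h'),\RiskLoss_{\Scal}(h'))\le\ln\LB\tfrac{16}{\delta^2}\EE_{\Vcal\sim\Dcal^m}\EE_{g\sim\P}e^{\phi(\RiskLoss_{\Dcal}(g),\RiskLoss_{\Vcal}(g))}\RB$. Since this event no longer depends on $h$ and $\AQ$ is a proper distribution, it holds with the same probability over $h'\sim\P$, $\Scal\sim\Dcal^m$ alone.

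Next, \Cref{def:uc} at level $\tfrac{\delta}{2}$ gives, with probability at least $1-\tfrac{\delta}{2}$ over $\Scal\sim\Dcal^m$, that $\sup_{f\in\Hcal}\phi(\RiskLoss_{\Dcal}(f),\RiskLoss_{\Scal}(f))\le\PhiUC(\tfrac{\delta}{2})$. A union bound over the two failure events of probability $\tfrac{\delta}{2}$ ensures that, with probability at least $1-\delta$ over $\Scal\sim\Dcal^m$, $h'\sim\P$, both inequalities hold at once. Summing them gives $\sup_{f}\phi(\RiskLoss_{\Dcal}(f),\RiskLoss_{\Scal}(f))+\phi(\RiskLoss_{\Dcal}(h'),\RiskLoss_{\Scal}(h'))\le\PhiUC(\tfrac{\delta}{2})+\ln\LB\tfrac{16}{\delta^2}\EE_{\Vcal\sim\Dcal^m}\EE_{g\sim\P}e^{\phi(\RiskLoss_{\Dcal}(g),\RiskLoss_{\Vcal}(g))}\RB$; moving $\phi(\RiskLoss_{\Dcal}(h'),\RiskLoss_{\Scal}(h'))$ to the right and noting that it is constant with respect to the inner expectation lets it be absorbed into the logarithm as the factor $e^{-\phi(\RiskLoss_{\Dcal}(h'),\RiskLoss_{\Scal}(h'))}$, yielding precisely $\PhiUC'(\delta)$.

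The step I expect to require the most care is conceptual rather than computational: recognizing that this particular $\comp$ is engineered so that the $h$-dependent terms cancel identically on both sides of \Cref{theorem:disintegrated-comp}, collapsing the disintegrated guarantee into a bound on the gap evaluated at the prior draw $h'$, which then combines cleanly with worst-case uniform convergence. A minor caveat to verify is that $\comp$ induces a well-defined (normalizable) posterior $\AQ$; this holds under exactly the finiteness of $\EE_{\Vcal\sim\Dcal^m}\EE_{g\sim\P}e^{\phi(\RiskLoss_{\Dcal}(g),\RiskLoss_{\Vcal}(g))}$ already needed for \Cref{theorem:disintegrated-comp} to be informative.
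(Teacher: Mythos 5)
Your proposal is correct and follows essentially the same route as the paper: instantiate \Cref{theorem:disintegrated-comp} at level $\tfrac{\delta}{2}$ with the engineered $\comp$ so that the $h$-dependent and $\ln\P$ terms cancel, invoke \Cref{def:uc} at level $\tfrac{\delta}{2}$, and combine the two events by a union bound. The only difference is bookkeeping — you first reduce the disintegrated bound to $\phi(\RiskLoss_{\Dcal}(h'),\RiskLoss_{\Scal}(h'))\le\ln[\cdot]$ and then add the uniform-convergence inequality, whereas the paper substitutes $-\PhiUC(\tfrac{\delta}{2})\le-\sup_f\phi$ before rearranging — and the two orderings yield the identical final bound.
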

\begin{proof}
Given the definition of $\AQ$ (with the parametric function $\comp$ defined above), we deduce from \Cref{theorem:disintegrated-comp} that 
\begin{align*}
    \PP_{\Scal\sim\Dcal^m,\ h'\sim\P, h\sim\AQ}\Bigg[ \phi(\RiskLoss_{\Dcal}(h), \RiskLoss_{\Scal}(h)) \le &\underbrace{-\phi(\RiskLoss_{\Dcal}(h'),\RiskLoss_{\Scal}(h'))  -\PhiUC(\tfrac{\delta}{2}) -\ln\P(h')}_{\comp(h', \Scal)}\\
    &+ \underbrace{\phi(\RiskLoss_{\Dcal}(h),\RiskLoss_{\Scal}(h)) + \PhiUC(\tfrac{\delta}{2}) + \ln\P(h)}_{-\comp(h, \Scal)}\\
    &+ \ln\frac{\P(h')}{\P(h)} + \ln\!\left[\frac{16}{\delta^2} \EE_{\Vcal\!{\sim}\Dcal^m}\EE_{g{\sim}\P} e^{\phi(\RiskLoss_{\Dcal}(g),\RiskLoss_{\Vcal}(g))}\right] \Bigg] \ge 1{-}\frac{\delta}{2}.
\end{align*}
Moreover, thanks to \Cref{def:uc}, with probability at least $1{-}\frac{\delta}{2}$ over the random choice of $\Scal$, we have $-\PhiUC(\tfrac{\delta}{2}) \le - \sup_{f\in\Hcal}\phi(\RiskLoss_{\Dcal}(f),\RiskLoss_{\Scal}(f))$.
Hence, with the union bound, we have that 
\begin{align*}
    \PP_{\Scal\sim\Dcal^m,\ h'\sim\P, h\sim\AQ}\Bigg[ \phi(\RiskLoss_{\Dcal}(h), \RiskLoss_{\Scal}(h)) \le &-\phi(\RiskLoss_{\Dcal}(h'),\RiskLoss_{\Scal}(h'))  -\sup_{f\in\Hcal}\phi(\RiskLoss_{\Dcal}(f),\RiskLoss_{\Scal}(f)) -\ln\P(h')\\
    &+ \phi(\RiskLoss_{\Dcal}(h),\RiskLoss_{\Scal}(h)) + \PhiUC(\tfrac{\delta}{2}) + \ln\P(h)\\
    &+ \ln\frac{\P(h')}{\P(h)} + \ln\!\left[\frac{16}{\delta^2} \EE_{\Vcal\!{\sim}\Dcal^m}\EE_{g{\sim}\P} e^{\phi(\RiskLoss_{\Dcal}(g),\RiskLoss_{\Vcal}(g))}\right] \Bigg] \ge 1{-}\delta.
\end{align*}
Therefore, by rearranging the terms, we obtain the desired result.
\end{proof}

\Cref{corollary:dis-uc} underlines the fact that our framework is general enough to allow us to obtain a uniform-convergence-based bound.
Indeed, if we are able to find (with high probability) an upper bound of the worst-case generalization gap $\sup_{f\in\Hcal}\phi(\RiskLoss_{\Dcal}(f),\RiskLoss_{\Scal}(f))$ denoted by $\PhiUC(\delta)$, then our framework allows us to obtain a bound depending on $\PhiUC(\delta)$.
For instance, when we consider the bound $\PhiUC(\delta)=\rad(\Hcal) {+} \sqrt{\frac{1}{2m}\ln\frac{1}{\delta}}$ depending on the Rademacher complexity $\rad(\Hcal)$, we are able to obtain a bound depending on $\PhiUC(\delta)$ thanks to our framework; it is shown in the following corollary.

\begin{corollary}\label{corollary:dis-uc-rad}
Let $\loss: \Hcal{\times}(\Xcal{\times}\Ycal){\to}[0,1]$ be a loss function.
By applying \Cref{theorem:disintegrated-comp} with the parametric function $\comp$ defined by
\begin{align*}
    \forall (h,\Scal)\in\Hcal{\times}(\Xcal{\times}\Ycal)^m,\quad \comp(h, \Scal) = -\sqrt{m}[\RiskLoss_{\Dcal}(h){-}\RiskLoss_{\Scal}(h)]  -\sqrt{m}\LB\rad(\Hcal) {+} \sqrt{\tfrac{1}{2m}\ln\tfrac{2}{\delta}}\RB -\ln\P(h),
\end{align*}
we can deduce the following bound 
\begin{align*}
\PP_{\Scal\sim\Dcal^m}&\Bigg[ \sup_{f\in\Hcal}\RiskLoss_{\Dcal}(f) - \RiskLoss_{\Scal}(f) \le \rad(\Hcal) {+} \sqrt{\frac{1}{2m}\ln\frac{4}{\delta}} + \frac{\ln\frac{128}{\delta^3}{+}2}{\sqrt{m}}\Bigg] \ge 1{-}\delta.
\end{align*}    
\end{corollary}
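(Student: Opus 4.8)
The plan is to read \Cref{corollary:dis-uc-rad} as the instantiation of the uniform-convergence reduction \Cref{corollary:dis-uc} (itself an application of \Cref{theorem:disintegrated-comp}) in which the gap is rescaled by $\sqrt{m}$ and $\PhiUC$ is taken to be the classical Rademacher bound. Concretely, I would work with $\phi(\RiskLoss_{\Dcal}(h),\RiskLoss_{\Scal}(h))=\sqrt{m}\,[\RiskLoss_{\Dcal}(h)-\RiskLoss_{\Scal}(h)]$ and $\PhiUC(\delta')=\sqrt{m}\,[\rad(\Hcal)+\sqrt{\tfrac{1}{2m}\ln\tfrac{1}{\delta'}}]$, so that the $\comp$ in the statement is the template $\comp$ of \Cref{corollary:dis-uc} for this rescaled gap and this Rademacher $\PhiUC$. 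The first step is to check that this $\PhiUC$ fulfils \Cref{def:uc} for the rescaled gap: the standard Rademacher bound \citep{mohri2012foundations} states $\sup_{h}[\RiskLoss_{\Dcal}(h)-\RiskLoss_{\Scal}(h)]\le\rad(\Hcal)+\sqrt{\tfrac{1}{2m}\ln\tfrac{1}{\delta'}}$ with probability $1-\delta'$, and multiplying the inner inequality by the positive constant $\sqrt{m}$ (which passes through the supremum) gives exactly $\sup_h\phi\le\PhiUC(\delta')$.

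I would then invoke \Cref{corollary:dis-uc} with this $\phi$ and $\PhiUC$, reserving half of the confidence for it. This yields, with probability $1-\tfrac\delta2$ over $\Scal$ and $h'\sim\P$, a bound of the shape $\sup_f\sqrt{m}[\RiskLoss_{\Dcal}(f)-\RiskLoss_{\Scal}(f)]\le\PhiUC(\tfrac\delta4)+\ln\tfrac{64}{\delta^2}-\sqrt{m}[\RiskLoss_{\Dcal}(h')-\RiskLoss_{\Scal}(h')]+\ln\EE_{\Vcal}\EE_{g}e^{\sqrt{m}[\RiskLoss_{\Dcal}(g)-\RiskLoss_{\Vcal}(g)]}$, where $\PhiUC(\tfrac\delta4)=\sqrt{m}\,\rad(\Hcal)+\sqrt{\tfrac12\ln\tfrac4\delta}$. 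The expectation is controlled by a moment-generating-function estimate: for fixed $g$ and $\loss\in[0,1]$, the quantity $\sqrt{m}[\RiskLoss_{\Dcal}(g)-\RiskLoss_{\Vcal}(g)]$ equals $\tfrac{1}{\sqrt m}$ times a sum of $m$ independent zero-mean variables each lying in an interval of length one, so Hoeffding's lemma gives $\EE_{\Vcal}e^{\sqrt{m}[\RiskLoss_{\Dcal}(g)-\RiskLoss_{\Vcal}(g)]}\le e^{m\cdot(1/\sqrt m)^2/8}=e^{1/8}$, whence the last term is at most $\tfrac18$.

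The step I expect to be the main obstacle is the stray term $-\sqrt{m}[\RiskLoss_{\Dcal}(h')-\RiskLoss_{\Scal}(h')]$: bounding it crudely by $\sqrt{m}$ (valid since the gap is at least $-1$) would contribute an $O(1)$ constant and destroy the $1/\sqrt{m}$ rate. To avoid this I would take the prior $\P$ data-independent, so that $h'\sim\P$ is independent of $\Scal$, and apply Markov's inequality to $e^{-\sqrt{m}[\RiskLoss_{\Dcal}(h')-\RiskLoss_{\Scal}(h')]}$. Its expectation over $(\Scal,h')$ is again bounded by $e^{1/8}$ through the same Hoeffding estimate, so spending the remaining $\tfrac\delta2$ gives $-\sqrt{m}[\RiskLoss_{\Dcal}(h')-\RiskLoss_{\Scal}(h')]\le\ln\tfrac2\delta+\tfrac18$ with probability $1-\tfrac\delta2$; this converts the offending term into an $O(\log\tfrac1\delta)$ quantity that lands in the $1/\sqrt{m}$ budget rather than in a constant.

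Finally I would take a union bound over the two events (total failure $\tfrac\delta2+\tfrac\delta2=\delta$), observe that the resulting inequality no longer involves $h'$ so that it holds over $\Scal$ alone, divide through by $\sqrt{m}$, and gather constants. The surviving $\PhiUC(\tfrac\delta4)$ contributes $\rad(\Hcal)+\sqrt{\tfrac{1}{2m}\ln\tfrac4\delta}$, while the logarithmic pieces combine as $\tfrac{1}{\sqrt m}\big(\ln\tfrac{64}{\delta^2}+\ln\tfrac2\delta+\tfrac14\big)=\tfrac{1}{\sqrt m}\big(\ln\tfrac{128}{\delta^3}+\tfrac14\big)$. Since $\tfrac14\le2$, this is dominated by the claimed bound $\rad(\Hcal)+\sqrt{\tfrac{1}{2m}\ln\tfrac4\delta}+\tfrac{1}{\sqrt m}(\ln\tfrac{128}{\delta^3}+2)$, which completes the argument.
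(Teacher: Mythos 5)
Your proposal is correct and follows essentially the same route as the paper's proof: instantiate \Cref{corollary:dis-uc} with the $\sqrt{m}$-rescaled gap and the Rademacher bound as $\PhiUC$ at confidence $\nicefrac{\delta}{2}$, dispose of the residual $-\sqrt{m}[\RiskLoss_{\Dcal}(h')-\RiskLoss_{\Scal}(h')]$ term via Markov's inequality with the remaining $\nicefrac{\delta}{2}$, control the moment-generating functions by Hoeffding's lemma, and combine by a union bound to reach $\ln\tfrac{128}{\delta^3}$. The only (harmless) deviation is bookkeeping: the paper keeps both exponentials inside a single joint expectation, applies Markov to that whole quantity, and bounds the four-fold expectation by $e^{2}$ via Fubini and Hoeffding, whereas you bound the two moment-generating functions separately by $e^{1/8}$ each, yielding the slightly better constant $\tfrac14$, which is absorbed by the same final $+2$.
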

\begin{proof}
We apply \Cref{corollary:dis-uc} with the generalization gap $\phi(\RiskLoss_{\Dcal}(h),\RiskLoss_{\Scal}(h))=\sqrt{m}[\RiskLoss_{\Dcal}(h){-}\RiskLoss_{\Scal}(h)]$ and with $\PhiUC(\delta)=\rad(\Hcal) {+} \sqrt{\frac{1}{2m}\ln\frac{1}{\delta}}$ (see Theorem 3.3 of \citet{mohri2012foundations}). 
To obtain with probability at least $1-\delta/2$ over $\Scal\sim\Dcal^m$ and $h'\sim\P$ we have
\begin{align*}
\sup_{f\in\Hcal}\RiskLoss_{\Dcal}(f) - \RiskLoss_{\Scal}(f) \le \PhiUC(\tfrac{\delta}{4}) + \frac{1}{\sqrt{m}}\ln\!\left[\frac{64}{\delta^2} \EE_{\Vcal\!{\sim}\Dcal^m}\EE_{g{\sim}\P} e^{\sqrt{m}[\RiskLoss_{\Dcal}(g) - \RiskLoss_{\Vcal}(g)] - \sqrt{m}[\RiskLoss_{\Dcal}(h') - \RiskLoss_{\Scal}(h')]}\right].
\end{align*}
Moreover, thanks to Markov's inequality and the union bound, we obtain with probability at least $1-\delta$ over $\Scal\sim\Dcal^m$ we have
\begin{align*}
\sup_{f\in\Hcal}\RiskLoss_{\Dcal}(f) - \RiskLoss_{\Scal}(f) \le \PhiUC(\tfrac{\delta}{4}) + \frac{1}{\sqrt{m}}\ln\!\left[\frac{128}{\delta^3} \EE_{\Scal{\sim}\Dcal^m}\EE_{h'{\sim}\P}\EE_{\Vcal{\sim}\Dcal^m}\EE_{g{\sim}\P} e^{\sqrt{m}[\RiskLoss_{\Dcal}(g) - \RiskLoss_{\Vcal}(g)] - \sqrt{m}[\RiskLoss_{\Dcal}(h') - \RiskLoss_{\Scal}(h')]}\right].
\end{align*}
Then, we upper-bound the term $\EE_{\Scal\!{\sim}\Dcal^m}\EE_{h'{\sim}\P}\EE_{\Vcal\!{\sim}\Dcal^m}\EE_{g{\sim}\P} e^{[\RiskLoss_{\Dcal}(g) - \RiskLoss_{\Vcal}(g)] - [\RiskLoss_{\Dcal}(h') - \RiskLoss_{\Scal}(h')]}$.
To do so, we first use Fubini's theorem to have
\begin{align*}
&\EE_{\Scal{\sim}\Dcal^m}\EE_{h'{\sim}\P}\EE_{\Vcal{\sim}\Dcal^m}\EE_{g{\sim}\P} e^{\sqrt{m}[\RiskLoss_{\Dcal}(g) - \RiskLoss_{\Vcal}(g)] - \sqrt{m}[\RiskLoss_{\Dcal}(h') - \RiskLoss_{\Scal}(h')]}\\
&= \EE_{h'{\sim}\P}\EE_{g{\sim}\P}\EE_{\Scal{\sim}\Dcal^m}\EE_{\Vcal{\sim}\Dcal^m} e^{\sqrt{m}[\RiskLoss_{\Dcal}(g) - \RiskLoss_{\Vcal}(g)] - \sqrt{m}[\RiskLoss_{\Dcal}(h') - \RiskLoss_{\Scal}(h')]}.
\end{align*}
Moreover, we upper-bound the term $\EE_{\Scal{\sim}\Dcal^m}\EE_{\Vcal{\sim}\Dcal^m} e^{\sqrt{m}[\RiskLoss_{\Dcal}(g) - \RiskLoss_{\Vcal}(g)] - \sqrt{m}[\RiskLoss_{\Dcal}(h') - \RiskLoss_{\Scal}(h')]}$ thanks to Hoeffding's lemma, to obtain
\begin{align*}
&\EE_{\Vcal\!{\sim}\Dcal^m}\EE_{\Scal\!{\sim}\Dcal^m} \exp\LP\sqrt{m}[\RiskLoss_{\Dcal}(g) - \RiskLoss_{\Vcal}(g)] - \sqrt{m}[\RiskLoss_{\Dcal}(h') - \RiskLoss_{\Scal}(h')]\RP\\
&= \prod_{i=1}^{m}\LB \EE_{(\xbf_i',y_i')\sim\Dcal}\EE_{(\xbf_i,y_i)\sim\Dcal} \exp\LP\frac{1}{\sqrt{m}}\!\LB\EE_{(\xbf,z)\sim\Dcal}\loss(g, (\xbf,y)) - \loss(g, (\xbf'_i,y'_i)) - \EE_{(\xbf,z)\sim\Dcal}\loss(h', (\xbf,y)) + \loss(h', (\xbf_i,y_i))\RB\RP \RB\\
&\le \prod_{i=1}^{m}\LB \exp\LP\frac{2}{m}\RP\RB\\
&= \exp\LP2\RP.
\end{align*}
Hence, by rearranging the terms, we obtain the stated result.
\end{proof}

The bound that we can obtain in \Cref{corollary:dis-uc-rad} is greater than the bound of \citet{mohri2012foundations}'s Theorem 3.3.
This is normal since we use the bound in the parametric function $\comp$.
However, the higher the number of examples $m$, the closer our bound will be to the original bound of \citet{mohri2012foundations}.
Obtaining new uniform-convergence bound (without relying on previously known bounds) by setting a specific parametric function $\comp$ is highly non-trivial and is thus an exciting line of research that can be explored in the future.

\subsection{Obtaining Algorithmic-dependent Bounds}
\label{sec:obtaining-algo}

Similarly, we can obtain an algorithmic-dependent generalization bound (\Cref{def:algo}) by using the same technique as in \Cref{corollary:dis-uc}.
Indeed, we can obtain the following result.
\begin{restatable}{corollary}{corollarydisalgo} \label{corollary:dis-algo}
Let $\loss: \Hcal{\times}(\Xcal{\times}\Ycal){\to}\R$ be a loss function, $\phi\!:\! \R^2{\to}\R$ be the generalization gap and assume that there exists a function $\PhiA: (0,1]\to\R$ fulfilling \Cref{def:algo}.
Applying \Cref{theorem:disintegrated-comp} with the parametric function $\comp$ defined by
\begin{align*}
    \forall (h,\Scal)\in\Hcal{\times}(\Xcal{\times}\Ycal)^m,\quad \comp(h, \Scal) = -\phi(\RiskLoss_{\Dcal}(h),\RiskLoss_{\Scal}(h))  -\PhiA(\tfrac{\delta}{2}) -\ln\P(h)
\end{align*}
gives the following bound 
\begin{align}
    \PP_{\Scal\sim\Dcal^m, h'\sim\P}&\Bigg[ \phi(\RiskLoss_{\Dcal}(h_{\Scal}),\RiskLoss_{\Scal}(h_{\Scal})) \le \underbrace{\PhiA(\tfrac{\delta}{2}) + \ln\!\left[\frac{16}{\delta^2} \EE_{\Vcal\!{\sim}\Dcal^m}\EE_{g{\sim}\P} e^{\phi(\RiskLoss_{\Dcal}(g),\RiskLoss_{\Vcal}(g))-\phi(\RiskLoss_{\Dcal}(h'),\RiskLoss_{\Scal}(h'))}\right]}_{\defeq\ \PhiA'(\delta)} \Bigg] \ge 1{-}\delta.\label{eq:bound-dis-algo}
\end{align}
\end{restatable}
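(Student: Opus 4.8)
The plan is to mimic the proof of \Cref{corollary:dis-uc} almost verbatim, substituting the algorithmic-dependent guarantee of \Cref{def:algo} for the uniform-convergence guarantee of \Cref{def:uc}. First I would invoke \Cref{theorem:disintegrated-comp} with $\delta/2$ in place of $\delta$ (so that the constant $\tfrac{4}{\delta^2}$ becomes $\tfrac{16}{\delta^2}$) and with the stated parametric function $\comp$. Writing $\comp(h',\Scal)$ and $-\comp(h,\Scal)$ out explicitly, this yields, with probability at least $1-\tfrac{\delta}{2}$ over $\Scal\sim\Dcal^m$, $h'\sim\P$ and $h\sim\AQ$, an inequality whose right-hand side collects the terms $-\phi(\RiskLoss_{\Dcal}(h'),\RiskLoss_{\Scal}(h'))$, $-\PhiA(\tfrac{\delta}{2})$ and $-\ln\P(h')$ coming from $\comp(h',\Scal)$, together with $+\phi(\RiskLoss_{\Dcal}(h),\RiskLoss_{\Scal}(h))$, $+\PhiA(\tfrac{\delta}{2})$ and $+\ln\P(h)$ coming from $-\comp(h,\Scal)$, plus the log-ratio $\ln\tfrac{\P(h')}{\P(h)}$ and the constant expectation term.

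Next I would keep the $+\PhiA(\tfrac{\delta}{2})$ arising from $-\comp(h,\Scal)$ but replace the $-\PhiA(\tfrac{\delta}{2})$ arising from $\comp(h',\Scal)$ by $-\phi(\RiskLoss_{\Dcal}(h_{\Scal}),\RiskLoss_{\Scal}(h_{\Scal}))$. This is legitimate because \Cref{def:algo}, applied with $\delta/2$, guarantees that with probability at least $1-\tfrac{\delta}{2}$ over $\Scal\sim\Dcal^m$ we have $\phi(\RiskLoss_{\Dcal}(h_{\Scal}),\RiskLoss_{\Scal}(h_{\Scal}))\le\PhiA(\tfrac{\delta}{2})$, hence $-\PhiA(\tfrac{\delta}{2})\le-\phi(\RiskLoss_{\Dcal}(h_{\Scal}),\RiskLoss_{\Scal}(h_{\Scal}))$, so the substitution only enlarges the right-hand side. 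Since the event of \Cref{def:algo} depends on $\Scal$ alone (the learned $h_{\Scal}$ is a function of $\Scal$ only, independent of the sampled $h$ and $h'$), a union bound with the event of the first paragraph makes the resulting inequality hold with probability at least $1-\delta$ over $\Scal\sim\Dcal^m$, $h'\sim\P$ and $h\sim\AQ$.

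Finally I would collect terms. The $\phi(\RiskLoss_{\Dcal}(h),\RiskLoss_{\Scal}(h))$ on the left cancels the identical term on the right; the combination $-\ln\P(h')+\ln\P(h)+\ln\tfrac{\P(h')}{\P(h)}$ vanishes; and the now-free $-\phi(\RiskLoss_{\Dcal}(h_{\Scal}),\RiskLoss_{\Scal}(h_{\Scal}))$ moves to the left, leaving $\phi(\RiskLoss_{\Dcal}(h_{\Scal}),\RiskLoss_{\Scal}(h_{\Scal}))$ bounded by $\PhiA(\tfrac{\delta}{2})-\phi(\RiskLoss_{\Dcal}(h'),\RiskLoss_{\Scal}(h'))$ plus the constant expectation term. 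Absorbing the scalar $-\phi(\RiskLoss_{\Dcal}(h'),\RiskLoss_{\Scal}(h'))$ into the exponent (using $-x=\ln e^{-x}$) produces exactly $\PhiA'(\delta)$ and removes the dependence on the sampled $h\sim\AQ$, yielding \Cref{eq:bound-dis-algo}. This proof is essentially routine given \Cref{corollary:dis-uc}; the only point requiring care—the exact analogue of the subtle step there—is to resist cancelling the two $\PhiA(\tfrac{\delta}{2})$ terms directly, and instead trade precisely one of them against the algorithmic guarantee, so that the learned hypothesis $h_{\Scal}$, rather than $h\sim\AQ$, ends up on the left-hand side.
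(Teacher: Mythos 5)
Your proposal is correct and follows essentially the same route as the paper's own proof: invoke \Cref{theorem:disintegrated-comp} with $\delta/2$ and the stated $\comp$, trade exactly the $-\PhiA(\tfrac{\delta}{2})$ coming from $\comp(h',\Scal)$ for $-\phi(\RiskLoss_{\Dcal}(h_{\Scal}),\RiskLoss_{\Scal}(h_{\Scal}))$ via \Cref{def:algo} and a union bound, then cancel and rearrange. The handling of the one subtle point (keeping the other $\PhiA(\tfrac{\delta}{2})$ and absorbing $-\phi(\RiskLoss_{\Dcal}(h'),\RiskLoss_{\Scal}(h'))$ into the exponent) matches the paper exactly.
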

\begin{proof}
The proof is similar to the one of \Cref{corollary:dis-uc}.
Given the definition of $\AQ$ (with the parametric function $\comp$ defined above), we deduce from \Cref{theorem:disintegrated-comp} that 
\begin{align*}
    \PP_{\Scal\sim\Dcal^m,\ h'\sim\P, h\sim\AQ}\Bigg[ \phi(\RiskLoss_{\Dcal}(h), \RiskLoss_{\Scal}(h)) \le &\underbrace{-\phi(\RiskLoss_{\Dcal}(h'),\RiskLoss_{\Scal}(h'))  -\PhiA(\tfrac{\delta}{2}) -\ln\P(h')}_{\comp(h', \Scal)}\\
    &+ \underbrace{\phi(\RiskLoss_{\Dcal}(h),\RiskLoss_{\Scal}(h)) + \PhiA(\tfrac{\delta}{2}) + \ln\P(h)}_{-\comp(h, \Scal)}\\
    &+ \ln\frac{\P(h')}{\P(h)} + \ln\!\left[\frac{16}{\delta^2} \EE_{\Vcal\!{\sim}\Dcal^m}\EE_{g{\sim}\P} e^{\phi(\RiskLoss_{\Dcal}(g),\RiskLoss_{\Vcal}(g))}\right] \Bigg] \ge 1{-}\frac{\delta}{2}.
\end{align*}
Moreover, thanks to \Cref{def:algo}, with probability at least $1{-}\frac{\delta}{2}$ over the random choice of $\Scal$, we have $-\PhiA(\tfrac{\delta}{2}) \le -\phi(\RiskLoss_{\Dcal}(h_{\Scal}),\RiskLoss_{\Scal}(h_{\Scal}))$.
Hence, with the union bound, we have that 
\begin{align*}
    \PP_{\Scal\sim\Dcal^m,\ h'\sim\P, h\sim\AQ}\Bigg[ \phi(\RiskLoss_{\Dcal}(h), \RiskLoss_{\Scal}(h)) \le &-\phi(\RiskLoss_{\Dcal}(h'),\RiskLoss_{\Scal}(h'))  -\phi(\RiskLoss_{\Dcal}(h_{\Scal}),\RiskLoss_{\Scal}(h_{\Scal})) -\ln\P(h')\\
    &+ \phi(\RiskLoss_{\Dcal}(h),\RiskLoss_{\Scal}(h)) + \PhiA(\tfrac{\delta}{2}) + \ln\P(h)\\
    &+ \ln\frac{\P(h')}{\P(h)} + \ln\!\left[\frac{16}{\delta^2} \EE_{\Vcal\!{\sim}\Dcal^m}\EE_{g{\sim}\P} e^{\phi(\RiskLoss_{\Dcal}(g),\RiskLoss_{\Vcal}(g))}\right] \Bigg] \ge 1{-}\delta.
\end{align*}
Therefore, by rearranging the terms, we obtain the desired result.
\end{proof}

Hence, our framework is also general enough to retrieve algorithmic-dependent bounds.
More precisely, the generalization gap $\phi(\RiskLoss_{\Dcal}(h_{\Scal}),\RiskLoss_{\Scal}(h_{\Scal}))$ associated with the hypothesis $h_{\Scal}$ is upper-bounded by a constant $\PhiA'(\delta)$.
As for \Cref{corollary:dis-uc,corollary:dis-uc-rad}, the drawback of \Cref{corollary:dis-algo} is that we have to rely on a previously known bound to obtain our result.
Therefore, further investigations must be done to derive entirely new algorithmic-depend bounds by setting a specific parametric function $\comp$.

\section{ADDITIONAL INFORMATION ON THE EXPERIMENTS}
\label{sec:additional-experiments}

In this section, we first provide more experiments about \Cref{sec:experiments-reg-risk} by varying $\alpha$.
\Cref{sec:additional-experiments-neural-comp} presents how $\neural$ is obtained and more experiments on this parametric function (along with $\distneural$).

\subsection[About Computing the Bounds]{About Computing the Bounds (with $\klmax$)}

The evaluated bounds that we consider have all the same structure: with high probability, we have $\kl(q\|p) \le \tau$, where $q$ is the empirical risk, $p$ is the true risk, and $\tau$ is the bound.
As shown, \eg, in \Cref{eq:disintegrated-comp-seeger-risk}, we can evaluate the bound on the true risk $p$ by computing 
\begin{align*}
\klmax[q | \tau]=\max\Big\{ p \in (0,1) \;\Big|\; \kl(q\|p) \le \tau\Big\}.
\end{align*}
We use the bisection method to solve this optimization problem, as proposed by \citet{reeb2018learning}.
This method consists of refining the interval $[p_{\text{min}}, p_{\text{max}}]$ in which $p$ belongs.
To do so, we first initialize $p_{\text{min}}=q$ and $p_{\text{max}}=1$.
Then, for each iteration, we first set $p_{\text{tmp}}=\frac{1}{2}(p_{\text{max}}-p_{\text{min}})$, and then we change the values of $p_{\text{min}}$ and $p_{\text{max}}$ depending on the value of the temporary parameter $p_{\text{tmp}}$.
Indeed, if $\kl(q\|p_{\text{tmp}})>\tau$ (\resp, $\kl(q\|p_{\text{tmp}})<\tau$), we update $p_{\text{max}}=p_{\text{tmp}}$ (\resp, $p_{\text{min}}=p_{\text{tmp}}$).
Moreover, if we have $\kl(q\|p_{\text{tmp}})=\tau$, a small interval (with $p_{\text{max}}-p_{\text{min}}<\epsilon$), or if we attain the maximum number of iterations, we return $p_{\text{tmp}}$ as $\klmax[q | \tau]$.
In the experiments, we set $\epsilon=10^{-9}$ and the maximum number of iterations to $1000$.

\subsection[About Section 4.3]{About \Cref{sec:experiments-reg-risk}}
\label{sec:additional-experiments-reg-risk}

In the experiments introduced in \Cref{sec:experiments-reg-risk}, we fix $\alpha=m$. 
We propose additional experiments in \Cref{fig:reg-risk-alpha,fig:reg-risk-alpha-prior} where $\alpha$ varies between $\sqrt{m}$ and $m$.
As we can remark in \Cref{fig:reg-risk-alpha,fig:reg-risk-alpha-prior}, the concentration parameter $\alpha$ plays an important role: the higher $\alpha$, the lower the test risks $\RiskLoss_{\Tcal}(h)$.
Moreover, the bounds are tighter than those in \Cref{fig:reg-risk}; however, the test risks remain high, so the bounds are not sufficiently concentrated.
In this setting, there is a trade-off between concentrating the distribution (by increasing $\alpha$) and having a tight bound.

\subsection[About Section 4.4]{About \Cref{sec:experiments-neural-comp}}
\label{sec:additional-experiments-neural-comp}

In this section, we first introduce in \Cref{sec:additional-experiments-neural-comp-training} the setting to learn the parametric functions $\neural$ and $\distneural$ with neural networks, and we show additional experiments in \Cref{sec:additional-experiments-neural-comp-experiments}.

\subsubsection{Training the Neural Parametric Functions}
\label{sec:additional-experiments-neural-comp-training}

\textbf{$\neural$'s dataset.} In order to learn the neural networks associated with $\neural$, we have to train first neural networks (to have the weights as input) and save their corresponding generalization gap (that is further used as a label).
To do so, for MNIST and FashionMNIST, we train models with a size of the validation set that varies in order to obtain models with diverse generalization gaps.
Starting from the original training set of MNIST or FashionMNIST, we split the dataset into a training set and a validation set (to compute the gaps); we denote by $m_{\text{val}}$ the size of the validation set and $m_{\text{train}}$ the size of the training set.
After fixing the split, we launch training and save the model with its corresponding gap after each epoch. 
We launch $1000$ trainings with the split ratio $\frac{m_{\text{val}}}{m_{\text{val}}+m_{\text{train}}} \in \{ 0.99, 0.97, 0.95, 0.93 \}$, $120$ trainings with $\frac{m_{\text{val}}}{m_{\text{val}}+m_{\text{train}}}=0.90$ and $110$ trainings with $\frac{m_{\text{val}}}{m_{\text{val}}+m_{\text{train}}}\in\{ 0.80, 0.70, 0.60, 0.50, 0.40, 0.30, 0.20, 0.10\}$.
In each training, we learn the model with the same architecture as in \Cref{sec:experiments-setting}, and we optimize in the same way as for the sampling from $\AQ$, except that we run SGD instead of SGLD (\ie, we remove the Gaussian noise) for at least 8000 iterations (we finish the epoch after reaching the number of iterations).
We show in \Cref{fig:hist} the histogram of the obtained generalization gaps.
Note that our method of creating the dataset is similar to \citet{lee2020neural}, except that the size of our validation set varies more, and we save the parameters directly (instead of the predictions).

\textbf{$\neural$'s model.} 
As summarized in \Cref{sec:experiments-neural-comp}, the parametric function $\neural$ is a neural network that is learned with the dataset created previously.
This model is a feed-forward neural network with $3$ hidden layers of width $1024$.
The input is the weights and biases $\wbf$ of the network $h$, whereas the output is a scalar representing $\neural(h,\Scal)$, \ie, the value of the parametric function learned from the neural network.
After the input, we normalize the parameters $\wbf$ with its $\ell_2$ norm, and we use a batch normalization layer~\citep{ioffe2015batch} (with a momentum of $0.1$ and $\epsilon=0.0$). 
Moreover, the activation functions are leaky ReLU, and the output is squared to obtain a positive output.
The weights are initialized with the Xavier Glorot uniform initializer~\citep{glorot2010understanding}.
The biases are initialized with a uniform distribution between $-1/\sqrt{1024}$ and $+1/\sqrt{1024}$ for all biases, except for the first layer, they are initialized uniformly in $[-1/\sqrt{d}, +1/\sqrt{d}]$, where $d$ is the number of parameters of the models in the dataset.
The model is learned from Adam optimizer~\citep{kingma2015adam} for 100 epochs by minimizing the mean absolute error.
The model is selected by early stopping: the $\neural$'s dataset is split between a training set (of size $m_{\text{train}}$) and a validation set (of size $m_{\text{val}}$).
Moreover, in order to handle the fact that the $\neural$'s dataset is unbalanced, we rebalance it by {\it (i)} putting the gaps into $50$ bins, {\it (ii)} merging neighboring bins if they represent less than 1\% of the dataset and {\it (iii)} sampling a bin with a probability proportional to the inverse of the number of examples in the bin (the examples in the bin are sampled uniformly).
The merging procedure is done as follows: we merge the bin with its neighbor (that contains higher gaps) when it has less than 1\% of the dataset; we perform this until stabilization.
We learn different networks (that give several parametric functions $\neural$), with a batch size of $64$, $128$, or $256$; Adam's learning rate is either $0.001$ or $0.0001$ (the other parameters remain the default ones); and the ratio $\frac{m_{\text{val}}}{m_{\text{train}}+m_{\text{val}}}$ is either $0.1$, $0.3$ or $0.5$.
Note that the work of \citet{lee2020neural} differs from ours by the fact that {\it (i)} we have a simpler architecture, and {\it (ii)} we take the parameters as input in order to have a differentiable parametric function (for sampling from $\AQ$).

\subsubsection{Additional Experiments}
\label{sec:additional-experiments-neural-comp-experiments}

In \Cref{fig:neural-mnist,fig:neural-fashion,fig:dist-neural-mnist,fig:dist-neural-fashion}, we show the bar plots of the parametric functions $\neural$ and $\distneural$ learned with the different hyperparameters.
The figures highlight that hyperparameter tuning is extremely important.
Indeed, the performance of the parametric functions can change drastically between two sets of hyperparameters.
We believe that understanding the role of these hyperparameters is an exciting future work that might improve the performance of such parametric functions.

\newpage

\begin{figure*}
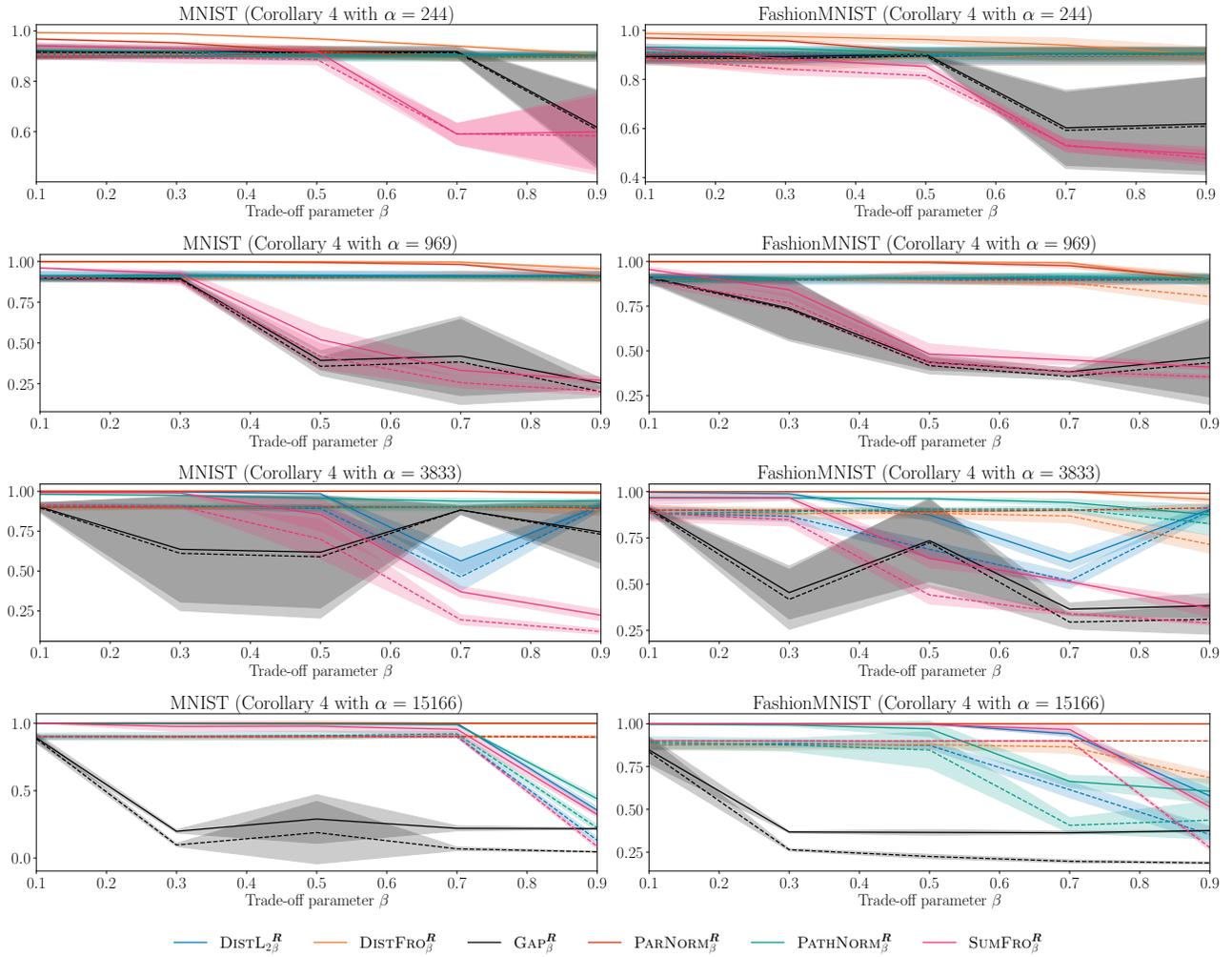

    \centering
    \includestandalone[width=1.0\linewidth]{figures/fig_5}
    \caption{
    Evolution of the bounds (the plain lines) and the test risks $\RiskLoss_{\Tcal}(h)$ (the dashed lines) \wrt the trade-off parameter $\beta$ for varying $\alpha$ and $\frac{m'}{m'+m}=0.0$.
    The lines correspond to the mean, while the bands are the standard deviations.
    }
    \label{fig:reg-risk-alpha}
\end{figure*}

\begin{figure*}
    \centering
    \includestandalone[width=1.0\linewidth]{figures/fig_5_prior}
    \caption{
    Evolution of the bounds (the plain lines) and the test risks $\RiskLoss_{\Tcal}(h)$ (the dashed lines) \wrt the trade-off parameter $\beta$ for varying $\alpha$ and $\frac{m'}{m'+m}=0.5$.
    The lines correspond to the mean, while the bands are the standard deviations.
    }
    \label{fig:reg-risk-alpha-prior}
\end{figure*}

\begin{figure}
\centering
\includegraphics[width=1.0\linewidth]{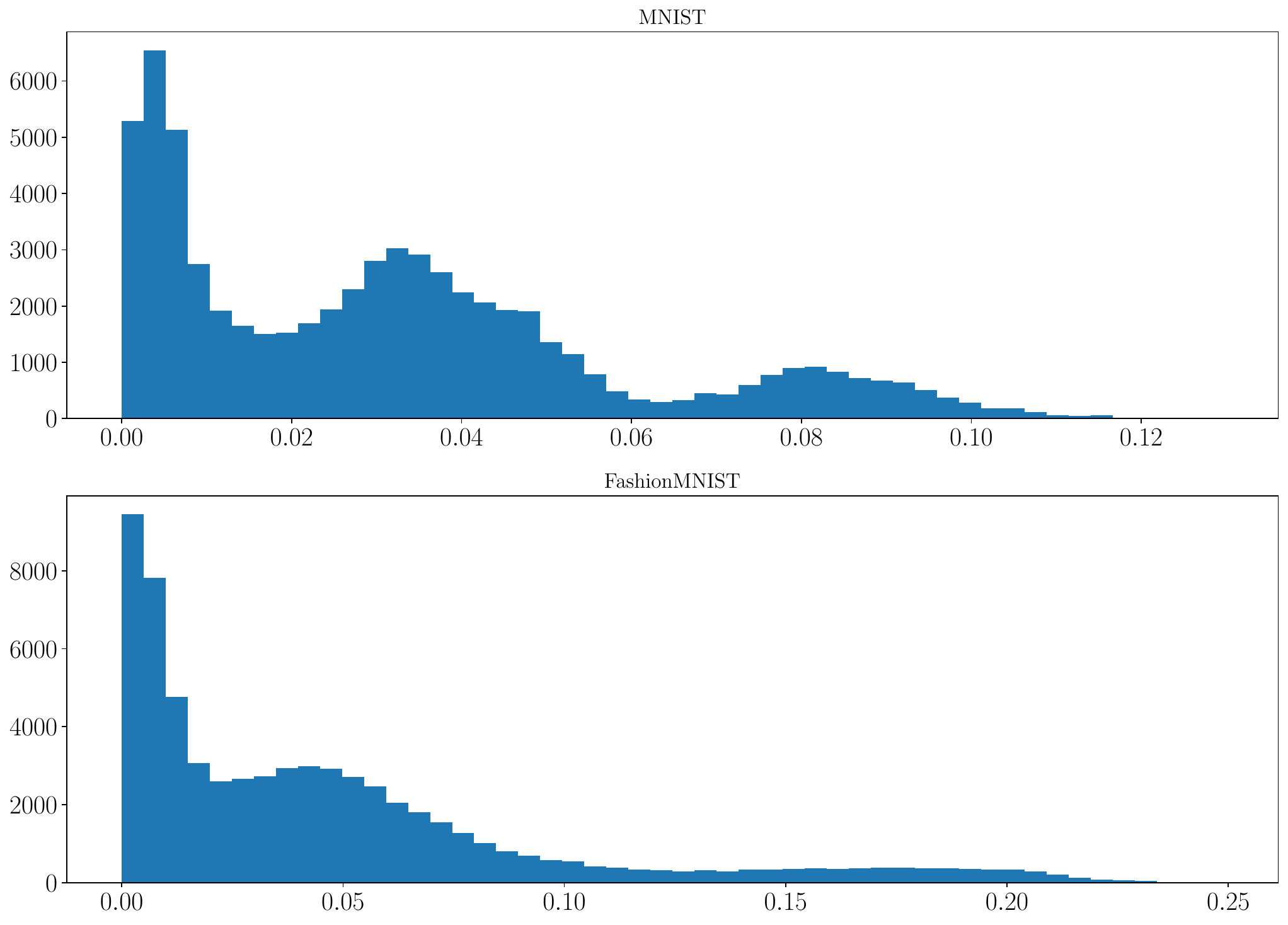}
\caption{
Histograms of the generalization gaps associated with the neural networks in the dataset to learn $\neural$. 
}
\label{fig:hist}
\end{figure}

\begin{figure}
\centering
\includegraphics[width=0.9\linewidth]{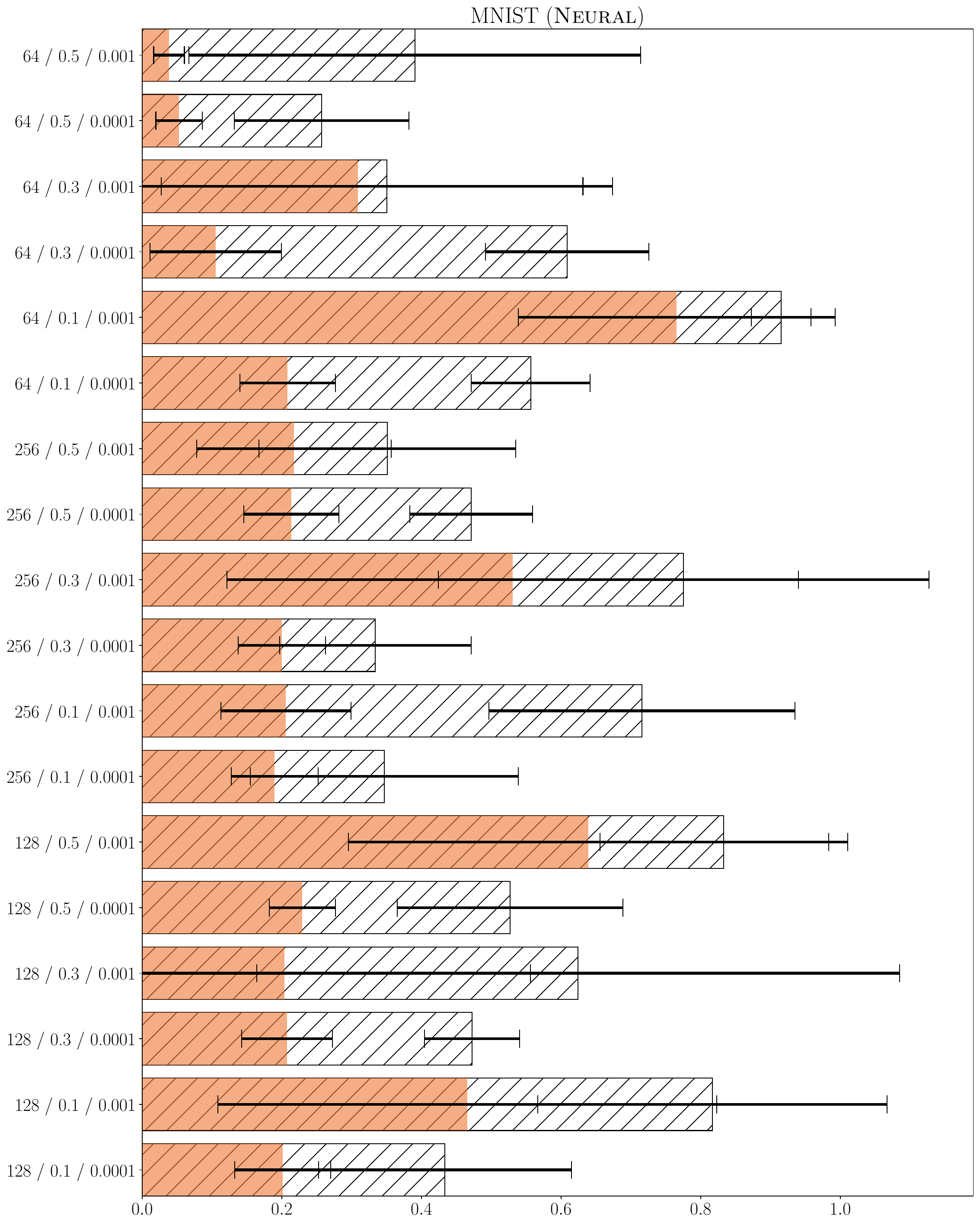}
\caption{
Bar plot of the bound value associated with \Cref{corollary:disintegrated-comp-unif} and MNIST for the parametric function $\neural$ learned with different hyperparameters.
On the y-axis, the bar labels ``A / B / C'' represent the three hyperparameters that vary: ``A'' is the batch size, ``B'' is the size of the validation set compared to the original dataset (of neural networks), and ``C'' is the learning rate of the Adam optimizer.
The mean bound values of the sampled hypotheses $h\sim\AQ$ are shown with the hatched bars, and the mean test risks $\RiskLoss_{\Tcal}(h)$ are plotted in the colored bars.
Moreover, the standard deviations are plotted in black.
}
\label{fig:neural-mnist}
\end{figure}

\begin{figure}
\centering
\includegraphics[width=0.9\linewidth]{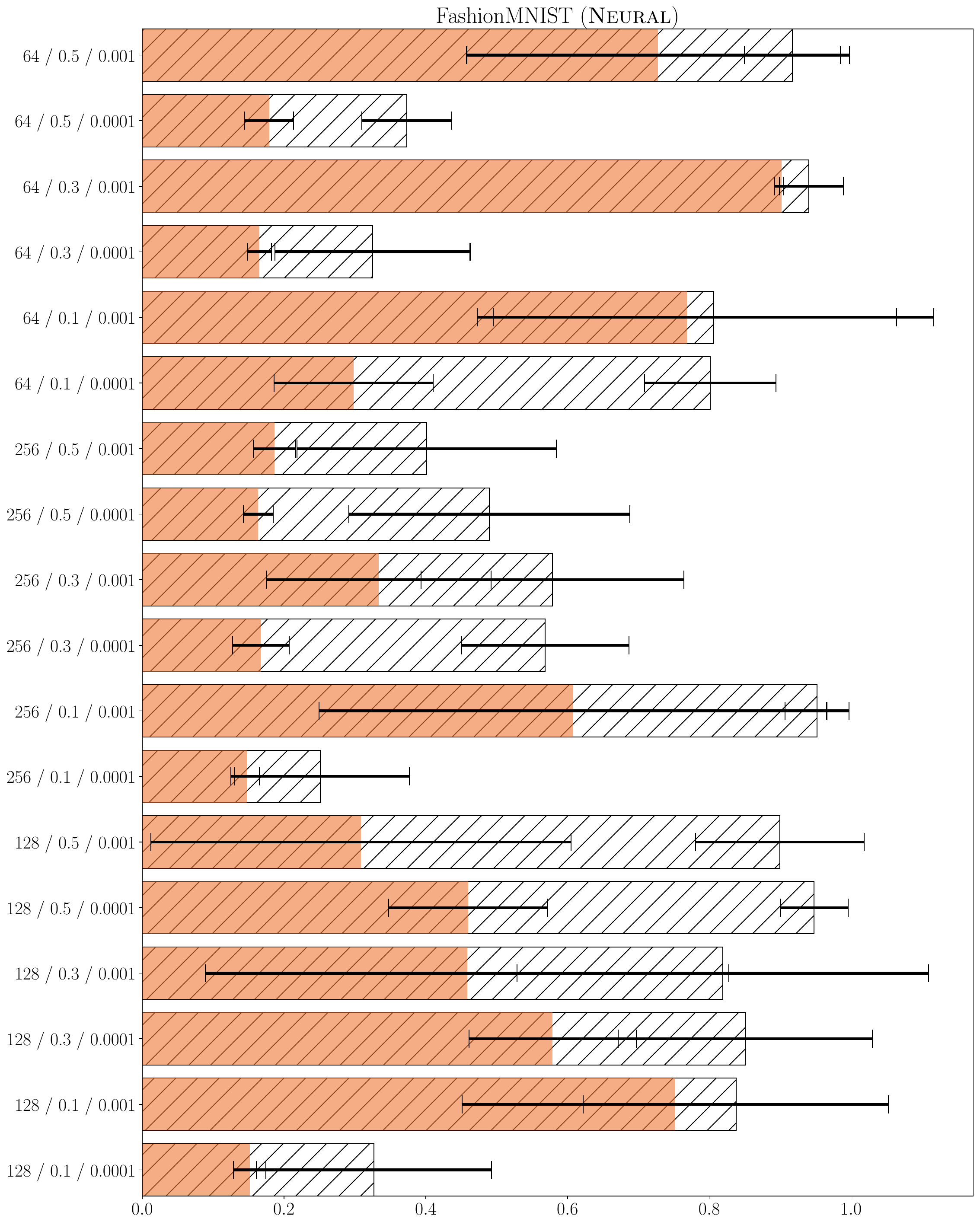}
\caption{\label{fig:neural-fashion}
Bar plot of the bound value associated with \Cref{corollary:disintegrated-comp-unif} and FashionMNIST for the parametric function $\neural$ learned with different hyperparameters.
On the y-axis, the bar labels ``A / B / C'' represent the three hyperparameters that vary: ``A'' is the batch size, ``B'' is the size of the validation set compared to the original dataset (of neural networks), and ``C'' is the learning rate of the Adam optimizer.
The mean bound values of the sampled hypotheses $h\sim\AQ$ are shown with the hatched bars, and the mean test risks $\RiskLoss_{\Tcal}(h)$ are plotted in the colored bars.
Moreover, the standard deviations are plotted in black.
}
\end{figure}

\begin{figure}
\centering
\includegraphics[width=0.9\linewidth]{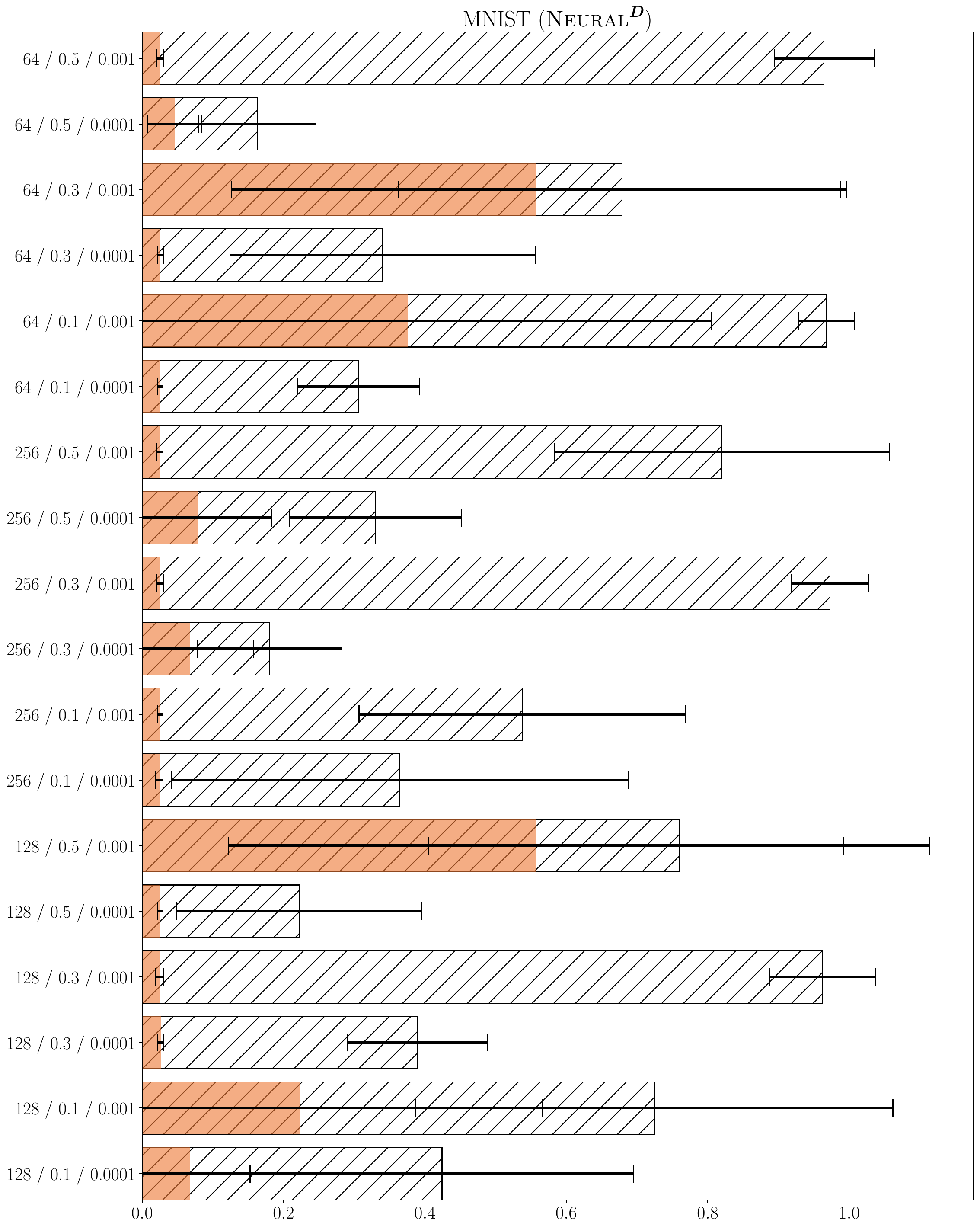}
\caption{
Bar plot of the bound value associated with \Cref{corollary:disintegrated-comp-unif} and MNIST for the parametric function $\distneural$ learned with different hyperparameters.
On the y-axis, the bar labels ``A / B / C'' represent the three hyperparameters that vary: ``A'' is the batch size, ``B'' is the size of the validation set compared to the original dataset (of neural networks), and ``C'' is the learning rate of the Adam optimizer.
The mean bound values of the sampled hypotheses $h\sim\AQ$ are shown with the hatched bars, and the mean test risks $\RiskLoss_{\Tcal}(h)$ are plotted in the colored bars.
Moreover, the standard deviations are plotted in black.
}
\label{fig:dist-neural-mnist}
\end{figure}

\begin{figure}
\includegraphics[width=0.9\linewidth]{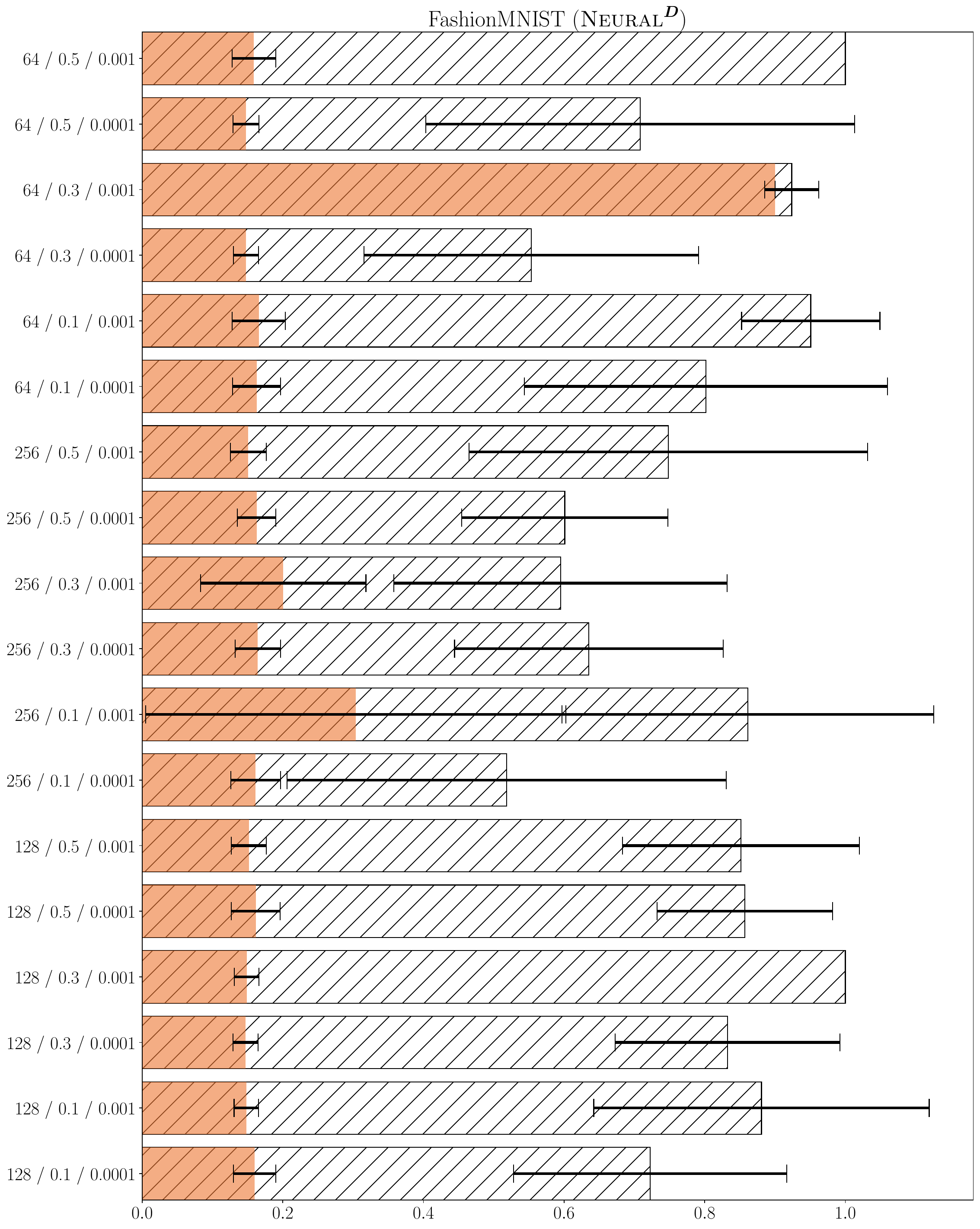}
\caption{
Bar plot of the bound value associated with \Cref{corollary:disintegrated-comp-unif} and FashionMNIST for the parametric function $\distneural$ learned with different hyperparameters.
On the y-axis, the bar labels ``A / B / C'' represent the three hyperparameters that vary: ``A'' is the batch size, ``B'' is the size of the validation set compared to the original dataset (of neural networks), and ``C'' is the learning rate of the Adam optimizer.
The mean bound values of the sampled hypotheses $h\sim\AQ$ are shown with the hatched bars, and the mean test risks $\RiskLoss_{\Tcal}(h)$ are plotted in the colored bars.
Moreover, the standard deviations are plotted in black.
}
\label{fig:dist-neural-fashion}
\end{figure}

\end{document}


\begin{tikzpicture}

    \node[] at (0.0,0.0) {\includegraphics[scale=1.0]{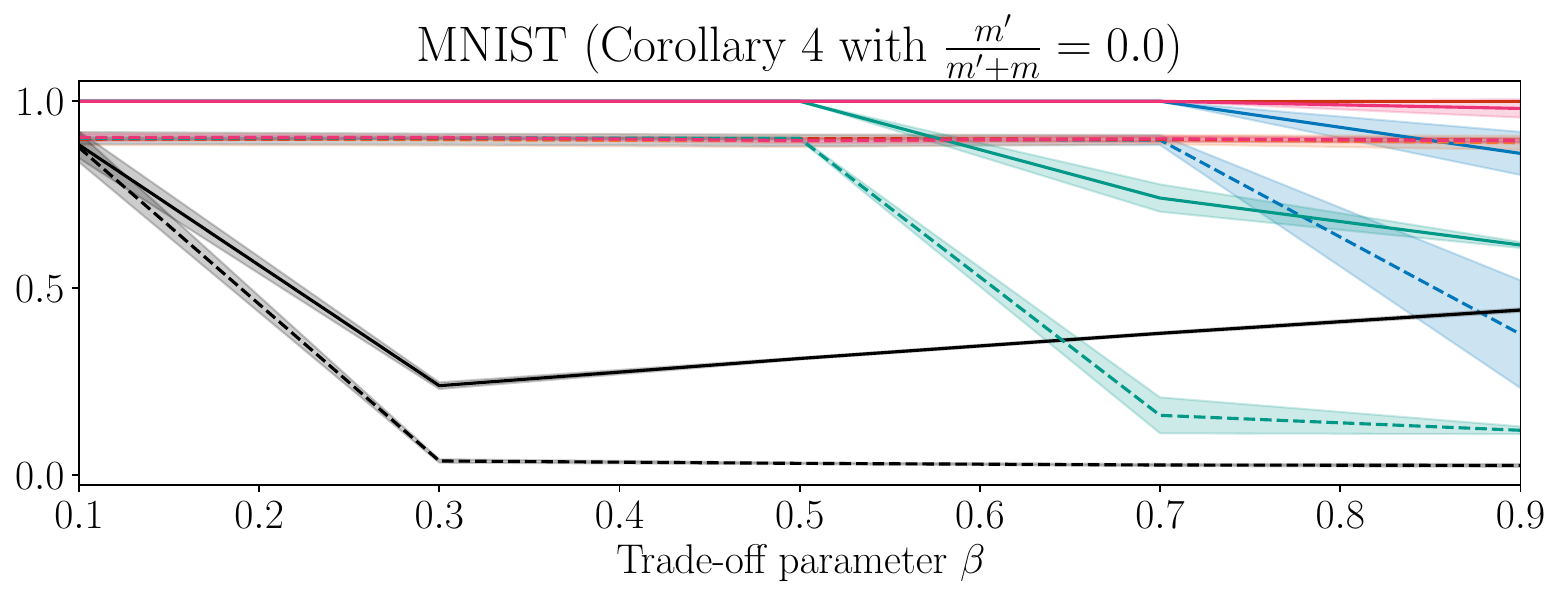}};
    \node[] at (26,0.0) {\includegraphics[scale=1.0]{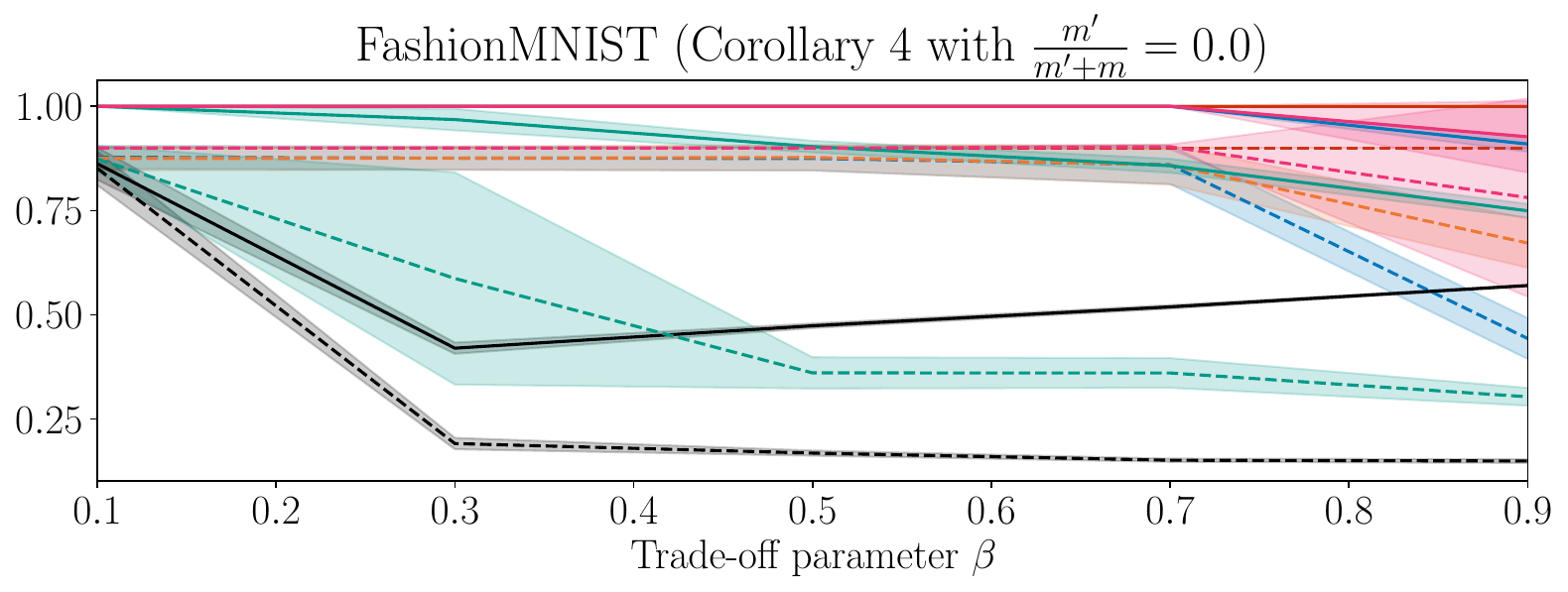}};
    \node[] at (13.0,-5.0) {\includegraphics[scale=1.0]{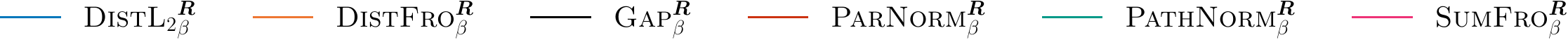}};
\end{tikzpicture}